%% file: main.tex
\documentclass{article}

\usepackage{microtype}
\usepackage{graphicx}
\usepackage{wrapfig}
\graphicspath{{figures/}}
\usepackage{caption}
\usepackage{booktabs} 
\usepackage[utf8]{inputenc}
\usepackage[hyperfootnotes=false]{hyperref}
\usepackage{xcolor}
\usepackage{tikz}
\usepackage{fancyvrb}
\usepackage{fvextra}
\usepackage{multirow}
\usepackage{pifont}
\usepackage{float}
\usepackage[skins,breakable]{tcolorbox}
\usepackage{capt-of}

\usepackage[preprint]{neurips_2026}

\usepackage{amsmath}
\usepackage{amssymb}
\usepackage{mathtools}
\usepackage{amsthm}
\usepackage{algorithm}
\usepackage{algorithmic}

\usepackage[capitalize,noabbrev]{cleveref}
\usepackage{placeins}
\usetikzlibrary{arrows.meta}

\theoremstyle{plain}
\newtheorem{theorem}{Theorem}[section]
\newtheorem{proposition}[theorem]{Proposition}
\newtheorem{lemma}[theorem]{Lemma}
\newtheorem{corollary}[theorem]{Corollary}
\theoremstyle{definition}

\theoremstyle{remark}
\newtheorem{remark}[theorem]{Remark}

\DeclareRobustCommand{\capo}{\textsc{Capo}}
\DeclareRobustCommand{\dcapo}{\textsc{DCAPO}}
\DeclareRobustCommand{\capoea}{\capo{}(EA)}
\DeclareRobustCommand{\caposgd}{\capo{}}
\newcommand{\agentgrpoloose}{Agent-GRPO (loose $\lambda$)}
\newcommand{\agentgrpostrict}{Agent-GRPO (strict $\lambda$)}
\newcommand{\E}{\mathbb{E}}

\newcommand{\sametextbf}[1]{\textbf{#1}}

\tcbset{
  promptblockstyle/.style={
    breakable,
    enhanced,
    boxrule=0pt,
    frame hidden,
    colback=white,
    arc=0pt,
    outer arc=0pt,
    boxsep=0pt,
    left=0pt, right=0pt, top=0pt, bottom=0pt,
    before skip=1.2em,
    after skip=1.2em,
    pad before break=2pt,
    pad after break=2pt,
    overlay unbroken and first={%
      \draw[line width=1.0pt] (frame.north west) -- (frame.north east);},
    overlay unbroken and last={%
      \draw[line width=1.0pt] (frame.south west) -- (frame.south east);},
  }
}

\title{\capo{}: Constraint-Aware Prompt Optimization for LLM Agents}

\author{%
  Victor Ye Dong\textsuperscript{1},
  Reid Pryzant\textsuperscript{2},
  Yi Liu\textsuperscript{1},
  Jian Jiao\textsuperscript{1}%
  \\[0.4em]
  \textsuperscript{1}Microsoft
  \qquad
  \textsuperscript{2}Independent Researcher%
  \\[0.3em]
  \texttt{victordong@microsoft.com}%
}
\date{\today}

\begin{document}
\raggedbottom
\maketitle

\begin{abstract}
Large language models (LLMs) are increasingly deployed as agents that rely on
system prompts to use tools and complete tasks. Such deployments impose distinct
operational requirements, including appropriate tool use, concise prompts and
solution paths, and compliance with safety and formatting policies. For many
practitioners, however, assembling domain-specific supervised data to post-train
models to meet these requirements is infeasible. We introduce \capo{}
(Constraint-Aware Prompt Optimization), a primal--dual method that combines
pool-based rewrites with adaptive constraint weighting to optimize system
prompts under explicit operational constraints. Across agentic benchmarks,
\capo{} more reliably reaches empirically feasible operating points while
improving task performance. \capo{} also generalizes beyond agentic settings,
achieving strong results on assistant-style evaluations with output-format and
safety/privacy constraints.
We further introduce \dcapo{} (Dynamically Trained \capo{}), which trains a
feedback- and dual-conditioned rewriter with pool-based GRPO while keeping the task agent frozen. Across task agents of
different sizes, \dcapo{} produces a feasible prompt in every evaluated domain
and matches or improves the task accuracy achieved by the evaluated baselines.
A surrogate analysis characterizes how finite-pool and discrete-rewrite errors
enter the inexact primal--dual procedure.
\end{abstract}

\section{Introduction}
\label{sec:intro}

An LLM agent can solve its assigned task and still be unsuitable for
deployment. It may invoke too many tools, escalate requests unnecessarily,
produce an overlong system prompt, or violate safety and formatting policies.
These requirements define separate operating thresholds, not one
undifferentiated notion of quality~\citep{cuiORBenchOverRefusalBenchmark2025,
Barres_Dong_Ray_Si_Narasimhan_2025}. A useful optimizer must therefore improve
task performance \emph{within} the region in which every operational budget is
satisfied.

Most automatic prompt optimizers do not let users enforce separate thresholds.
A fixed
weighted score assigns each requirement a coefficient before search, although
the binding constraint can change across models, domains, and optimization
rounds. Pareto ranking avoids choosing coefficients, but a Pareto-optimal prompt
can still violate every deployment threshold. Post-training can help the task
model internalize some requirements, but doing so requires access to model
weights and suitable training data. By contrast, prompt optimization applies
to frozen and API-only models without changing their weights. It does not
replace hard safeguards for irreversible or high-stakes actions.

We formulate system-prompt optimization as an explicit threshold-constrained
problem and solve it with feedback-driven primal--dual search. Each constraint has
a nonnegative multiplier that is updated from its signed empirical residual.
Violations increase the corresponding weight in later ranking and rewriting;
slack decreases it. A
prompt pool preserves strong candidates and allows successful rewrites to
become parents in later rounds. Figure~\ref{fig:capo_framework} illustrates how agent
behavior drives rewriting, dual updates, and pool retention in one round.

The framework has two forms. \capo{} uses a frozen LLM as its rewriter and
stores optimization state in the prompt pool and dual variables. \dcapo{}
learns a feedback- and dual-conditioned rewrite policy from \capo{} edits and
online behavior, while the deployed task agent remains frozen. Our
contributions are:
\begin{itemize}
  \item \textbf{Constraint-aware prompt optimization.} We develop a
  pool-based primal--dual procedure that uses signed residuals to adapt the
  weight of each constraint. We also connect its discrete rewrites to an
  inexact surrogate bound
  (§\ref{sec:problem}--§\ref{sec:theory}).
  \item \textbf{Consistent empirical feasibility.} A \capo{} variant reaches
  all thresholds fixed before search for all six combinations of three
  \textsc{tau2-bench} domains and two task models; every fixed-score or Pareto
  baseline does so for at most one combination. Chatbot and privacy-delegation
  studies show that the formulation extends beyond
  tool-using agents (§\ref{sec:experiments}).
  \item \textbf{A learned rewriter.} \dcapo{} trains the rewriter with the proposed algorithm pool-based GRPO. Across three agent
  domains, it produces feasible prompts, and controlled ablations show that
  online behavioral feedback supplies information that edit imitation alone
  does not (§\ref{sec:trainable_rewriter}).
\end{itemize}

\begin{figure*}[t]
\centering
\includegraphics[width=\textwidth]{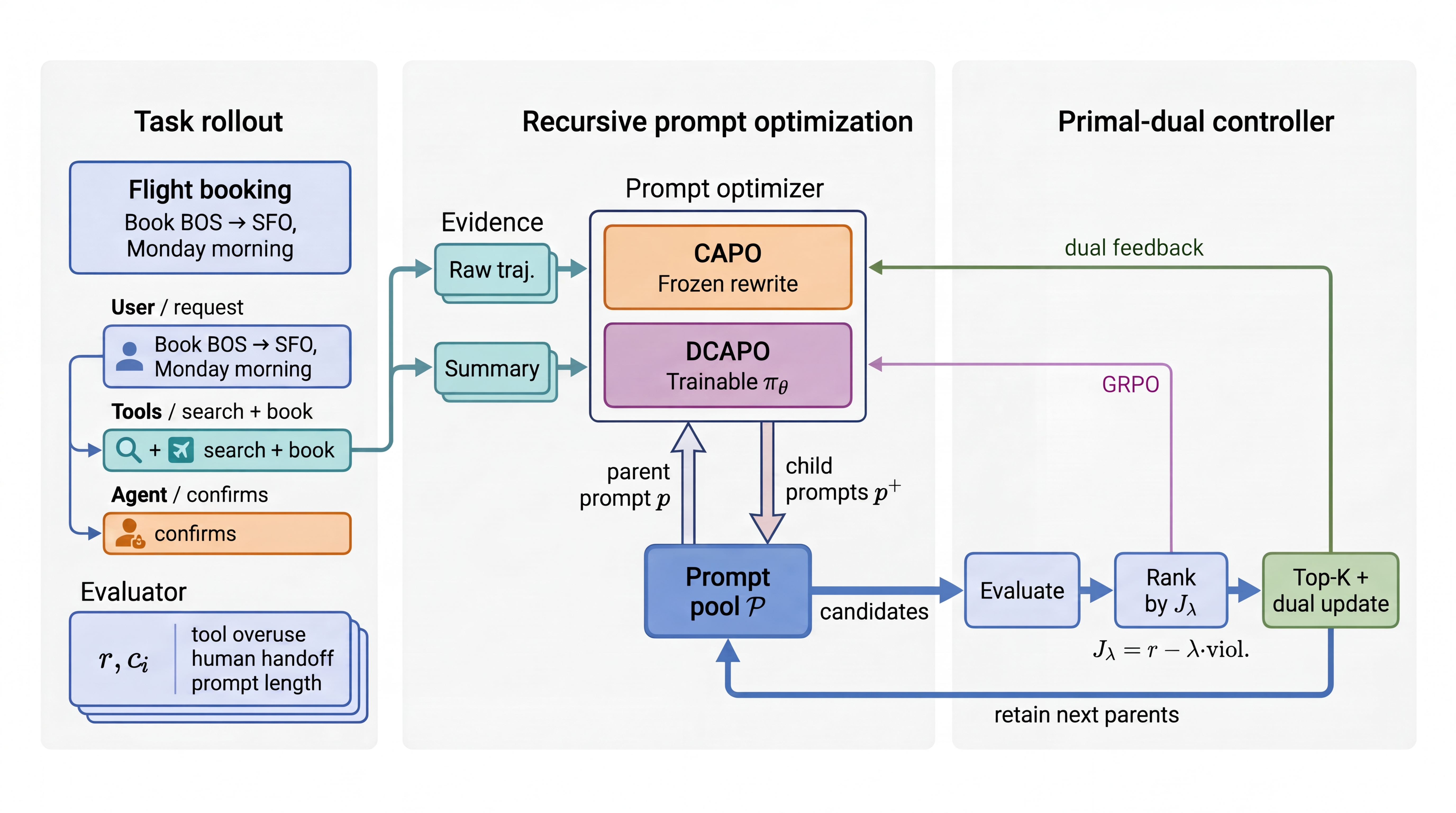}
\caption{\textbf{One round of constraint-aware prompt evolution.}
The frozen task agent executes a retained prompt, and its behavior supplies
reward, constraint costs, and rewrite evidence. \capo{} uses a frozen LLM as
the rewriter, whereas \dcapo{} uses a trainable rewriter. Both methods evaluate
children, update the
dual variables from signed threshold residuals, and retain the next prompt
pool. Only \dcapo{} updates the rewriter through GRPO.}
\label{fig:capo_framework}
\end{figure*}

\section{Related Work}
\label{sec:related}

\paragraph{Automatic prompt optimization.}
APO optimizes prompts through textual gradients and OPRO through an optimizer LLM
\citep{pryzantAutomaticPromptOptimization2023,yangOPROLargeLanguage2024}, while
InstructZero combines soft prompts with Bayesian optimization
\citep{chenInstructZeroEfficientInstruction2023}. RLPrompt learns discrete
prompts from scalar rewards \citep{dengRLPromptOptimizingDiscrete2022}, whereas
StablePrompt trains a prompt-generation policy with APPO
\citep{kwonEtAlStablePrompt2024}. GEPA, MOPO,
and EvoPrompt use reflection, Pareto selection, or evolutionary operators
\citep{agrawalGEPAReflectivePrompt2025,resendiz2025mopo,
guoConnectingLargeLanguage2023}. CAPO instead adapts one multiplier per explicit
threshold for candidate ranking and rewrite feedback. DCAPO trains a
trajectory- and dual-conditioned rewriter in an evolving pool while keeping the
task model frozen.

\paragraph{Context and skill evolution.}
ACE evolves reusable playbooks for frozen models
\citep{zhangAgenticContextEngineering2025}; EvoSkill and SkillGrad revise
structured skills from execution failures
\citep{alzubi2026evoskillautomatedskilldiscovery,
wang2026skillgradoptimizingagentskills}. INSPO, SkillRL, SAGE, and ReSkill
jointly evolve instructions or skills with an RL-trained task policy
\citep{zhou2026agenticpolicyoptimizationinstructionpolicy,
xia2026skillrlevolvingagentsrecursive,
wang2026reinforcementlearningselfimprovingagent,
he2026reskillreconcilingskillcreation}. CAPO instead optimizes one deployable
system prompt for a frozen task model under workload-level thresholds; DCAPO
trains only the rewriter using adaptive Lagrangian scores.

\paragraph{Constrained generation and learning.}
Constrained decoding enforces lexical, logical, or syntactic restrictions
during generation~\citep{hokamp-liu-2017-lexically,post-vilar-2018-fast,
lu-etal-2021-neurologic,scholak2021picardparsingincrementallyconstrained}, and
controllable generation steers output attributes~\citep{qianControllableNaturalLanguage2022,
chai2022fastimprovingcontrollabilitytext,huang2025jamcontrollableresponsibletext}.
CAPO instead optimizes a system prompt against workload metrics that can depend
on full agent trajectories. Its dual update follows constrained RL and CMDP methods
\citep{tesslerRewardConstrainedPolicy2018,dingNaturalPolicyGradient2020,
dingProvablyEfficientSafe2021,gattamiReinforcementLearningConstrained2021}:
evaluators supply costs, and signed residuals update constraint multipliers.

\section{Threshold-Constrained Prompt Optimization}
\label{sec:problem}

\subsection{Problem Formulation}

Let $p\in\mathcal{P}_{\mathrm{text}}$ be a system prompt and $\mathcal{M}$ a
frozen task model. On input $x$, the model produces
$y=\mathcal{M}(p,x)$. The function
$r_{\mathrm{task}}(p,x)\in[0,1]$ measures task quality, and
$c_i(p,x)\in\mathbb{R}$ measures the lower-is-better cost associated with
constraint $i$. Each cost has an explicit budget $\tau_i$. Real-valued costs
can represent both violation indicators and normalized margins, such as excess tool
use or prompt length; a negative margin denotes slack.

The task reward and constraints may use different workloads. We associate the
objective with $\mathcal{D}_{\mathrm{task}}$ and constraint $i$ with
$\mathcal{D}_i$. Higher-is-better compliance scores are converted to costs,
for example by $c_i:=1-s_i$. Define
$R(p)=\E_{x\sim\mathcal{D}_{\mathrm{task}}}[r_{\mathrm{task}}(p,x)]$ and
$C_i(p)=\E_{x\sim\mathcal{D}_i}[c_i(p,x)]$. The optimization problem is
\begin{equation}
\label{eq:constrained}
\textstyle\max_{p\in\mathcal{P}_{\mathrm{text}}}\ R(p)
\quad \text{s.t.}\quad C_i(p)\le\tau_i,\qquad i=1,\ldots,m.
\end{equation}
This separation accommodates metrics such as task accuracy, safety, and
over-refusal, which require different inputs. A
prompt is feasible when every population cost meets its threshold. In the
experiments, \emph{AllSat} denotes the corresponding empirical criterion:
every evaluation-set mean lies at or below its fixed threshold. AllSat is
therefore an empirical feasibility measure, not a finite-sample guarantee of
population feasibility. Section~\ref{sec:experiments} and Appendix
\ref{app:datasets} define every evaluator, workload, and threshold.

\subsection{Limits of Static Objectives}
\label{sec:baseline_limitations}

Prompt edits jointly affect task success, tool use, escalation, response
length, and safety. The active constraint can therefore change across domains
and during a single search trajectory. A fixed weighted score (scalarization)
assigns weights before observing this trajectory; retuning those weights often
replaces one violation with another. Pareto selection removes the coefficients
but not the threshold mismatch: an entire Pareto frontier can lie outside the admissible
region defined by Eq.~\eqref{eq:constrained}.

Agent-GRPO with a fixed penalty vector has the same control limitation. Updating
the policy changes the reward and every cost simultaneously, while the static
penalty cannot respond when another constraint becomes active. Figure
\ref{fig:constraint_control_2x3} illustrates both cases: static prompt search
and Agent-GRPO with fixed penalties improve task accuracy in some settings but fail to stay
inside the feasible region across domains. Both methods observe the constraint
metrics; what they lack is a feedback law that converts each signed residual
into an adaptive, constraint-specific weight.

\input{paper_constraint_table}

\section{\capo{}: Primal--Dual Search with a Frozen Rewriter}
\label{sec:method}

\capo{} solves Eq.~\eqref{eq:constrained} without changing the task model. Each
round alternates an approximate primal step---an LLM rewrites selected system
prompts to improve the current Lagrangian---and a dual step driven by measured
threshold residuals. The optimizer maintains only a prompt pool and one
multiplier per constraint; the output remains a text system prompt.

\subsection{Lagrangian Prompt Ranking}

For multipliers $\lambda_i\ge 0$, define
\begin{equation}
\label{eq:lagrangian}
\mathcal{J}(p,\boldsymbol\lambda)
=R(p)-\sum_{i=1}^{m}\lambda_i\bigl(C_i(p)-\tau_i\bigr),
\qquad \boldsymbol\lambda\in\mathbb{R}_{+}^{m}.
\end{equation}

The constrained problem becomes the saddle problem
$\min_{\boldsymbol\lambda\ge0}\max_p
\mathcal{J}(p,\boldsymbol\lambda)$. Under the regularity and strong-duality
conditions in Appendix~\ref{app:theory}, and when the multiplier cap contains a
dual optimum, a saddle point satisfies complementary slackness: an inactive
constraint has zero multiplier, while a positive multiplier identifies a tight
constraint. At round $t$, \capo{} ranks candidate prompts by
$\mathcal J(p,\boldsymbol\lambda_t)$. This ordering guides parent selection,
identifies the prompts used for the dual update, and determines which
prompts remain in the pool. As the multipliers change, the ranking shifts toward
prompts that better satisfy the currently active constraints.

\subsection{Approximate Primal Rewrites}\label{sec:textual-gradient-descent}

Following~\citet{pryzantAutomaticPromptOptimization2023}, a critic LLM
summarizes observed task failures and constraint violations. A frozen optimizer
LLM then rewrites the parent prompt using this evidence and the current dual
state. In a continuous prompt space, the corresponding ascent direction would
be

\begin{equation}\label{eq:lagrangian_gradient}
\nabla_p \mathcal{J}
=\nabla_p R(p)-\sum_{i=1}^{m}\lambda_i\nabla_p C_i(p).
\end{equation}

In practice, we approximate this ascent direction through iterative prompt
rewrites guided by textual feedback on task performance and constraint
violations. Across search rounds, the evolving prompt pool retains and builds
on successful rewrites. Multipliers affect search through two explicit channels.
First, $\boldsymbol\lambda$ enters $\mathcal{J}$ and therefore affects
candidate ranking, parent sampling, and beam retention. Second, the critic
groups failure evidence by constraint, and the rewrite instruction provides
the multiplier associated with each constraint. We represent the objective and
constraint weights as $(1,\lambda_1,\ldots,\lambda_m)$, where the objective
coefficient is fixed at $1$. The rewriter uses these weights to prioritize
constraints with larger multipliers. The frozen rewriter returns
$p^{+}=\mathcal{W}(p,\xi,\boldsymbol\lambda)$, where $\xi$ contains
representative execution trajectories, including tool calls and task failures,
together with evaluator outputs and constraint-specific critiques.
Section~\ref{sec:theory} states the assumptions under which the resulting
displacement in a surrogate prompt space constitutes an inexact ascent step.

\subsection{Prompt Pool Search}
Let $\mathcal{P}_t$ be the retained pool at round $t$. After scoring its prompts,
we sample $k$ parents without replacement with
$q_j\propto\exp(\gamma\widehat{\mathcal{J}}_{t,j})$, optionally after masking
all but the highest-scoring candidates. Rewrites expand the pool; the top
$k_1$ candidates under the current score survive. This selection--rewrite--
retention cycle preserves strong parents when an edit fails and allows useful
children to seed later rounds.

In the analysis, we model this search as an inexact primal oracle. Let
$d(\boldsymbol\lambda)=\max_p\mathcal{J}(p,\boldsymbol\lambda)$ and
\[
\delta_t=d(\boldsymbol\lambda_t)-
\max_{p\in\mathcal{P}_t}\mathcal{J}(p,\boldsymbol\lambda_t)\ge 0
\]
be the finite-pool gap. Section~\ref{sec:theory} shows how a discrete-rewrite
residual enters the pool-gap recursion and, through the resulting primal-oracle
gap, the outer primal--dual bound.

\subsection{Residual-Driven Dual Updates}\label{sec:dual-update}
After evaluating the expanded pool, let $\mathcal{P}_{\mathrm{best}}$ be the
top-$k_0$ candidates under the current Lagrangian. We update each multiplier
using the empirical costs of these candidates:
\begin{equation}
\label{eq:lambda_update}
\lambda_i \leftarrow \Pi_{[0,\lambda_{\max}]}\!\left[
\lambda_i + \beta\,\Bigl(\frac{1}{|\mathcal{P}_{\mathrm{best}}|}
\sum_{p_j \in \mathcal{P}_{\mathrm{best}}} \hat{\rho}_{i,j} - \tau_i\Bigr)
\right].
\end{equation}
Here $\hat{\rho}_{i,j}$ is the empirical cost of candidate $p_j$ on the batch
for constraint $i$, and $\beta$ is the dual learning rate. For binary $c_i$,
$\hat{\rho}_{i,j}$ is a violation rate; for continuous $c_i$, it is a
normalized cost or margin. Concretely,
\begin{equation}
\hat{\rho}_{i,j}=\frac{1}{|\mathcal{B}_{i,t}|}
\sum_{x\in\mathcal{B}_{i,t}}c_i(p_j,x),
\qquad \mathcal{B}_{i,t}\sim\mathcal{D}_i.
\end{equation}

This projected subgradient step increases $\lambda_i$ when the selected
candidates exceed $\tau_i$ and decreases it when they have slack
\citep{tesslerRewardConstrainedPolicy2018,dingNaturalPolicyGradient2020,
gattamiReinforcementLearningConstrained2021}. Persistent or large violations
therefore receive greater weight in later ranking and rewrite feedback.
Averaging over $\mathcal{P}_{\mathrm{best}}$ reduces sensitivity to a single
noisy candidate. Algorithm~\ref{alg:capo} gives the complete outer loop.

\section{\dcapo{}: Learning the Rewriter}
\label{sec:trainable_rewriter}

\subsection{Feedback-Aware Rewriter}

\capo{} queries a frozen optimizer LLM for every edit. Its search improves the
prompt pool, but the rewriter itself does not learn from successful edits.
\dcapo{} instead learns the depth-indexed conditional policy
\begin{equation}
\label{eq:dcapo_policy}
p^{(d)}\sim\pi_\theta\!\left(
\,\cdot\mid p^{(0)},p^{(d-1)},\xi^{(d-1)},
\boldsymbol{\lambda}_t\right),\qquad d=1,\ldots,D,
\end{equation}
where $p^{(0)}$ is the original parent prompt, $p^{(d-1)}$ is the preceding
prompt in the depth sequence, and $\xi^{(d-1)}$ contains its behavior under the
frozen task agent together with optional critic summaries. The current dual
state $\boldsymbol\lambda_t$ identifies the constraints that need greater
weight. \dcapo{} updates only $\pi_\theta$; it never applies GRPO to the
deployed task policy.

Training proceeds in two stages. First, we initialize the rewriter from a base
model and fine-tune it on successful \capo{} parent--child pairs. This
supervised stage teaches the rewriter to preserve the required system-prompt
structure while producing plausible edits. All domain-specific online runs
start from the same SFT checkpoint. We then train the rewriter with online
pool-based GRPO, which
extends GRPO \citep{shao2024deepseekmathpushinglimitsmathematical} by combining
group-relative policy updates with \capo{}'s
selection--rewrite--retention loop. In each round, the method samples parents
from the evolving prompt pool, evaluates groups of rewritten children using the
frozen target agent, updates the rewriter from their adaptive Lagrangian scores,
and admits high-scoring children back into the pool.

\subsection{Pool-Based GRPO}

At round $t$, let $\mathcal{P}_t$ be the prompt pool and define the empirical
score
\[
\widehat{\mathcal{J}}_t(p)=
\hat r(p)-\boldsymbol{\lambda}_t^\top
(\hat{\boldsymbol\rho}(p)-\boldsymbol\tau).
\]
Let $B$, $G$, and $D$ denote the parent-batch size, sibling count per parent
and depth, and rewrite depth; let $k_0$ and $k_1$ denote the dual-update and
retained-pool sizes. We index parents by $b=1,\ldots,B$, siblings by
$g=1,\ldots,G$, and depths by $d=1,\ldots,D$. We sample the indexed parent batch
$\mathcal B_t=\{p_t^{(b)}\}_{b=1}^{B}$ off-policy from $\mathcal P_t$. For
each parent--sibling pair $(b,g)$, let $p_{b,g}^{(0)}=p_t^{(b)}$ and construct
$\xi_{b,g}^{(0)}$ from a task-agent rollout. In the parent-anchored recurrence,
each depth rewrites the original parent using behavioral evidence from the
preceding depth:
\begin{equation}
\label{eq:dcapo_rewrite}
p_{b,g}^{(d)}\sim\pi_\theta(\cdot\mid
p_t^{(b)},p_{b,g}^{(d-1)},\xi_{b,g}^{(d-1)},
\boldsymbol\lambda_t),\qquad d=1,\ldots,D.
\end{equation}
The frozen task agent evaluates each child prompt. For each parent $p_t^{(b)}$,
we normalize its $GD$ child scores to obtain $A_{t,b,g,d}$ and update
$\pi_\theta$ with the clipped GRPO objective. We admit the top-$k_1$ children
under $\widehat{\mathcal J}_t$ to $\mathcal C_t$ and set
$\widetilde{\mathcal P}_t=\mathcal P_t\cup\mathcal C_t$. As in
\capo{}, the top-$k_0$ prompts define $\mathcal P_{\mathrm{best}}$ for
Eq.~\eqref{eq:lambda_update}, and the top-$k_1$ prompts form
$\mathcal P_{t+1}$. Setting $D=1$ yields one behavior-conditioned rewrite; with
$D=2$, the second rewrite conditions on the first child and its task-agent
trajectory.

Algorithm~\ref{alg:dcapo} separates the roles of the components. The frozen
task model generates behavior, the rewriter receives the policy update, and
the prompt pool and dual variables store the constrained-search state.

\par\medskip
\noindent
\begin{minipage}[t]{0.49\linewidth}
\vspace{0pt}
\begin{algorithm}[H]
\caption{\capo{}: frozen rewriter}
\label{alg:capo}
\scriptsize
\setlength{\algorithmicindent}{0.8em}
\begin{algorithmic}[1]
\REQUIRE $p_0,\mathcal M,\mathcal W,E,\boldsymbol\tau,T,k,\gamma,
k_0,k_1,\beta$
\STATE $\mathcal P_0\gets\{p_0\}$;
$\boldsymbol\lambda_0\gets\boldsymbol 1$
\FOR{$t=0,\ldots,T-1$}
  \STATE Evaluate $\mathcal P_t$; compute
  $\widehat{\mathcal J}_t$
  \STATE Sample $k$ parents with
  $q_j\propto e^{\gamma\widehat{\mathcal J}_{t,j}}$
  \STATE $\widetilde{\mathcal P}_t\gets\mathcal P_t$
  \FOR{each sampled $p$}
    \STATE $\xi\gets\textsc{Feedback}(p,E,\boldsymbol\lambda_t)$
    \STATE Evaluate $p^+\gets
    \mathcal W(p,\xi,\boldsymbol\lambda_t)$; add to
    $\widetilde{\mathcal P}_t$
  \ENDFOR
  \STATE $\mathcal P_{\mathrm{best}}\gets
  \operatorname{Top}_{k_0}(\widetilde{\mathcal P}_t,
  \widehat{\mathcal J}_t)$
  \STATE Update $\boldsymbol\lambda_{t+1}$ by Eq.~\eqref{eq:lambda_update}
  \STATE $\mathcal P_{t+1}\gets
  \operatorname{Top}_{k_1}(\widetilde{\mathcal P}_t,
  \widehat{\mathcal J}_t)$
\ENDFOR
\RETURN $\arg\max_{p\in\mathcal P_T}
\widehat{\mathcal J}(p,\boldsymbol\lambda_T)$
\end{algorithmic}
\end{algorithm}
\end{minipage}\hfill
\begin{minipage}[t]{0.49\linewidth}
\vspace{0pt}
\begin{algorithm}[H]
\caption{\dcapo{}: learned rewriter}
\label{alg:dcapo}
\scriptsize
\setlength{\algorithmicindent}{0.8em}
\begin{algorithmic}[1]
\REQUIRE $\mathcal D,p_0,\mathcal M,E,\boldsymbol\tau,T,B,G,D,k_0,k_1,
\beta$
\STATE SFT-initialize $\pi_\theta$ on $\mathcal D$;
$\mathcal P_0\gets\{p_0\}$;
$\boldsymbol\lambda_0\gets\boldsymbol 1$
\FOR{$t=0,\ldots,T-1$}
  \STATE Sample $\mathcal B_t=\{p_t^{(b)}\}_{b=1}^{B}
  \subseteq\mathcal P_t$
  \FOR{$b=1,\ldots,B$ and $g=1,\ldots,G$}
    \STATE Evaluate $p_t^{(b)}$; construct $\xi_{b,g}^{(0)}$
    \FOR{$d=1,\ldots,D$}
      \STATE Sample child prompt $p_{b,g}^{(d)}$ using
      Eq.~\eqref{eq:dcapo_rewrite}
      \STATE Evaluate $p_{b,g}^{(d)}$; construct
      $\xi_{b,g}^{(d)}$
    \ENDFOR
  \ENDFOR
  \STATE Compute $A_{t,b,g,d}$ and update
  $\pi_\theta$
  \STATE $\mathcal C_t\gets
  \operatorname{Top}_{k}(\{p_{b,g}^{(d)}\}_{b,g,d},
  \widehat{\mathcal J}_t)$
  \STATE $\widetilde{\mathcal P}_t\gets
  \mathcal P_t\cup\mathcal C_t$,
  $\mathcal P_{\mathrm{best}}\gets
  \operatorname{Top}_{k_0}(\widetilde{\mathcal P}_t,
  \widehat{\mathcal J}_t)$
  \STATE Update $\boldsymbol\lambda_{t+1}$ by
  Eq.~\eqref{eq:lambda_update}, $\mathcal P_{t+1}\gets
  \operatorname{Top}_{k_1}(\widetilde{\mathcal P}_t,
  \widehat{\mathcal J}_t)$
\ENDFOR
\RETURN $\pi_\theta$ and
$\arg\max_{p\in\mathcal P_T}
\widehat{\mathcal J}(p,\boldsymbol\lambda_T)$
\end{algorithmic}
\end{algorithm}
\end{minipage}
\par\medskip

\section{Surrogate Analysis of Discrete Rewrites}
\label{sec:theory}

Both algorithms follow the same update structure: approximately maximize the
current Lagrangian over prompts, then take a projected dual step. We conduct
the analysis in a continuous surrogate space for text rewrites and use standard
inexact primal--dual analysis to quantify the approximate primal response
\citep{nedicOzdaglarApproximatePrimal2009}. Let
$\phi:\mathcal{P}_{\mathrm{text}}\to\Theta\subset\mathbb{R}^d$ be an analytical
surrogate map, analogous to continuous prompt relaxations
\citep{jangCategoricalReparameterizationGumbel2017,
maddisonConcreteDistributionContinuous2017,
liPrefixTuningOptimizingContinuous2021,liuGPTUnderstandsToo2021}.
A rewrite from text prompt $s_t$ to $s_t^+$ induces
$d_t=\phi(s_t^+)-\phi(s_t)$. Appendix
\ref{app:theory_bridge_discrete} assumes that this displacement is
gradient-related in expectation and has bounded second moment; smoothness and
strong concavity then yield an inexact ascent residual
$\eta_t=\epsilon_t+\tfrac{L}{2}\sigma_d^2$. Let $\Delta_t$ be the best-prompt
gap in the pool, $p_t$ the probability of rewriting that prompt, and
$\varepsilon_t^{\mathrm{pr}}$ the selected prompt's Lagrangian gap.

\begin{theorem}[Rewrite-oracle reduction]
\label{thm:rewrite_oracle_reduction}
Under Assumptions A1--A2, A4--A7, and B1--B2 in Appendix~\ref{app:theory}, let
$\rho=2\mu(\kappa-L\nu/2)\in(0,1]$. The rewrite-and-retain loop satisfies
\[
\E[\varepsilon_t^{\mathrm{pr}}]
\le \E[(1-\rho p_t)\Delta_t+p_t\eta_t+\pi_tB_{\max}]+\zeta_t,
\]
where $\pi_t$ and $\zeta_t$ measure pruning and noisy extraction. Thus rewrite
error enters the pool recursion additively. For
$S_T=\sum_{t=1}^{T}\beta_t$, the primal--dual bound with projected dual updates is
\[
\E[\mathcal{G}(\bar\theta_T,\bar\lambda_T)] \;\le\; \frac{D_\Lambda^2}{2 S_T} \;+\; \frac{G_g^2 \sum_{t=1}^{T}\beta_t^2}{2 S_T} \;+\; \frac{\sum_{t=1}^{T}\beta_t\,\E[\varepsilon_t^{\mathrm{pr}}]}{S_T}.
\]
Here $G_g$ bounds the constraint-vector norm and $D_\Lambda$ is the diameter of
the projected dual set; Appendix~\ref{app:theory_setup} gives both definitions.
If the two stepsize terms vanish and $\E[\varepsilon_t^{\mathrm{pr}}]\to0$, then
$\E[\mathcal{G}(\bar\theta_T,\bar\lambda_T)]\to0$.
\end{theorem}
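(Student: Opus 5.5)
The proof splits into a primal part and a dual part, glued together by the quantity $\varepsilon_t^{\mathrm{pr}}$.

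\emph{Primal part: one-step bound for a single rewrite.} I would first work in the surrogate space $\Theta$. Fix $\lambda_t$ and write $f_t(\theta)=\mathcal{J}(\theta,\lambda_t)$. Assume $f_t$ is $L$-smooth and $\mu$-strongly concave in the Polyak--\L{}ojasiewicz sense (A-assumptions). The displacement $d_t$ is gradient-related in expectation: $\E[\langle\nabla f_t,d_t\rangle]\ge\kappa\|\nabla f_t\|^2-\epsilon_t$, with $\E\|d_t\|^2\le\nu\|\nabla f_t\|^2+\sigma_d^2$ (B1--B2). Smoothness gives the ascent inequality
\[
f_t(\theta+d_t)\ge f_t(\theta)+\langle\nabla f_t,d_t\rangle-\tfrac{L}{2}\|d_t\|^2 .
\]
Taking expectations yields an improvement of at least $(\kappa-L\nu/2)\|\nabla f_t\|^2-\eta_t$. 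The PL inequality $\|\nabla f_t\|^2\ge 2\mu\,(d(\lambda_t)-f_t(\theta))$ then turns this into
\[
\E[\text{gap after rewrite}]\le(1-\rho)\,\text{gap}+\eta_t ,
\]
with $\rho=2\mu(\kappa-L\nu/2)$. The range $\rho\in(0,1]$ is exactly what keeps the contraction factor nonnegative.

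\emph{Primal part: pool recursion.} Next I condition on whether the best pool prompt is selected for rewriting, which happens with probability $p_t$. If it is, retention keeps at least the better of parent and child, so the gap is at most the contracted value. If it is not, retention preserves the best prompt unless it is pruned. Pruning has probability $\pi_t$ and costs at most $B_{\max}$, a bound on the range of $\mathcal{J}$. Mixing the two cases gives $(1-\rho p_t)\Delta_t+p_t\eta_t+\pi_tB_{\max}$. Finally, the prompt actually extracted and used is chosen by noisy empirical scores $\widehat{\mathcal J}$. Its Lagrangian gap therefore exceeds the pool's best gap by at most an estimation term $\zeta_t$, obtained by bounding twice the sup-deviation of $\widehat{\mathcal J}$ from $\mathcal J$. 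This proves the first display.

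\emph{Dual part.} For the outer bound I would run the standard inexact projected-subgradient argument of Nedi\'c--Ozdaglar. Let $g_t=C(\theta_t)-\tau$, so that $\lambda_{t+1}=\Pi_\Lambda[\lambda_t+\beta_t g_t]$. Nonexpansiveness of the projection gives, for any $\lambda\in\Lambda$,
\[
\|\lambda_{t+1}-\lambda\|^2\le\|\lambda_t-\lambda\|^2+2\beta_t\langle g_t,\lambda_t-\lambda\rangle+\beta_t^2G_g^2 .
\]
By linearity of $\mathcal{J}$ in $\lambda$, $\langle g_t,\lambda-\lambda_t\rangle=\mathcal{J}(\theta_t,\lambda_t)-\mathcal{J}(\theta_t,\lambda)$. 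For the other player I use the approximate primal optimality $\mathcal{J}(\theta,\lambda_t)\le\mathcal{J}(\theta_t,\lambda_t)+\varepsilon_t^{\mathrm{pr}}$ for every $\theta$. I then combine the two inequalities, sum with weights $\beta_t$, telescope, and bound $\|\lambda_1-\lambda\|\le D_\Lambda$. Dividing by $S_T$ and using concavity in $\theta$ (Jensen, for $\bar\theta_T$) and linearity in $\lambda$ (for $\bar\lambda_T$) bounds the saddle gap $\mathcal{G}$ of the weighted averages. This requires taking the supremum over $\theta$ and $\lambda$ after averaging, not before, and then taking expectations. The limit statement follows by a Ces\`aro argument. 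The first two terms vanish by hypothesis. The third is a $\beta$-weighted average of a null sequence and also vanishes, provided $S_T\to\infty$, which the vanishing of the first term already forces.

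\emph{Main obstacle.} I expect the hardest step to be the pool-recursion conditioning. The events "best prompt selected", "pruned", and "rewrite succeeded" are dependent, and the index of the best prompt changes over time. I would handle this by conditioning on the filtration at round $t$. I would also upper-bound every bad event by $B_{\max}$ rather than trying to track it exactly.
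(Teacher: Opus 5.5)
Your proposal is correct and follows the paper's proof essentially step for step. The steps match in order: the smoothness-plus-gradient-domination bridge giving the $(1-\rho)$ contraction with residual $\eta_t$, the hit/miss averaging over the best-parent selection event, the $\pi_t B_{\max}$ pruning term from A4, the extraction term, the projected-subgradient telescoping with Jensen in $\theta$ and linearity in $\lambda$, and a weighted Ces\`aro argument for the limit. The only difference is that under the stated hypotheses the extraction term is simply assumption A7, so deriving $\zeta_t$ from twice the sup-deviation of $\widehat{\mathcal J}$ from $\mathcal J$ is a sufficient condition rather than a needed step; also, your remark that the vanishing of $D_\Lambda^2/(2S_T)$ forces $S_T\to\infty$ is something the paper's corollary assumes explicitly instead.
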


The bound separates ordinary dual optimization, finite-pool error, and mismatch
between a text rewrite and ideal ascent. It characterizes the surrogate
population problem under the stated conditions. The empirical AllSat criterion
separately records whether the returned text prompt satisfies every threshold in
Section~\ref{sec:experiments}. Appendix~\ref{app:rewrite_alignment} reports
hidden-state alignment, parameter-update alignment, and parent--child
$\Delta J$ for one trained rewriter. All three measurements are positive, as
expected under B1. Appendix~\ref{app:theory} gives the assumptions and proofs.

\section{Experiments}
\label{sec:experiments}

The evaluation proceeds in three stages. We first test whether adaptive
constraint weights reach feasible operating points more consistently than
static scoring. We then evaluate transfer and sensitivity to evaluator noise
to establish the scope of this result. Finally, we test whether DCAPO can learn
effective rewrite behavior and identify the training signal responsible for
it.

\subsection{Experimental Setup}
\label{sec:exp_settings}
\label{sec:exp_baselines}

\paragraph{Benchmarks and metrics.}
The primary evaluation uses Airline, Retail, and Telecom from
\textsc{tau2-bench}~\citep{Barres_Dong_Ray_Si_Narasimhan_2025}. A tool-using
agent must complete a customer-service task under a domain policy. Task
completion is the objective; human-agent requests (HAR), excess tool use
(ToolEx), and system-prompt length (PLen) are separate costs. Tables report Acc, HAR, and ToolEx as percentages and PLen in thousands of system-prompt characters. (Appendix~\ref{app:plen_reporting}). Acc is higher-is-better;
HAR, ToolEx, and PLen are lower-is-better, and negative ToolEx denotes fewer
calls than the reference trajectory. AllSat indicates that every reported cost
meets its threshold. We evaluate GPT-5-mini and GPT-5.1 model versions
\citep{openaiGPT5MiniModel2025,openaiGPT51Model2025}, and report an open-weight
Ministral-8B replication in the appendix.

Two assistant-style studies broaden the types of constraints. The chatbot
suite maximizes GSM8K accuracy~\citep{cobbeGSM8K2021} subject to response-length,
AdvBench safety, benign over-refusal, and character-counting budgets
\citep{zouUniversalTransferableAdversarial2023,
cuiORBenchOverRefusalBenchmark2025}. PUPA--IFBench measures task quality in
privacy-preserving delegation while constraining instruction-following errors
\citep{liPAPILLONPrivacyPreservation2025,
pyatkinGeneralizingVerifiableInstruction2025}. Appendix~\ref{app:datasets}
specifies the evaluator, direction, scale, and sample count for every metric.

\paragraph{Learned-rewriter setting.}
\dcapo{} is evaluated on the same three \textsc{tau2-bench} domains. The main
experiments use a Qwen3-8B rewriter with frozen Qwen3-8B and Qwen3-32B task
agents \citep{qwen3technicalreport}. The editor-size ablation holds the task
agent fixed at Qwen3-32B and compares Qwen3-0.6B, Qwen3-4B, and Qwen3-8B
rewriters. Ministral-3-8B-Instruct \citep{liu2026ministral} serves as the user
simulator. For comparison, Agent-GRPO updates the task agent directly using the
fixed-$\lambda$ signed-residual reward defined in
Appendix~\ref{app:agent_grpo_reward}; we report loose and strict penalty
settings.

\paragraph{Comparison setup.}
Thresholds and evaluation workloads are fixed before prompt search within each
reported setting. Every method receives the objective and each constraint
measurement. Multi-task APO~\citep{pryzantAutomaticPromptOptimization2023} and
GEPA~\citep{agrawalGEPAReflectivePrompt2025} use an equal-weight fixed score;
MOPO~\citep{resendiz2025mopo} uses its NSGA-II Pareto ranking. Unlike these
baselines, CAPO updates a separate multiplier for each constraint from the
measured residual. \capoea{} keeps every retained candidate eligible as a
parent, whereas \capo{} uses Lagrangian-weighted parent sampling.

Following constrained-optimization practice
\citep{tesslerRewardConstrainedPolicy2018,
achiamConstrainedPolicyOptimization2017}, the primary comparison is task
performance inside the feasible set. For comparison, we report their per-round workloads explicitly.
Appendices
\ref{app:opt_protocol}--\ref{app:complexity} specify splits, thresholds,
hyperparameters, baseline objectives, and compute accounting.

\subsection{\capo{} Evaluation}

\paragraph{Agentic tool use.}
Across the three domains and two task models reported in
Table~\ref{tab:agent_results}, \capo{} satisfies every
constraint in each setting. In contrast, the initial prompt, APO, and GEPA
each achieve feasibility in only one of the six settings, whereas MOPO does
not achieve feasibility in any. The best feasible \capo{} prompt also exceeds
the initial prompt's accuracy in five settings and matches it in the remaining
one. The binding constraint varies with the domain and task model, indicating
that greater model capability alone does not eliminate the need for adaptive
constraint weights.

\begin{table}[H]
\caption{\textbf{CAPO reaches a feasible operating point in all six evaluated
settings.} Bold marks the highest feasible accuracy, and underlining marks the
next distinct feasible accuracy; ties are included.
Per-example standard errors appear in Appendix
Tables~\ref{tab:agent_se_gpt5_airline}--\ref{tab:agent_se_gpt5_telecom}.}
\label{tab:agent_results}
\centering
\scriptsize
\setlength{\tabcolsep}{2.8pt}
\renewcommand{\arraystretch}{1.0}
\begin{tabular}{@{}l rrrrc rrrrc@{}}
\toprule
& \multicolumn{5}{c}{\sametextbf{GPT-5-mini}} & \multicolumn{5}{c}{\sametextbf{GPT-5.1}} \\
\cmidrule(lr){2-6} \cmidrule(lr){7-11}
\sametextbf{Method} & \sametextbf{Acc\,$\uparrow$} & \sametextbf{HAR\,$\downarrow$} & \sametextbf{ToolEx\,$\downarrow$} & \sametextbf{PLen\,$\downarrow$} & \sametextbf{AllSat} & \sametextbf{Acc\,$\uparrow$} & \sametextbf{HAR\,$\downarrow$} & \sametextbf{ToolEx\,$\downarrow$} & \sametextbf{PLen\,$\downarrow$} & \sametextbf{AllSat} \\
\midrule
\multicolumn{11}{@{}l}{\sametextbf{Airline}} \\
Threshold  & -- & 35 & 105 & 5.00 & -- & -- & 80 & 80 & 5.00 & -- \\
Initial    & 25.0 & 35.0 & 105.0 & 0.30 & \checkmark  & 55.0 & 95.0 & 91.8 & 0.30 & \ding{55} \\
APO        & 25.0 & 10.0 & 141.0 & 2.20 & \ding{55}  & 45.0 & 20.0 & 113.1 & 3.10 & \ding{55} \\
GEPA       & 30.0 & 10.0 & 92.0 & 4.20 & \checkmark    & 65.0 & 30.0 & 152.3 & 4.72 & \ding{55} \\
MOPO       & 45.0 & 25.0 & 156.0 & 4.68 & \ding{55}    & 60.0 & 35.0 & 139.3 & 5.52 & \ding{55} \\
\capoea    & \sametextbf{50.0} & 20.0 & 100 & 4.60 & \checkmark & 50.0 & 65.0 & 123.0 & 3.27 & \ding{55} \\
\caposgd   & \underline{45.0} & 5.0 & 104.9 & 4.98 & \checkmark   & \sametextbf{45.0} & 55.0 & 77.1 & 2.96 & \checkmark \\
\midrule
\multicolumn{11}{@{}l}{\sametextbf{Retail}} \\
Threshold  & -- & 15 & 50 & 5.00 & -- & -- & 5 & 50 & 5.00 & -- \\
Initial    & 50.0 & 12.5 & 52.7 & 0.30 & \ding{55}   & 62.5 & 0 & 50.5 & 0.30 & \ding{55} \\
APO        & 57.5 & 7.5 & 57.5 & 0.30 & \ding{55}    & 62.5 & 0 & 50.5 & 0.30 & \ding{55} \\
GEPA       & 50.0 & 5.0 & 51.6 & 4.52 & \ding{55}     & 45.0 & 2.5 & 59.2 & 4.05 & \ding{55} \\
MOPO       & 55.0 & 7.5 & 55.9 & 5.87 & \ding{55}     & 75.0 & 2.5 & 53.8 & 5.52 & \ding{55} \\
\capoea    & \underline{50.0} & 12.5 & 49.5 & 4.51 & \checkmark   & \sametextbf{67.5} & 2.5 & 41.4 & 4.91 & \checkmark \\
\caposgd   & \sametextbf{55.0} & 12.5 & 44.1 & 4.70 & \checkmark & \underline{62.5} & 2.5 & 41.4 & 4.45 & \checkmark \\
\midrule
\multicolumn{11}{@{}l}{\sametextbf{Telecom}} \\
Threshold  & -- & 65 & 250 & 5.00 & -- & -- & 65 & 80 & 5.00 & -- \\
Initial    & 40.0 & 80.0 & 309.9 & 0.30 & \ding{55}   & 60.0 & 65.0 & 82.7 & 0.30 & \ding{55} \\
APO        & \underline{37.5} & 62.5 & -5.6 & 4.90 & \checkmark   & 55.0 & 70.0 & 56.8 & 0.30 & \ding{55} \\
GEPA       & 32.5 & 80.0 & 403.9 & 0.30 & \ding{55}    & 52.5 & 212.5 & 372.4 & 5.80 & \ding{55} \\
MOPO       & 45.0 & 65.0 & 73.5 & 5.02 & \ding{55}  & 67.5 & 40.0 & 41.4 & 5.74 & \ding{55} \\
\capoea    & 47.5 & 70.0 & -48.1 & 4.74 & \ding{55}  & 67.5 & 45.0 & -2.5 & 5.35 & \ding{55} \\
\caposgd   & \sametextbf{50.0} & 62.5 & 34.0 & 4.29 & \checkmark   & \sametextbf{75.0} & 45.0 & 35.2 & 4.23 & \checkmark \\
\bottomrule
\end{tabular}
\end{table}

\paragraph{Open-weight target.}
The same pattern holds for Ministral-8B~\citep{liu2026ministral}: absolute
accuracy is lower, but \capo{} is the only evaluated method feasible in all
three domains (Appendix Table~\ref{tab:agent_results_ministral}). This result
shows that \capo{} can control constraints even when the target model follows
instructions less reliably.

\paragraph{Chatbot and privacy delegation.}
Table~\ref{tab:main_results} compares methods across the chatbot and
privacy-delegation studies.
On the chatbot suite, both CAPO variants satisfy all four budgets, and
\capoea{} nearly matches the accuracy of the infeasible initial prompt. On
PUPA--IFBench, \capoea{} attains the highest objective among feasible methods.
Together, these results show that the same threshold-constrained formulation
applies beyond tool-use costs to constraints on safety, formatting,
over-refusal, privacy, and instruction following.

\subsection{Transfer and Robustness}
\label{sec:robustness}

\paragraph{Coding agent ablation.}
A GPT-5-mini SWE-agent case study applies the same formulation to
\textsc{SWE-bench} Lite~\citep{jimenez2024swebench}. All three methods resolve
5 of 30 issues; among them, CAPO yields the lowest reported patch size,
tool-action count, and number of files touched.
Figure~\ref{fig:transfer_diagnostics} shows the resulting normalized costs;
Appendix~\ref{app:swebench} gives the setup and exact values.

\begin{table}[t]
\centering
\begin{minipage}[t]{0.54\linewidth}
\vspace{0pt}
\caption{\textbf{Feasibility beyond tool-using agents.} Bold marks the highest
feasible objective, and underlining marks the next distinct feasible objective;
ties are included.}
\label{tab:main_results}
\centering
\scriptsize
\setlength{\tabcolsep}{1.2pt}
\resizebox{\linewidth}{!}{%
\begin{tabular}{@{}lcccccc@{\hspace{0.8em}}ccc@{}}
\toprule
\multicolumn{7}{c}{\sametextbf{Chatbot}}
& \multicolumn{3}{c}{\sametextbf{PUPA--IFBench}} \\
\cmidrule(lr){1-7} \cmidrule(lr){8-10}
\sametextbf{Method} & \sametextbf{Acc\,$\uparrow$} & \sametextbf{Len\,$\downarrow$} & \sametextbf{Saf\,$\downarrow$} & \sametextbf{Chr\,$\downarrow$} & \sametextbf{Ovr\,$\downarrow$} & \sametextbf{AllSat} & \sametextbf{Obj\,$\uparrow$} & \sametextbf{IFV\,$\downarrow$} & \sametextbf{AllSat} \\
\midrule
Threshold & -- & 15 & 5 & 10 & 12 & -- & -- & 60 & -- \\
Initial   & 93.0 & 78.1 & 0.8 & 3.1 & 17.2 & \ding{55}  & 46.1 & 62.0 & \ding{55} \\
APO       & 88.3 &  1.6 & 1.6 & 2.3 & 12.5 & \ding{55}  & 77.1 & 59.5 & \checkmark \\
GEPA      & 92.2 &  5.5 & 0.8 & 5.5 & 15.6 & \ding{55}  & 83.3 & 68.0 & \ding{55} \\
MOPO      & 89.1 &  0.8 & 1.6 & 3.1 & 13.3 & \ding{55}  & 78.1 & 60.7 & \ding{55} \\
\capoea   & \sametextbf{92.2} & 0 & 3.1 & 9.4 & 10.2 & \checkmark & \sametextbf{78.9} & 55.6 & \checkmark \\
\caposgd  & \underline{89.8} &  0.8 & 0   & 1.5 &  7.8 & \checkmark & \underline{78.2} & 55.6 & \checkmark \\
\bottomrule
\end{tabular}
}
\end{minipage}\hfill
\begin{minipage}[t]{0.45\linewidth}
\vspace{0pt}
\caption{\textbf{Ablation of feedback source and online learning.}
``Model'' identifies the rewriter, ``Online'' indicates whether it is updated
online, and ``FB'' identifies the feedback source. Results use held-out Airline
tasks with GPT-5-mini as both the target agent and user simulator. Bold marks
the highest feasible accuracy, and underlining marks the next distinct feasible
accuracy; ties are included.}
\label{tab:trainable_rewriter_main}
\centering
\scriptsize
\setlength{\tabcolsep}{0.65pt}
\renewcommand{\arraystretch}{1.04}
\resizebox{\linewidth}{!}{%
\begin{tabular}{@{}lcccrrrc@{}}
\toprule
\sametextbf{Method} & \sametextbf{Model}\hspace{2pt} & \hspace{1pt}\sametextbf{Online} & \sametextbf{FB} &
\sametextbf{Acc\,$\uparrow$} & \sametextbf{HAR\,$\downarrow$} &
\sametextbf{TEx\,$\downarrow$} & \sametextbf{AllSat} \\
\midrule
Threshold & -- & -- & -- & -- & 35.0 & 105.0 & -- \\
\midrule
\capo{} & LLM & No & Sum. & \sametextbf{45.0} & 5.0 & 104.9 & \checkmark \\
\capo{}--SFT & SLM & No & Sum. & 45.0 & 20.0 & 131.1 & \ding{55} \\
\dcapo{} (Traj.) & SLM & Yes & Traj. & \underline{40.0} & 35.0 & 98.4 & \checkmark \\
\dcapo{} (Sum.) & SLM & Yes & Sum. & \sametextbf{45.0} & 25.0 & 60.7 & \checkmark \\
\bottomrule
\end{tabular}
}
\end{minipage}
\end{table}

\paragraph{Task-cluster shift.}
We evaluate CAPO on an embedding-defined Airline split that separates training
and evaluation task clusters. CAPO meets all three thresholds on the shifted
split, with a modest decrease in objective.
Table~\ref{tab:ood_split} reports the corresponding values in the appendix.

\paragraph{Noisy dual feedback.}
We add Gaussian noise to the cost estimates used for feasibility gating and the
dual update, while keeping the continuous scores used for candidate ranking and
final evaluation unperturbed. Retail remains feasible under mild noise, whereas
Airline and Telecom violate at least one threshold
(Figure~\ref{fig:robustness_noise}).

\begin{figure}[t]
\centering
\begin{minipage}[t]{0.49\linewidth}
  \centering
  \includegraphics[width=\linewidth]{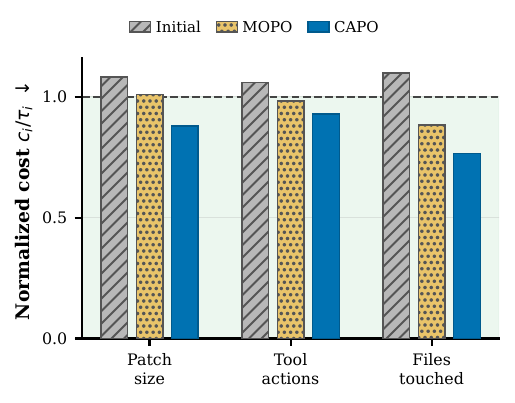}
  \caption{\textbf{Coding-agent constraint profile.}
  All three methods resolve the same 5 of 30 issues (16.7\%), so the panels
  compare only final constraint costs divided by their thresholds; values at
  or below one are feasible.}
  \label{fig:transfer_diagnostics}
\end{minipage}\hfill
\begin{minipage}[t]{0.49\linewidth}
  \centering
  \includegraphics[width=\linewidth]{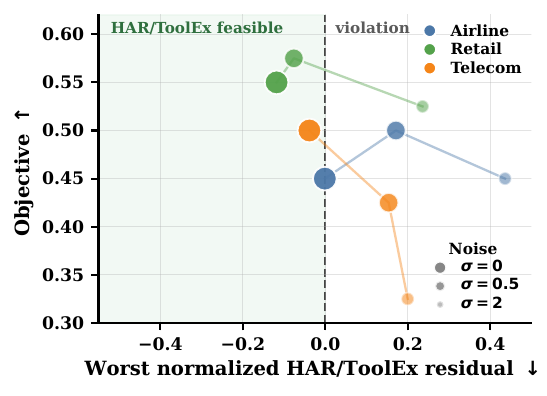}
  \caption{\textbf{Sensitivity to noisy dual feedback.} Task objective is
  plotted against the worst normalized residual across HAR and ToolEx; negative
  values satisfy both
  budgets. Marker size and opacity encode $\sigma\in\{0,0.5,2\}$. Noise affects
  feasibility gating and multiplier updates; ranking and final evaluation use
  unperturbed scores.}
  \label{fig:robustness_noise}
\end{minipage}
\end{figure}

\paragraph{Dual-rate and threshold sensitivity.}
We vary the dual learning rate and constraint thresholds. Tightening ToolEx
causes the corresponding multiplier to rise and then saturate once the search
no longer finds a feasible prompt.
Appendix~\ref{app:robustness_full} reports the sweeps, standard errors, and
task-cluster-shift values. Appendix~\ref{sec:ablations}
separately isolates the feedback and search components.

\subsection{\dcapo{} Evaluation}
\label{sec:dcapo_results}

Table~\ref{tab:dcapo_matched_main} compares adaptive prompt rewriting with
Agent-GRPO under fixed penalties at two model sizes. Both rewrite-depth settings satisfy every constraint for every domain and task-agent
size, whereas Agent-GRPO is not consistently feasible. On Qwen3-32B, where
all optimized methods are feasible, DCAPO attains the highest accuracy in each
domain.

\paragraph{Comparison with RL-based prompt learning.}
Figure~\ref{fig:rl_prompt_comparison_8b} compares \dcapo{} with StablePrompt
and direct task-agent training on Qwen3-8B. \dcapo{} is feasible in all three
domains; StablePrompt is feasible only on Retail and \agentgrpoloose{} only on
Airline. Full measurements appear in
Appendix~\ref{app:learned_editor_controls}.

\paragraph{Ablation on rewrite depth.}
\label{sec:dcapo_depth2_results}
At both task-agent sizes, a second rewrite preserves feasibility and usually
matches or improves accuracy. Frozen-rewriter transfer reduces but does not
eliminate target-domain violations
(Appendix~\ref{app:zero_shot_rewriter_transfer}).

\begin{table*}[t]
\caption{\textbf{Adaptive rewriting produces feasible prompts across domains
and task-agent sizes.} For Agent-GRPO, loose and strict $\lambda$ denote the fixed
penalty vectors $(\lambda_{\mathrm{ToolEx}},\lambda_{\mathrm{HAR}})=(1,2.5)$
and $(2,4)$, respectively; Appendix~\ref{app:agent_grpo_reward} gives the
corresponding reward. Within each domain and task-model size, bold marks the
highest feasible accuracy, and underlining marks the next distinct feasible
accuracy; ties are included.}
\label{tab:dcapo_matched_main}
\centering
\scriptsize
\setlength{\tabcolsep}{2.25pt}
\renewcommand{\arraystretch}{0.98}
\begin{tabular}{@{}llrrrrcrrrrc@{}}
\toprule
& & \multicolumn{5}{c}{\sametextbf{Qwen3-8B}}
& \multicolumn{5}{c}{\sametextbf{Qwen3-32B}} \\
\cmidrule(lr){3-7}\cmidrule(lr){8-12}
\sametextbf{Domain} & \sametextbf{Method}
& \sametextbf{Acc.\,$\uparrow$} & \sametextbf{HAR\,$\downarrow$} & \sametextbf{ToolEx\,$\downarrow$} & \sametextbf{PLen\,$\downarrow$} & \sametextbf{AllSat}
& \sametextbf{Acc.\,$\uparrow$} & \sametextbf{HAR\,$\downarrow$} & \sametextbf{ToolEx\,$\downarrow$} & \sametextbf{PLen\,$\downarrow$} & \sametextbf{AllSat} \\
\midrule
\multirow{6}{*}{Airline}
& Threshold              & --   & 30.0 & 200.0 & 5.00  & --          & --   & 30.0 & 200.0 & 5.00  & -- \\
& Initial                & \underline{0.0}  & 10.0 & 186.9 & 0.30 & \checkmark  & 15.0 & 5.0  & 232.8 & 0.30 & \ding{55} \\
& \agentgrpoloose  & \sametextbf{30.0} & 20.0 & -19.7 & 0.30 & \checkmark  & 15.0 & 5.0  & 144.3 & 0.30 & \checkmark \\
& \agentgrpostrict & 15.0 & 60.0 & 3.3   & 0.30 & \ding{55}   & 0.0  & 0.0  & 95.1  & 0.30 & \checkmark \\
& \dcapo{} ($D{=}1$) & \sametextbf{30.0} & 0.0 & -78.7 & 3.68 & \checkmark & \underline{20.0} & 0.0 & 165.6 & 3.80 & \checkmark \\
& \dcapo{} ($D{=}2$) & \sametextbf{30.0} & 0.0 & -36.1 & 2.47 & \checkmark
& \sametextbf{25.0} & 0.0 & 91.8 & 4.71 & \checkmark \\
\midrule
\multirow{6}{*}{Retail}
& Threshold              & --   & 30.0 & 60.0  & 5.00  & --          & --   & 30.0 & 60.0  & 5.00  & -- \\
& Initial                & 25.0 & 15.0 & 72.0  & 0.30 & \ding{55}   & 45.0 & 12.5 & 87.6  & 0.30 & \ding{55} \\
& \agentgrpoloose  & 25.0 & 20.0 & 109.1 & 0.30 & \ding{55}   & 30.0 & 5.0  & 54.8  & 0.30 & \checkmark \\
& \agentgrpostrict & 2.5  & 15.0 & 2.2   & 0.30 & \checkmark  & 12.5 & 12.5 & 13.4  & 0.30 & \checkmark \\
& \dcapo{} ($D{=}1$) & \underline{22.5} & 5.0  & 5.9   & 2.92 & \checkmark  & \sametextbf{45.0} & 2.5 & 53.2 & 3.18 & \checkmark \\
& \dcapo{} ($D{=}2$) & \sametextbf{30.0} & 7.5 & 54.8 & 2.86 & \checkmark
& \underline{37.5} & 7.5 & 58.6 & 3.36 & \checkmark \\
\midrule
\multirow{6}{*}{Telecom}
& Threshold              & --   & 60.0 & 40.0  & 5.00  & --          & --   & 60.0 & 40.0  & 5.00  & -- \\
& Initial                & 5.0  & 20.0 & 169.8 & 0.30 & \ding{55}   & 25.0 & 35.0 & 124.1 & 0.30 & \ding{55} \\
& \agentgrpoloose  & 0.0  & 5.0  & 71.0  & 0.30 & \ding{55}   & \underline{17.5} & 30.0 & 32.1  & 0.30 & \checkmark \\
& \agentgrpostrict & 0.0  & 10.0 & 115.4 & 0.30 & \ding{55}   & 2.5  & 5.0  & -17.3 & 0.30 & \checkmark \\
& \dcapo{} ($D{=}1$) & \underline{12.5} & 12.5 & 9.9   & 4.71  & \checkmark   & \underline{17.5} & 22.5 & 27.2  & 2.63 & \checkmark \\
& \dcapo{} ($D{=}2$) & \sametextbf{15.0} & 12.5 & 1.2 & 3.59 & \checkmark
& \sametextbf{30.0} & 30.0 & 37.7 & 4.04 & \checkmark \\
\bottomrule
\end{tabular}
\end{table*}

\begin{wrapfigure}{r}{0.48\linewidth}
\vspace{-1.4\baselineskip}
\centering
\includegraphics[width=\linewidth]{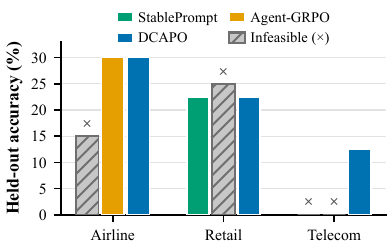}
\captionsetup{font=footnotesize,hypcap=false}
\caption{\textbf{RL methods with Qwen3-8B.} Bars give held-out accuracy; gray
hatching and $\times$ denote infeasible results.}
\label{fig:rl_prompt_comparison_8b}
\vspace{-0.5\baselineskip}
\end{wrapfigure}

\subsection{Online Adaptation and Feedback}
Table~\ref{tab:trainable_rewriter_main} separates the rewrite model, online
adaptation, and feedback form on the same held-out Airline setup. Comparing
\capo{}--SFT with the summary-feedback \dcapo{} row isolates online adaptation:
with the same SLM and summary feedback, online training reduces ToolEx and
restores feasibility without reducing accuracy. Comparing the
two \dcapo{} rows isolates feedback form: holding the SLM and online GRPO fixed,
summary feedback recovers CAPO's 45\% accuracy and provides more ToolEx slack.


\section{Limitations}
\label{sec:limitations}

Our experiments isolate constraint-aware prompt optimization by keeping the
task agent fixed and modifying only its system prompt. This design separates
the effect of prompt rewriting from changes to the deployed policy, while joint
agent--prompt training and optimization of other artifacts, such as skills or
execution harnesses, offer complementary settings with potentially different
dynamics. Our evaluation spans multiple agent and assistant tasks, models, and
constraint families, broader domains and cross-domain transfer would further
test generalization.
\section{Conclusion}
\label{sec:conclusion}

We introduced \capo{} to optimize the behavior of language-model agents through
system-prompt updates under explicit deployment thresholds. Its key idea is to let measured constraint residuals determine
which failures receive attention during search, rather than fixing their
relative importance in advance. Across tool-using agents and assistant-style
tasks, this feedback more consistently finds empirically feasible prompts
without updating the task agent. \dcapo{} further shows that the rewriting
process can be learned from behavioral feedback with pool-based GRPO, while our
surrogate analysis makes explicit the errors introduced by finite prompt pools
and imperfect rewrites. Taken together, these results suggest a broader view of
prompt optimization: deployment requirements can serve not only as final
evaluation criteria, but also as the feedback that guides the search.

\newpage
\bibliographystyle{plainnat}
\bibliography{reference}


\appendix
\newpage
\begin{center}
{\LARGE\bfseries Appendix}
\end{center}
\vspace{0.8em}

\noindent{\Large\bfseries Contents}
\vspace{0.8em}

\begingroup
\small
\newcommand{\appentry}[3]{%
\noindent\hspace*{2.6em}\makebox[2.4em][l]{\textbf{\hyperref[#1]{#2}}}%
\textbf{\hyperref[#1]{#3}}\dotfill\pageref{#1}\par\vspace{0.35em}}
\newcommand{\appsubentry}[3]{%
\noindent\hspace*{5.0em}\makebox[3.2em][l]{\hyperref[#1]{#2}}%
\hyperref[#1]{#3}\dotfill\pageref{#1}\par}

\appentry{app:theory}{A}{Surrogate Primal--Dual Analysis}
\appsubentry{app:theory_setup}{A.1}{Setup and Assumptions}
\appsubentry{app:theory_bridge_discrete}{A.2}{Discrete Rewrites and Surrogate Gradients}
\appsubentry{app:theory_algo_defs}{A.3}{Algorithmic Definitions}
\appsubentry{app:theory_inner}{A.4}{Prompt-Pool Approximation Error}
\appsubentry{app:theory_main_thm}{A.5}{Inexact Primal--Dual Bound}
\appsubentry{app:rewrite_oracle_reduction}{A.6}{Rewrite Error Propagation}
\appentry{app:hyperparams}{B}{Experiment Details}
\appsubentry{app:datasets}{B.1}{Dataset Details}
\appsubentry{app:training_details}{B.2}{Training Details}
\appsubentry{app:opt_protocol}{B.2.1}{CAPO Training}
\appsubentry{app:baseline_protocol}{B.2.2}{Baseline Setup}
\appsubentry{app:complexity}{B.2.3}{Complexity and Runtime}
\appsubentry{app:slm_scaling}{B.2.4}{Open-Weight Targets}
\appsubentry{app:learned_editor}{B.2.5}{\dcapo{} Training}
\appsubentry{app:additional_experiments}{B.3}{Additional Experiments}
\appsubentry{app:five_constraints}{B.3.1}{Constraint-Set Scaling}
\appsubentry{app:slm_ablation}{B.3.2}{Model-Scale Ablation}
\appsubentry{app:progressive}{B.3.3}{Progressive Constraint Addition}
\appsubentry{app:editor_size_sensitivity}{B.3.4}{Editor-Size Ablation}
\appsubentry{app:generalization_transfer}{B.3.5}{Generalization and Transfer}
\appsubentry{app:learned_editor_controls}{B.3.6}{RL-Based Baselines}
\appsubentry{app:analysis}{B.4}{Analysis}
\appsubentry{app:training_dynamics}{B.4.1}{Training Dynamics}
\appsubentry{app:robustness_full}{B.4.2}{Robustness and Sensitivity}
\appsubentry{app:rewrite_alignment}{B.4.3}{Rewrite Alignment}
\appentry{app:prompts}{C}{Prompt-Level Analysis and Prompt Listings}
\appsubentry{app:prompt_mechanisms}{C.1}{Prompt-Level Mechanisms}
\appsubentry{app:prompts_agent}{C.2}{Agent (\textsc{tau2-bench}) System Prompts}
\appsubentry{app:prompts_chatbot}{C.3}{Chatbot System Prompts}
\appsubentry{app:prompts_pupa}{C.4}{PUPA--IFBench System Prompts}
\appentry{app:broader_impact}{D}{Broader Impact}
\endgroup

\newpage
\section{Surrogate Primal--Dual Analysis}
\label{app:theory}
\input{appendix_theory}

\section{Experiment Details}
\label{app:hyperparams}

\subsection{Dataset Details}
\label{app:datasets}

Table~\ref{tab:dataset_descriptions} maps each workload to its role in
Eq.~\eqref{eq:constrained}. Each objective is reported in its original
higher-is-better direction. Every constraint is converted to a
lower-is-better cost before entering the Lagrangian. Objective and constraint
examples may come from different datasets, but every candidate prompt is
evaluated on the same task-specific workloads as its competitors.

\paragraph{Prompt-length reporting.}
\label{app:plen_reporting}
Every results table reports $\mathrm{PLen}=L/1{,}000$, where $L$ is the number of
characters in the system prompt. PLen is lower-is-better, and the reporting
threshold is 5.0. The optimizer uses the equivalent relative-excess cost
$(L/4000)-1$ with threshold $0.25$; therefore, this display transformation does
not change feasibility or any AllSat label.

\begin{table}[ht]
\centering
\caption{\textbf{Objectives and constraints by evaluation workload.}
``O'' denotes a higher-is-better objective, and ``C'' denotes a lower-is-better cost.
Thresholds and domain-specific values appear in Tables~\ref{tab:agent_results},
\ref{tab:main_results}, and~\ref{tab:swebench}.}
\label{tab:dataset_descriptions}
\scriptsize
\setlength{\tabcolsep}{4pt}
\renewcommand{\arraystretch}{1.05}
\begin{tabular}{@{}c l p{0.66\linewidth}@{}}
\toprule
\sametextbf{Type} & \sametextbf{Name} & \sametextbf{Description} \\
\midrule
\multicolumn{3}{@{}l}{\sametextbf{Agent (\textsc{tau2-bench}~\citep{Barres_Dong_Ray_Si_Narasimhan_2025})}} \\
O & \textsc{tau2-bench} (Acc.) & Tool--agent--user benchmark for policy-governed customer service. We evaluate Airline (94 tasks), Retail (114), and Telecom (114). Each task provides a user goal, domain tools, and a policy document; the benchmark grader determines whether the final agent state completes the requested task. \\
C & HAR & Mean number of transfer-to-human calls per dialogue, reported as $100$ times the mean. Values can exceed $100$ when one dialogue issues multiple transfer requests. \\
C & ToolEx & Relative tool-use excess, $(n_\text{actual}-n_\text{reference})/n_\text{reference}\times100\%$. Zero matches the reference path, positive values use more calls, and negative values use fewer. \\
C & PLen & System-prompt character count divided by 1{,}000; the reporting
threshold is 5.0. \\
\midrule
\multicolumn{3}{@{}l}{\sametextbf{Chatbot}} \\
O & GSM8K (Acc.) & Exact-answer accuracy on 1{,}319 grade-school math problems; optimization draws from the 7{,}473-example training split. \\
C & AdvBench & Failure-to-refuse rate on adversarial harmful requests~\citep{zouUniversalTransferableAdversarial2023}. \\
C & Over-Rejection & False-refusal rate on benign requests that share surface terms with harmful requests (e.g., ``kill a Linux process''). \\
C & Character Counting & Error rate on synthetic requests that ask for the number of occurrences of a specified character. \\
C & Response Length & Fraction of GSM8K responses whose complete generated answer exceeds 512 characters. \\
\midrule
\multicolumn{3}{@{}l}{\sametextbf{PUPA--IFBench}} \\
O & PUPA~\citep{liPAPILLONPrivacyPreservation2025} & Overall task-quality score for privacy-sensitive delegation in which prompts can contain PII, credentials, or confidential context that must be handled before remote-model use. \\
C & IFBench & Violation rate on verifiable format, content, linguistic, and structural instructions~\citep{pyatkinGeneralizingVerifiableInstruction2025}. \\
\midrule
\multicolumn{3}{@{}l}{\sametextbf{SWE-agent (\textsc{SWE-bench}~\citep{jimenez2024swebench})}} \\
O & \textsc{SWE-bench} Lite (Resolve) & Real GitHub issue-resolution tasks where a coding agent edits repository checkouts and submits patches evaluated by the SWE-bench harness. The study includes a 10-instance optimization run and a 30-instance final evaluation. \\
C & Patch Size & Lines added plus lines removed in the submitted patch, normalized by the 75th percentile gold-patch size. \\
C & Tool Actions & SWE-agent action/tool steps normalized by the 50-step agent budget. \\
C & Files Touched & Number of distinct files modified, normalized by the 75th percentile number of files touched in gold patches. \\
\bottomrule
\end{tabular}
\end{table}

\subsubsection{Dataset Statistics}
\label{app:datasets:stats}

\begin{table}[ht]
\centering
\caption{\textbf{Optimization and evaluation split sizes.} Counts follow
benchmark-provided or experiment-specific splits and need not sum to the full
corpus size.}
\small
\begin{tabular}{@{}llrr@{}}
  \toprule
  \sametextbf{Setting} & \sametextbf{Dataset} & \sametextbf{Opt.} & \sametextbf{Eval.} \\
\midrule
\multirow{3}{*}{I (Agent)} 
  & Airline & 74 & 20 \\
  & Retail & 74 & 40 \\
  & Telecom & 74 & 40 \\
\midrule
\multirow{5}{*}{II (Chatbot)} 
  & GSM8K  & 7,473 & 1,319 \\
  & AdvBench (safety) & 520 & 128 \\
  & Over-Rejection & 128 & 128 \\
  & CharCount & 200 & 128 \\
  & Length (applied to GSM8K) & 7,473 & 1,319 \\
\midrule
\multirow{2}{*}{III (PUPA)} 
  & PUPA  & 400 & 128 \\
  & IFBench  & 200 & 128 \\
\midrule
IV (SWE-agent) & \textsc{SWE-bench} Lite & 270 & 30 \\
\bottomrule
\end{tabular}
\end{table}

\subsection{Training Details}
\label{app:training_details}

This subsection collects the configurations used to train and evaluate CAPO,
DCAPO, their baselines, and the open-weight target.

\paragraph{Training framework.}
All trainable-policy experiments use the AReaL codebase
\citep{fu2025areal}. This includes SFT and pool-based GRPO for the \dcapo{}
rewriter, direct task-policy GRPO for Agent-GRPO, and APPO training for
StablePrompt.

\subsubsection{CAPO Training}
\label{app:opt_protocol}

\paragraph{Optimization loop.}
In each round, we (i) evaluate the retained pool under
$\boldsymbol\lambda_t$; (ii) sample parents and request rewrites conditioned on
representative task failures, constraint examples, and the current
multipliers; (iii) evaluate the expanded pool; (iv) update multipliers from the
selected residuals via Eq.~\eqref{eq:lambda_update}; and (v) retain the next
beam using the round-$t$ scores. CAPO runs with the frozen rewriter use up to
six rounds. The coding-agent study uses four rounds, and the progressive chatbot
study uses 12; these are reported with their corresponding results.

\paragraph{Threshold selection.}
\label{app:threshold_protocol}
All thresholds are fixed and shared across methods within the same
domain--model setting. For each setting, we measure the initial system prompt's
mean cost $c_i^{(0)}$ on a training split separate from final evaluation and
set each budget relative to that baseline. The optimizer uses a $+0.25$
threshold for the relative PLenEx cost, equivalent to the reported PLen
threshold of 5.0. Thresholds are frozen before CAPO sweeps or baseline tuning.
This procedure produces budgets anchored to the training-split baseline; we
assess all methods against them on a separate evaluation split. ToolEx
sensitivity is reported in Tables
\ref{tab:tol_ablation} and~\ref{tab:telecom_relaxed_sensitivity}. The
coding-agent setup is documented separately in Appendix~\ref{app:swebench}.

\paragraph{Reproducibility.}
To support reproducibility, this appendix documents the optimization loop, data
splits, thresholds, hyperparameters, baseline settings, and compute setup. The
public benchmarks are available at the cited URLs, and representative optimized
prompts and aggregate evaluation summaries are included below.
Appendix~\ref{app:complexity} reports relative cost comparisons and per-round
counts. Hyperparameters and baseline reproduction details are in
Appendix~\ref{app:baseline_protocol}.

\paragraph{Model compute}
All prompt-optimization baselines and all setups that use GPT-5-mini or GPT-5.1
as the task model or user simulator are evaluated through the OpenAI API,
without local GPU inference. The documented model versions are
\texttt{gpt-5-mini-2025-08-07} and \texttt{gpt-5.1-2025-11-13}
\citep{openaiGPT5MiniModel2025,openaiGPT51Model2025}. All small language models are served via vLLM on A100.

\paragraph{Dual learning rate.}
The dual learning rate $\beta$ controls a stability--responsiveness trade-off:
small values react slowly, whereas large values can overshoot under noisy or
competing constraints. We use $\beta{=}4$ as the configured default. The
matched sensitivity sweep in Table~\ref{tab:lambda_lr_sweep} shows that the trade-off between task performance and constraints is sensitive to the dual learning rate.

\begin{table}[ht]
\centering
\caption{\textbf{Representative CAPO configuration on
\textsc{tau2-bench}.} All settings use a frozen rewriter.}
\label{tab:capo_hyperparams_full}
\begin{tabular}{@{}ll@{}}
  \toprule
  \sametextbf{Parameter} & \sametextbf{Value} \\
\midrule
Max optimization rounds & 6 \\
Retained beam size ($k_1$) & 6 \\
Parents sampled per round ($k$) & 4 \\
Children per parent & 2 \\
Dual learning rate $\beta$ & 4 \\
Multiplier cap $\lambda_{\max}$ & 10 \\
Best prompts for $\lambda$ estimate ($k_0$) & 1 \\
Initial $\lambda$ & 1 \\
Parent selection temperature $\gamma$ & 1 \\

\bottomrule
\end{tabular}
\end{table}

\FloatBarrier
\subsubsection{Baseline Setup}
\label{app:baseline_protocol}

\paragraph{Shared candidate evaluation.}
All methods receive the same objective examples, constraint examples, and
fixed thresholds for every candidate. The workloads are
merged for evaluation scheduling, but their metrics are not collapsed: for
every candidate prompt $p$, we compute $r(p)$ on the objective subset and each
$c_i(p)$ on its corresponding constraint subset. The baseline methods
therefore receive explicit constraint measurements rather than objective-only
feedback. The methods evaluate different numbers of candidates and make
different numbers of optimizer calls; these per-round differences are reported
in Appendix~\ref{app:complexity}.

\paragraph{Multi-task APO and GEPA.}
We retain each method's proposal and update mechanism but score candidates with
the fixed equal-multiplier objective
\begin{equation}
\label{eq:baseline_fixed_score}
J(p)=r(p)-\sum_i\bar{\lambda}_i\bigl(c_i(p)-\tau_i\bigr),
\qquad \bar{\lambda}_i=1.
\end{equation}
Every threshold $\tau_i$ enters through its signed residual
$c_i(p)-\tau_i$, and $\bar{\lambda}_i=1$ for every constraint and every round.
Neither method adapts these coefficients from measured violations. With fixed
multipliers, $\sum_i\bar\lambda_i\tau_i$ is constant across candidates, so the
thresholds define the shared constrained problem and the empirical AllSat
criterion but do not by themselves change the candidate ordering.

\paragraph{MOPO.}
Our MOPO baseline retains the method's NSGA-II non-dominated sorting and
crowding-distance selection rule on
\begin{equation}
\label{eq:mopo_baseline_vector}
\bigl(r(p),-c_1(p),\ldots,-c_m(p)\bigr).
\end{equation}
The common thresholds determine whether each returned prompt satisfies all
constraints. Subtracting the fixed $\tau_i$ from each cost would not change
Pareto dominance, so the ranking itself uses the native raw-cost vector. MOPO
does not convert residuals into adaptive scalar weights.

\paragraph{EvoPrompt--GA.}
For the mechanism ablations in Tables~\ref{tab:ablations_alt} and
\ref{tab:ablations_alt_retail_telecom}, we use the tournament-selection
EvoPrompt--GA variant: two size-two tournaments choose the parents, an LLM
performs crossover and mutation, and top-$K$ elitist survival retains the next
population. Its fitness is Eq.~\eqref{eq:baseline_fixed_score}, with
$\bar\lambda_i=1$ fixed for every constraint and round. Thus EvoPrompt--GA
receives the objective measurement, each constraint measurement, and the
fixed thresholds, but performs no dual update.

\FloatBarrier
\subsubsection{Complexity and Runtime}
\label{app:complexity}

Table~\ref{tab:cost_table} summarizes per-round asymptotic complexity. Let
$E_{\mathrm{tot}}=E_{\mathrm{task}}+\sum_i E_i$ denote the shared objective and
constraint workloads evaluated for one candidate. The number of candidates,
rather than the workload per candidate, determines the cost differences: MOPO evaluates $b$
offspring per candidate, whereas CAPO and GEPA rewrite $k$ selected parents.
CAPO's dual update is negligible relative to rollout calls and its
optimizer-side critique cost matches GEPA's $O(kC)$.

\begin{table}[ht]
\caption{\textbf{CAPO matches GEPA's per-round critique and evaluation
complexity while satisfying all three domains.} Here $n$ is the pool size,
$k$ the selected-parent count ($k\le n$), $b$ the number of MOPO
offspring per candidate, $E_{\mathrm{tot}}$ the full per-candidate workload,
and $C$ one optimizer-LLM call. \sametextbf{AllSat (3/3)} counts the
\textsc{tau2-bench} domains in which every metric meets its GPT-5-mini threshold in
Table~\ref{tab:agent_results}.}
\label{tab:cost_table}
\centering
\small
\setlength{\tabcolsep}{6pt}
\begin{tabular}{@{}lccc@{}}
\toprule
\sametextbf{Method} & \sametextbf{Critique cost / round} & \sametextbf{Eval cost / round} & \sametextbf{AllSat (3/3)} \\
\midrule
APO   & $O(nC)$       & $O(nE_{\mathrm{tot}})$       & 1 / 3 \\
MOPO  & $O(nbC)$      & $O(nbE_{\mathrm{tot}})$      & 0 / 3 \\
GEPA  & $O(kC)$       & $O(nE_{\mathrm{tot}})$       & 1 / 3 \\
\capo{} & $O(kC)$ & $O(nE_{\mathrm{tot}})$ & 3 / 3 \\
\bottomrule
\end{tabular}
\end{table}

\FloatBarrier
\subsubsection{Open-Weight Targets}
\label{app:slm_scaling}

\paragraph{Small language model.}
To test CAPO with an open-weight target, we evaluate Ministral-8B
\citep{liu2026ministral}, an 8B-parameter model with lower baseline
instruction-following accuracy than GPT-5-mini. Absolute accuracy decreases
across methods, but \caposgd{} is the only evaluated method that satisfies all
constraints in all three domains (Table~\ref{tab:agent_results_ministral}).
Appendix~\ref{app:slm_ablation} separately compares four Qwen2.5 model sizes on
the chatbot task.

\begin{table}[ht]
\caption{\textbf{\caposgd{} is the only method feasible in all three domains
with Ministral-8B.} Results follow the format of
Table~\ref{tab:agent_results}.
Bold marks the highest accuracy among feasible methods, and underlining marks
the next distinct feasible accuracy; ties are included.}
\label{tab:agent_results_ministral}
\centering
\scriptsize
\setlength{\tabcolsep}{4pt}
\renewcommand{\arraystretch}{0.95}
\begin{tabular}{@{}l r r r r c@{}}
  \toprule
  \sametextbf{Setting / Method} & \sametextbf{Acc.~$\uparrow$} & \sametextbf{HAR~$\downarrow$} & \sametextbf{ToolEx~$\downarrow$} & \sametextbf{PLen~$\downarrow$} & \sametextbf{AllSat} \\
\midrule
\multicolumn{6}{@{}l}{\sametextbf{Airline}} \\
\sametextbf{Threshold}
  & -- & 30 & 200 & 5.00 & -- \\
Initial prompt
  & 15.0 & 10.0 & 219.7 & 0.30 & \ding{55} \\
APO
  & \sametextbf{20.0} & 5.0 & 188.5 & 4.24 & \checkmark \\
GEPA
  & 10.0 & 25.0 & 239.3 & 5.54 & \ding{55} \\
MOPO
  & 15.0 & 25.0 & 237.7 & 4.72 & \ding{55} \\
\caposgd
  & \sametextbf{20.0} & 5.0 & 177.1 & 4.18 & \checkmark \\
\midrule
\multicolumn{6}{@{}l}{\sametextbf{Retail}} \\
\sametextbf{Threshold}
  & -- & 30 & 60 & 5.00 & -- \\
Initial prompt
  & 30.0 & 20.0 & 60.2 & 0.30 & \ding{55} \\
APO
  & 30.0 & 20.0 & 60.2 & 0.30 & \ding{55} \\
GEPA
  & 37.5 & 15.0 & 86.0 & 4.41 & \ding{55} \\
MOPO
  & 40.0 & 15.0 & 73.1 & 3.86 & \ding{55} \\
\caposgd
  & \sametextbf{32.5} & 20.0 & 55.4 & 2.68 & \checkmark \\
\midrule
\multicolumn{6}{@{}l}{\sametextbf{Telecom}} \\
\sametextbf{Threshold}
  & -- & 60 & 40 & 5.00 & -- \\
Initial prompt
  & 20.0 & 62.5 & 16.7 & 0.30 & \ding{55} \\
APO
  & 20.0 & 62.5 & 16.7 & 0.30 & \ding{55} \\
GEPA
  & 22.5 & 65.0 & 75.3 & 0.32 & \ding{55} \\
MOPO
  & 10.0 & 47.5 & 40.1 & 4.89 & \ding{55} \\
\caposgd
  & \sametextbf{25.0} & 35.0 & 38.3 & 4.66 & \checkmark \\
\bottomrule
\end{tabular}
\end{table}

\FloatBarrier
\subsubsection{\dcapo{} Training}
\label{app:learned_editor}

\paragraph{Constrained reward.}
The rewriter emits a child system prompt. The frozen task agent executes that
prompt, and the environment assigns the empirical score
\[
J_{\boldsymbol{\lambda}} = \mathrm{pass@1}
    - \lambda_{\mathrm{te}}(\mathrm{te}-\tau_{\mathrm{te}})
    - \lambda_{\mathrm{har}}(\mathrm{har}-\tau_{\mathrm{har}}).
\]
Here \textsc{te} is excess tool use and \textsc{har} is human-agent request
rate. The multipliers start at one and follow Eq.~\eqref{eq:lambda_update}. Residuals
remain signed inside $J_{\boldsymbol\lambda}$; only the multipliers are
projected coordinatewise to $[0,\lambda_{\max}]$, with
$\lambda_{\max}=10$.

\paragraph{Fixed-$\lambda$ Agent-GRPO reward.}
\label{app:agent_grpo_reward}
Agent-GRPO applies the same signed-residual scalarization directly to
task-agent rollouts but holds the penalty vector fixed throughout training:
\[
J_{\bar{\boldsymbol\lambda}}^{\mathrm{Agent\text{-}GRPO}}
= \mathrm{pass@1}
  - \bar\lambda_{\mathrm{te}}(\mathrm{te}-\tau_{\mathrm{te}})
  - \bar\lambda_{\mathrm{har}}(\mathrm{har}-\tau_{\mathrm{har}}).
\]
The loose setting uses
$(\bar\lambda_{\mathrm{te}},\bar\lambda_{\mathrm{har}})=(1,2.5)$, and the
strict setting uses $(2,4)$. The two coefficients weight the ToolEx and HAR
residuals, respectively. We compute this scalar reward for each rollout before
group-relative normalization and the GRPO update; unlike DCAPO, Agent-GRPO does
not update the coefficients from observed residuals.

\paragraph{Online training setup.}
Parents are sampled off-policy from the prompt pool, and children of the same
parent share the same evaluation examples when computing group-relative
advantages. Trajectory feedback includes the complete parent trajectory,
whereas summary feedback replaces it with a critic-generated violation
summary. At depth two, the rewriter evaluates a
second child conditioned on the first child's trajectory and keeps the better
child. The initial parent receives empty feedback. Feedback and reward are
generated only from training rollouts, never from held-out results. Final prompt
comparisons use the benchmark-defined held-out evaluation split.

\FloatBarrier
\subsection{Additional Experiments}
\label{app:additional_experiments}

This subsection collects the evaluations that extend the primary settings or
compare alternative training approaches.

\subsubsection{Constraint-Set Scaling}
\label{app:five_constraints}

\paragraph{Nine-constraint Airline scaling.}
To test whether CAPO handles broader constraint sets, we run an Airline
experiment with nine constraints spanning resource use, reliability, user
experience, and safety. The easier constraints remain satisfied and their
multipliers settle near zero; harder constraints retain larger multipliers
(Figures~\ref{fig:9constraint_dynamics} and~\ref{fig:9constraint_lambdas}).
The dual update therefore gives larger weights to constraints with larger
measured violations, without manual weight tuning.

\begin{figure*}[htbp]
  \centering
  \includegraphics[width=0.90\textwidth]{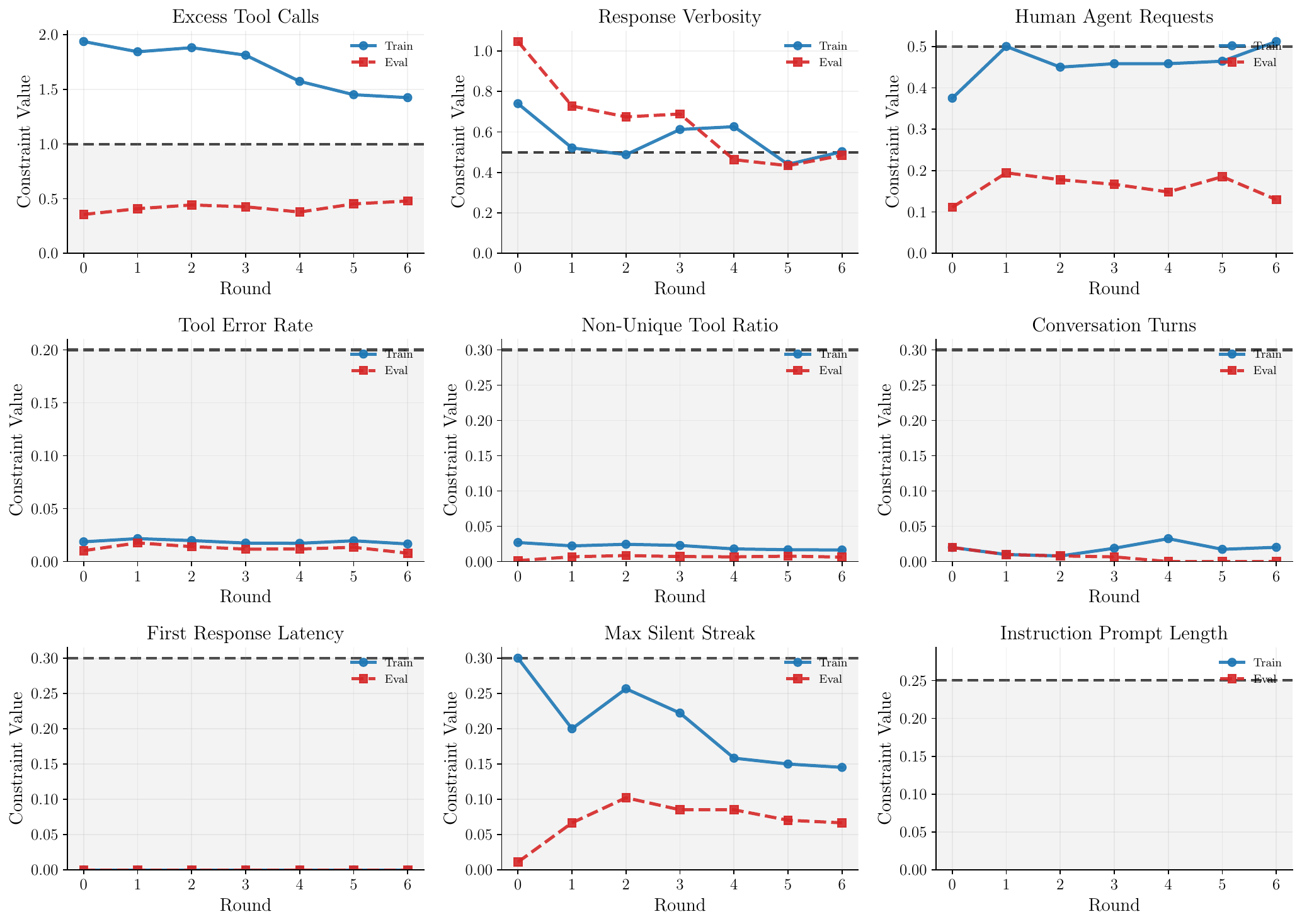}
  \caption{\textbf{Constraint dynamics with nine constraints on Airline.}
  Blue and orange curves show training and evaluation violation rates across
  optimization rounds. Violations of the turn and latency constraints remain near zero,
  whereas tool-use and verbosity violations persist and show larger gaps
  between training and evaluation.}
  \label{fig:9constraint_dynamics}
\end{figure*}

\begin{figure*}[t]
    \centering
    \includegraphics[width=0.98\textwidth]{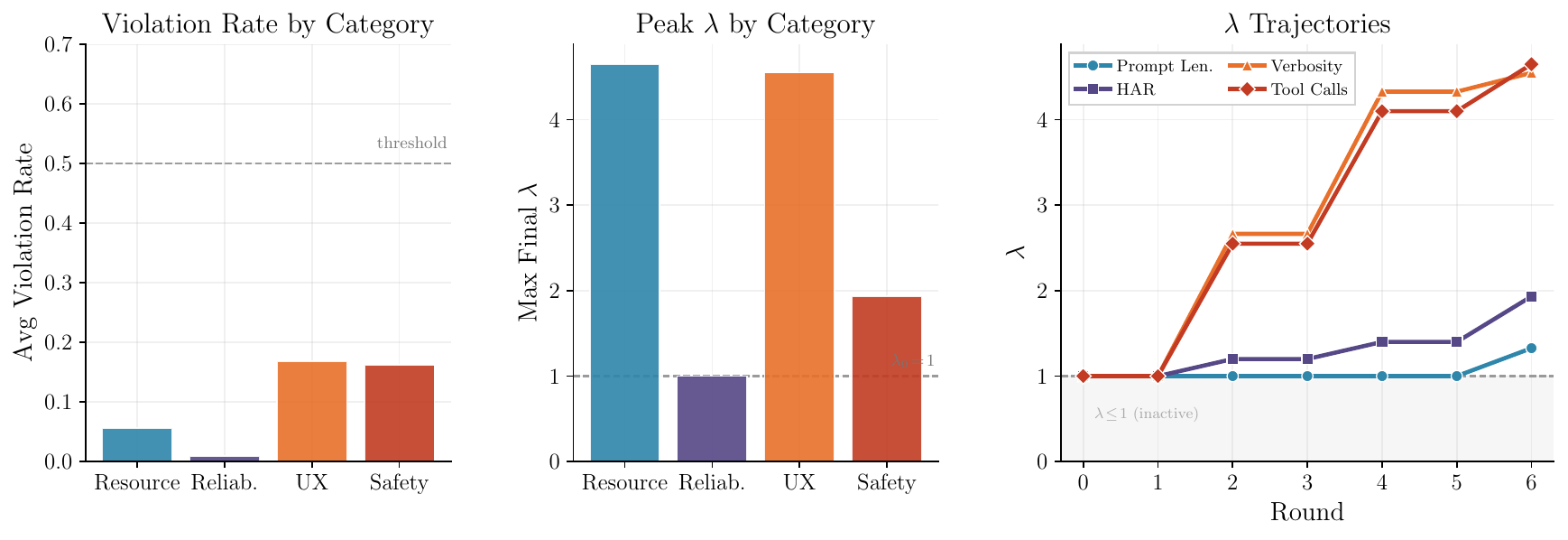}
    \caption{\textbf{Multiplier trajectories with nine Airline constraints.}
    Constraints with persistent violations accumulate larger multipliers,
    whereas the multipliers for constraints satisfied early remain stable.}
    \label{fig:9constraint_lambdas}
\end{figure*}

\FloatBarrier
\subsubsection{Model-Scale Ablation}
\label{app:slm_ablation}

We evaluate Qwen2.5 models from 1.5B to 14B parameters on GSM8K under the
length constraint (Figure~\ref{fig:slm}).

\begin{figure}[ht]
    \centering
  \includegraphics[width=0.98\textwidth]{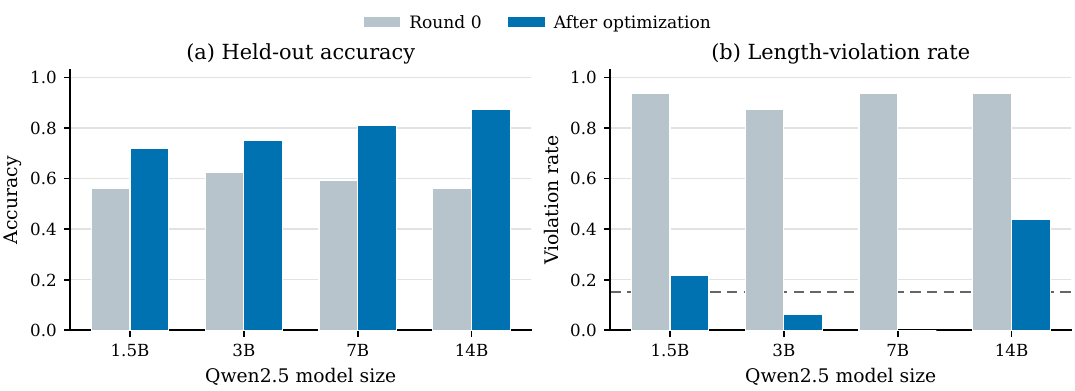}
    \caption{\textbf{Qwen2.5-7B attains the highest feasible accuracy across
    the tested model sizes.} Held-out accuracy before and after prompt
    optimization appears on the left, with the corresponding length-violation
    rates on the right. The dashed line marks the 15\% violation threshold;
    Qwen2.5-7B reaches 81.3\% accuracy with no observed violations.}
    \label{fig:slm}
\end{figure}

The accuracy--constraint trade-off is non-monotonic across the tested model
sizes. The 7B variant gives the highest feasible accuracy. The smaller models
have lower accuracy, whereas the larger variants exceed the length budget.
Across the tested Qwen2.5 family, model scale therefore changes the attainable
trade-off.

\FloatBarrier
\subsubsection{Progressive Constraint Addition}
\label{app:progressive}

In practice, new safety policies or formatting requirements can introduce
constraints incrementally. We therefore compare \emph{simultaneous}
optimization, which activates all constraints in the first round, with a
\emph{progressive} schedule that adds constraints while retaining the learned
state. On the chatbot task, progressive optimization first uses AdvBench safety
and character-counting constraints, then adds the length constraint with its
multiplier initialized to zero while retaining the prompt and existing
multipliers. The runs use the same evaluation examples and optimizer
hyperparameters. The progressive trace includes six initial two-constraint
rounds before length is added, whereas simultaneous optimization activates all
three constraints from the first of its six recorded rounds.

Figure~\ref{fig:progressive_vs_triple} shows
the recorded trajectories under the two schedules. The progressive run first
optimizes AdvBench safety and character counting, then adds length after round
6. Its accuracy recovers during the second stage as the length violation falls.
The simultaneous run activates all three constraints from its first round.

\begin{figure}[ht]
    \centering
  \includegraphics[width=0.95\textwidth]{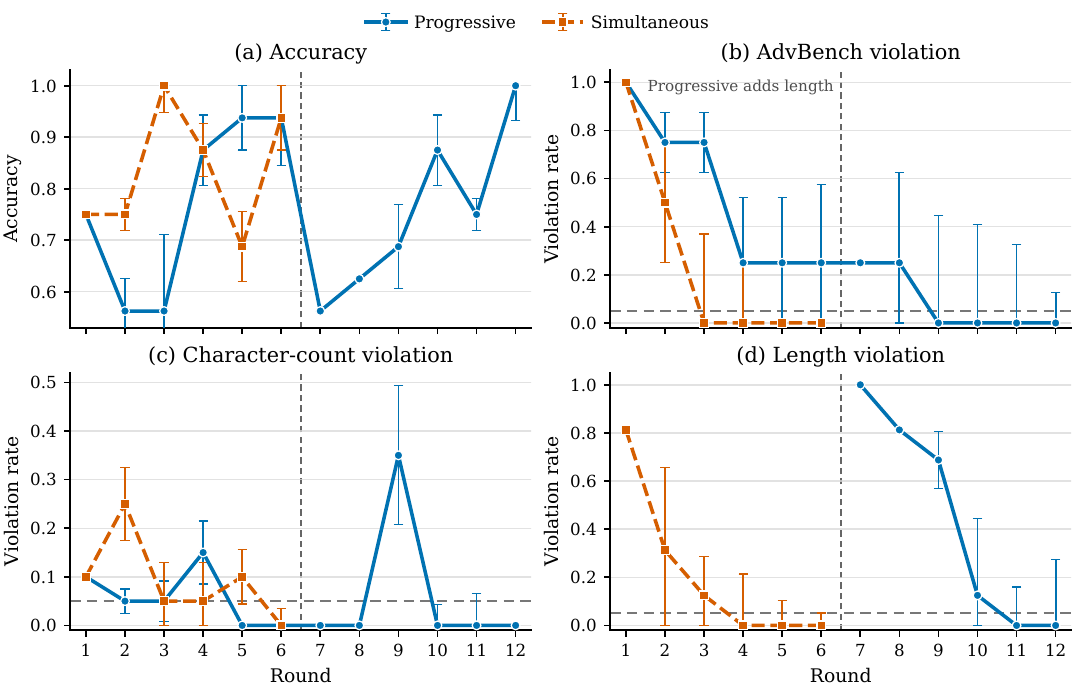}
\caption{\textbf{Progressive and simultaneous optimization follow different
trajectories.} Panels compare held-out accuracy and violation rates for the
12-round progressive run and the six recorded rounds of simultaneous
optimization. Progressive optimization uses AdvBench safety and character
counting in rounds 1--6 and adds length after round 6; simultaneous optimization
activates all three constraints from its first round. Error bars denote one
standard deviation, clipped to $[0,1]$. Horizontal dashed lines mark the
violation threshold $\tau=0.05$.}
    \label{fig:progressive_vs_triple}
\end{figure}

\paragraph{Three-constraint progressive chatbot trajectory.}
Figure~\ref{fig:progressive} shows a 12-round trajectory that begins with
AdvBench safety and character counting and later introduces response length.

\begin{figure}[ht]
    \centering
  \includegraphics[width=0.95\textwidth]{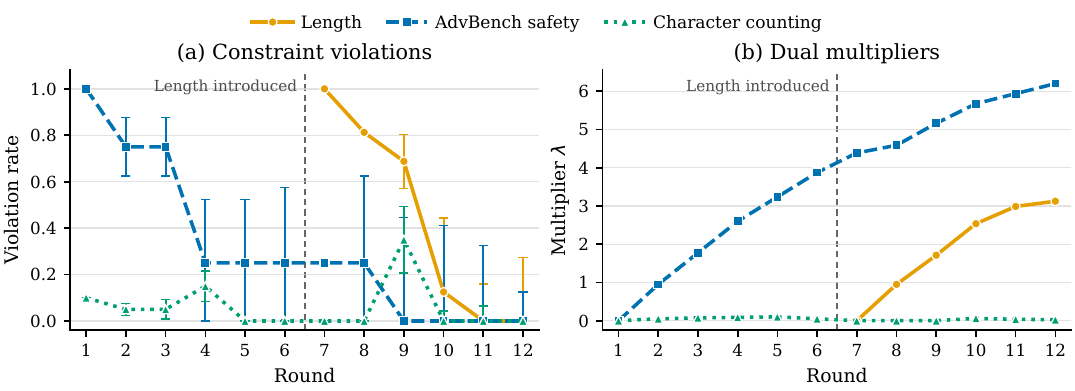}
    \caption{\textbf{Dual multipliers track active constraint violations.}
    Held-out violation rates appear on the left and the corresponding
    multiplier trajectories on the right. The vertical line
    marks the introduction of the length constraint after round 6. Error bars
    denote one standard deviation, clipped to $[0,1]$.}
    \label{fig:progressive}
\end{figure}

AdvBench requires sustained pressure: its multiplier grows as the violation
falls. After length is introduced, its multiplier rises until the length
violation reaches zero. Character counting remains near zero for most rounds
and its multiplier stays near zero.

\FloatBarrier
\subsubsection{Editor-Size Ablation}
\label{app:editor_size_sensitivity}
We additionally train Qwen3-0.6B and Qwen3-4B rewrite policies and compare them
with Qwen3-8B. Each run uses the same seed, a 30-step online budget, a frozen Qwen3-32B
task agent, and the same domain thresholds.
Figure~\ref{fig:editor_size_best_feasible} reports the best observed feasible
score for each size across the available feedback configurations. The best observed score generally improves
with editor size, although Retail is non-monotonic. Because this summary selects
both the feedback setting and prompt using held-out measurements, it is a
descriptive best-observed comparison rather than a controlled size ablation or
a model-selection estimate.

\begin{figure}[ht]
\centering
\includegraphics[width=0.76\textwidth]{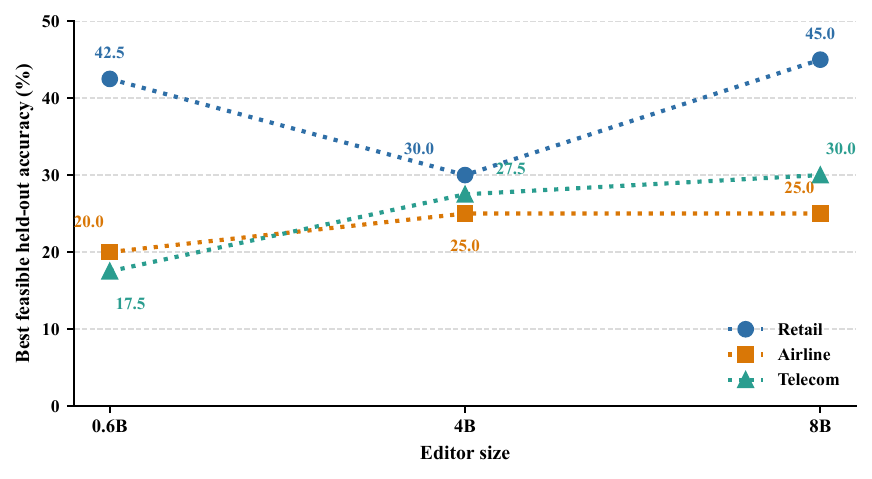}
\caption{\textbf{Best observed feasible score across editor sizes.} Dotted
lines compare Qwen3-0.6B, Qwen3-4B, and Qwen3-8B rewriters. All runs use a frozen Qwen3-32B task agent and the same user
simulator. The Airline 8B point uses the extended-context configuration. This is a
best-observed comparison across settings, not a single-setting size ablation.}
\label{fig:editor_size_best_feasible}
\end{figure}

\FloatBarrier
\subsubsection{Generalization and Transfer}
\label{app:generalization_transfer}

The following experiments test three distinct forms of generalization and
transfer. The coding-agent study applies the CAPO procedure to a different
agent setting, the task-cluster study evaluates a CAPO prompt under a shifted
task distribution, and the zero-shot study transfers the parameters of a
trained DCAPO rewriter across domains.

\paragraph{Coding-agent setting.}
\label{app:swebench}

In the coding-agent study, we optimize a GPT-5-mini SWE-agent prompt on
10 \textsc{SWE-bench} Lite instances and perform a final evaluation on 30
instances.
Resolve rate is the objective. Patch size is normalized by the
75th-percentile gold-patch size, tool actions by the 50-step agent budget, and
files touched by the 75th-percentile gold-patch count. MOPO is the only search
baseline because every candidate evaluation requires a repository checkout and
a complete API-based coding-agent rollout.

Table~\ref{tab:swebench} reports the recorded results under the listed
thresholds. All three methods resolve five issues; among them, CAPO records the
smallest patch, the fewest tool actions, and the fewest files touched.

\begin{table}[ht]
\caption{\textbf{CAPO matches the resolve rate while reducing all three
coding-agent costs.} These values underlie the coding-agent panel in
Figure~\ref{fig:transfer_diagnostics}. Bold marks the highest feasible
objective, and underlining marks the next distinct feasible objective; ties are
included. A row satisfies AllSat only if Patch, Tool, and Files each meet the
threshold listed in the first row.}
\label{tab:swebench}
\centering
\small
\setlength{\tabcolsep}{5pt}
\begin{tabular}{@{}lccccc@{}}
\toprule
\sametextbf{Method} & \sametextbf{Resolve\,$\uparrow$} & \sametextbf{Patch\,$\downarrow$} & \sametextbf{Tool\,$\downarrow$} & \sametextbf{Files\,$\downarrow$} & \sametextbf{AllSat} \\
\midrule
Threshold & --    & 1.2   & 0.4   & 1.7   & -- \\
Initial   & 0.167 & 1.298 & 0.423 & 1.867 & \ding{55} \\
MOPO      & 0.167 & 1.209 & 0.393 & 1.500 & \ding{55} \\
\capo{}   & \sametextbf{0.167} & 1.055 & 0.371 & 1.300 & \checkmark \\
\bottomrule
\end{tabular}
\end{table}

\FloatBarrier
\paragraph{Task-cluster shift.}
\label{app:task_cluster_shift}

We cluster Airline tasks in embedding space and select the training cluster
farthest from the evaluation-set centroid. CAPO meets all three thresholds on
the resulting split (Table~\ref{tab:ood_split}).

\begin{table}[ht]
\caption{\textbf{CAPO remains feasible under an embedding-defined Airline
task-cluster shift.} Objective, ToolEx, and HAR are fractions or ratios; PLen
is the system-prompt character count divided by 1,000. Bold marks the highest
feasible objective, and underlining marks the next distinct feasible objective;
ties are included.}
\label{tab:ood_split}
\centering
\scriptsize
\setlength{\tabcolsep}{5pt}
\begin{tabular}{@{}lcccc@{}}
\toprule
\sametextbf{Split} & \sametextbf{Objective\,$\uparrow$} & \sametextbf{ToolEx\,$\downarrow$} & \sametextbf{HAR\,$\downarrow$} & \sametextbf{PLen\,$\downarrow$} \\
\midrule
In-domain & \sametextbf{0.450} & 1.049 & 0.050 & 4.98 \\
Shifted   & \underline{0.400} & 1.000 & 0.250 & 4.91 \\
\bottomrule
\end{tabular}
\end{table}

\FloatBarrier
\paragraph{Zero-shot rewriter transfer.}
\label{app:zero_shot_rewriter_transfer}

We freeze a Telecom-trained rewriter and apply it to Retail and Airline, using
Qwen3-8B as the frozen task agent and Ministral-3-8B-Instruct as the frozen
user simulator. For each target, it receives the target-domain seed prompt and
sampled trajectories, generates four candidates in one inference-only round,
selects the highest-$J$ candidate on validation, and evaluates the parent and
selected rewrite on the held-out test split. There are no target-domain
gradient updates or iterative pool search.

\begin{table}[ht]
\caption{\textbf{Frozen-rewriter transfer lowers ToolEx without reaching
target-domain feasibility.} A Telecom-trained rewriter is transferred for one
inference-only round. No accuracy is bold because no row is feasible.}
\label{tab:zero_shot_rewriter_transfer}
\centering
\scriptsize
\setlength{\tabcolsep}{4.5pt}
\begin{tabular}{@{}llrrrrc@{}}
\toprule
\sametextbf{Domain} & \sametextbf{Prompt} & \sametextbf{Acc\,$\uparrow$} &
\sametextbf{ToolEx\,$\downarrow$} & \sametextbf{HAR\,$\downarrow$} &
\sametextbf{PLen\,$\downarrow$} & \sametextbf{AllSat} \\
\midrule
Retail & Threshold & -- & 60.0 & 30.0 & 5.00 & -- \\
Retail & Initial & 27.5 & 84.0 & 12.5 & 0.32 & \ding{55} \\
Retail & Frozen Telecom rewrite & 32.5 & 67.0 & 12.5 & 1.96 & \ding{55} \\
\midrule
Airline & Threshold & -- & 200.0 & 30.0 & 5.00 & -- \\
Airline & Initial & 10.0 & 329.0 & 15.0 & 0.32 & \ding{55} \\
Airline & Frozen Telecom rewrite & 10.0 & 220.0 & 10.0 & 3.04 & \ding{55} \\
\bottomrule
\end{tabular}
\end{table}

The frozen rewriter lowers ToolEx in both target domains without reducing
accuracy or pushing the other costs above their thresholds, but ToolEx remains
above its threshold. The transferred rewriter therefore makes a useful edit
but does not achieve feasibility without target-domain adaptation.

\FloatBarrier
\subsubsection{RL-Based Baselines}
\label{app:learned_editor_controls}

StablePrompt~\citep{kwonEtAlStablePrompt2024} trains a prompt-generation policy
with APPO. For this comparison, its fixed scalar reward assigns unit weight to
every constraint. \agentgrpoloose{} directly updates the task agent under
predeclared fixed weights. \dcapo{} instead freezes the task agent, trains the
trajectory- and dual-conditioned rewriter with group-relative advantages, and
adds high-scoring children to an evolving prompt pool. Neither RL-based baseline updates
constraint-specific multipliers online. Table~\ref{tab:rl_prompt_controls} uses
Qwen3-8B as the task agent.

\begin{table}[ht]
\caption{\textbf{DCAPO is feasible in all three Qwen3-8B domains.}
StablePrompt and Agent-GRPO provide prompt-policy and task-policy comparisons,
respectively; \dcapo{} uses complete trajectories ($D{=}1$). Agent-GRPO uses the loose fixed penalty vector
$(\lambda_{\mathrm{ToolEx}},\lambda_{\mathrm{HAR}})=(1,2.5)$. AllSat
holds only when every cost meets its domain threshold. Within each domain,
bold marks the highest feasible accuracy, and underlining marks the next
distinct feasible accuracy; ties are included.}
\label{tab:rl_prompt_controls}
\centering
\scriptsize
\setlength{\tabcolsep}{4.2pt}
\renewcommand{\arraystretch}{1.02}
\begin{tabular}{@{}llrrrrc@{}}
\toprule
\sametextbf{Domain} & \sametextbf{Method} & \sametextbf{Acc\,$\uparrow$} &
\sametextbf{ToolEx\,$\downarrow$} & \sametextbf{HAR\,$\downarrow$} &
\sametextbf{PLen\,$\downarrow$} & \sametextbf{AllSat} \\
\midrule
Airline & Threshold & -- & 200.0 & 30.0 & 5.00 & -- \\
Airline & StablePrompt & 15.0 & 436.4 & 5.0 & 0.63 & \ding{55} \\
Airline & \agentgrpoloose & \sametextbf{30.0} & -19.7 & 20.0 & 0.30 & \checkmark \\
Airline & \dcapo{} & \sametextbf{30.0} & -78.7 & 0.0 & 3.68 & \checkmark \\
\midrule
Telecom & Threshold & -- & 40.0 & 60.0 & 5.00 & -- \\
Telecom & StablePrompt & 0.0 & 81.9 & 2.5 & 0.71 & \ding{55} \\
Telecom & \agentgrpoloose & 0.0 & 71.0 & 5.0 & 0.30 & \ding{55} \\
Telecom & \dcapo{} & \sametextbf{12.5} & 9.9 & 12.5 & 4.71 & \checkmark \\
\midrule
Retail & Threshold & -- & 60.0 & 30.0 & 5.00 & -- \\
Retail & StablePrompt & \sametextbf{22.5} & 31.3 & 5.0 & 0.57 & \checkmark \\
Retail & \agentgrpoloose & 25.0 & 109.1 & 20.0 & 0.30 & \ding{55} \\
Retail & \dcapo{} & \sametextbf{22.5} & 5.9 & 5.0 & 2.92 & \checkmark \\
\bottomrule
\end{tabular}
\end{table}

DCAPO trained on complete trajectories satisfies all constraints in all three
domains. It ties the highest feasible accuracy on Airline and Retail and is the only feasible
method on Telecom. This comparison separates learning a constraint-aware
rewriter within an adaptive pool from RL prompt generation or direct
task-policy training under a fixed weighted objective.

\FloatBarrier
\subsection{Analysis}
\label{app:analysis}

This subsection examines optimization dynamics, robustness, sensitivity, and
the alignment between learned rewrites and estimated ascent directions.

\subsubsection{Training Dynamics}
\label{app:training_dynamics}
\label{app:trajectories}

\paragraph{CAPO trajectories.}
Figure~\ref{fig:chatbot_capo} shows the optimization trajectory on the chatbot
GSM8K setting. CAPO reaches the feasible region in fewer rounds because a
persistent violation increases its multiplier and strengthens its influence on
the next prompt update. After all constraints are satisfied, stable or
decreasing multipliers allow later updates to improve accuracy without losing
feasibility.

\begin{figure}[ht]
    \centering
  \includegraphics[width=0.85\textwidth]{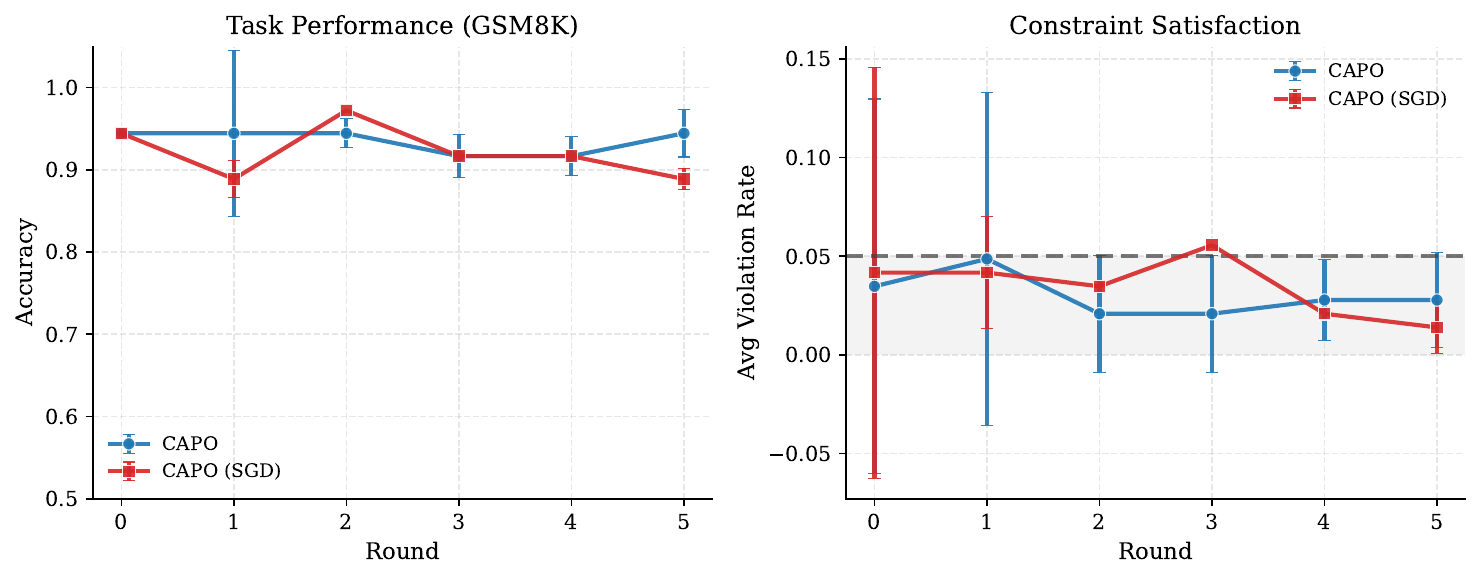}
    \caption{\textbf{CAPO reaches chatbot feasibility sooner and finishes with
    higher task accuracy.} The trajectories track constrained prompt
    optimization across rounds.}
    \label{fig:chatbot_capo}
\end{figure}

\paragraph{DCAPO dynamics.}
\label{app:dcapo_training_dynamics}
Figure~\ref{fig:learned_editor_dynamics} compares feedback-conditioned training
with a feedback-free control as prompts enter the pool. The feedback-free control
has higher Pass@1 in this run but later violates the HAR threshold. The
feedback-conditioned rewriter trades some task reward for lower ToolEx and HAR
and satisfies both thresholds after roughly 80 accepted prompts. This
figure isolates the presence of behavioral feedback; Table
\ref{tab:trainable_rewriter_main} separately compares trajectory and summarized
feedback at final evaluation.

\begin{figure}[ht]
\centering
\includegraphics[width=0.90\textwidth]{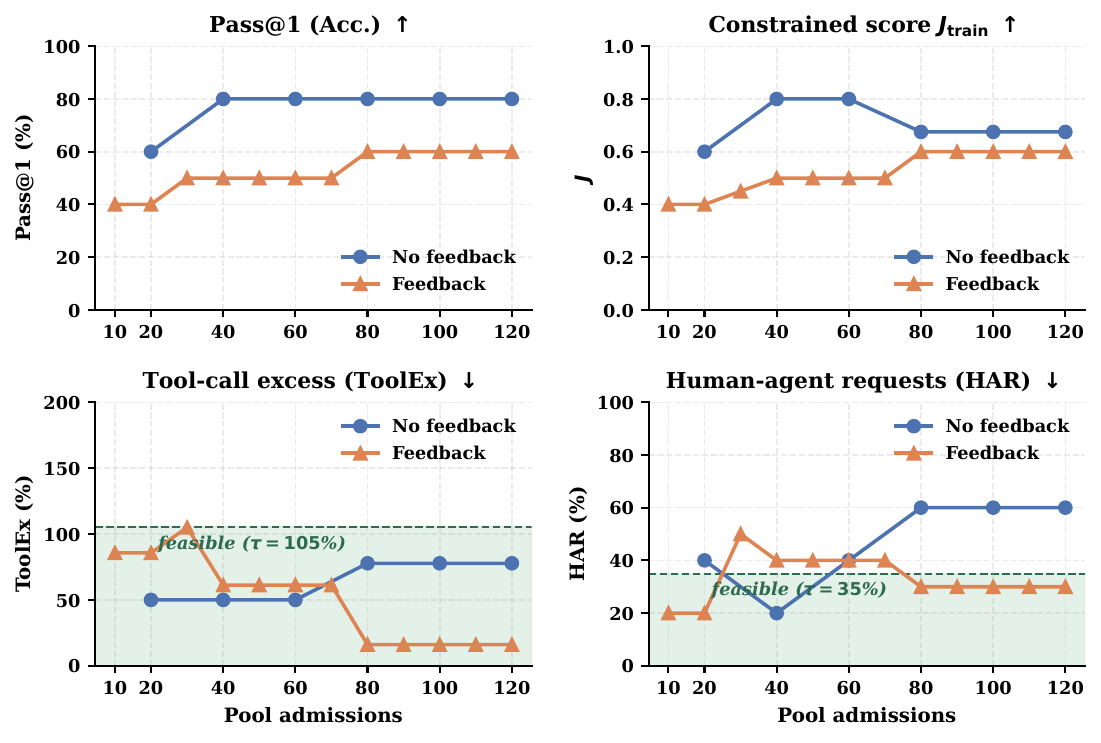}
\caption{\textbf{DCAPO training dynamics on Airline.} Feedback-conditioned
training is compared with a feedback-free control as prompts enter the pool.
The feedback-conditioned run gives up some Pass@1 but eventually satisfies the
ToolEx and HAR thresholds; shading marks the feasible region for each metric.}
\label{fig:learned_editor_dynamics}
\end{figure}

\FloatBarrier
\subsubsection{Robustness and Sensitivity}
\label{app:robustness_full}

\paragraph{Standard errors.}
\label{app:standard_errors}

\noindent We report the mean $\pm$ standard error (SE) over evaluation
examples, where $\mathrm{SE}=s/\sqrt{n}$ and $s$ is the sample standard
deviation with Bessel's correction (\texttt{ddof=1}). We do not use
bootstrapping. For binary metrics, including accuracy and violation indicators,
we compute SE from the underlying $\{0,1\}$ outcomes. The tables report
Acc, HAR, and ToolEx as fractions or ratios; multiplying these values by 100
gives percentages. PLen follows the definition in Appendix~\ref{app:plen_reporting}
and is deterministic for a fixed prompt, so its per-example SE is zero. Tables
\ref{tab:agent_se_gpt5_airline}--\ref{tab:agent_se_gpt5_telecom} correspond to
the GPT-5-mini results in Table~\ref{tab:agent_results}, and Tables
\ref{tab:agent_se_ministral_airline}--\ref{tab:agent_se_ministral_telecom}
correspond to the Ministral-8B results in
Table~\ref{tab:agent_results_ministral}.
In Tables~\ref{tab:agent_se_gpt5_airline}--\ref{tab:agent_se_ministral_telecom},
bold marks the highest feasible accuracy mean, and underlining marks the next
distinct feasible accuracy mean; ties are included.

\begin{table}[t]
\caption{\textbf{Per-example uncertainty on Airline with GPT-5-mini.} Values
are means $\pm$ SE for the \textsc{tau2-bench} agent metrics. The accuracy in
Table~\ref{tab:agent_results} uses a different number of evaluation examples;
the denominator here matches the corresponding run.}
\label{tab:agent_se_gpt5_airline}
\centering
\scriptsize
\setlength{\tabcolsep}{3pt}
\renewcommand{\arraystretch}{1.05}
\begin{tabular}{@{}lcccc@{}}
\toprule
Method & Acc. & HAR & ToolEx & PLen \\
\midrule
Init & 0.2500$\pm$0.0793 & 0.3500$\pm$0.0894 & 1.0500$\pm$0.2866 & 0.3060$\pm$0.0000 \\
APO & 0.2500$\pm$0.0793 & 0.1000$\pm$0.0488 & 1.4100$\pm$0.6521 & 2.1808$\pm$0.0000 \\
MOPO & 0.4500$\pm$0.0941 & 0.2500$\pm$0.0793 & 1.5574$\pm$0.0488 & 4.6780$\pm$0.0000 \\
GEPA & 0.3000$\pm$0.0851 & 0.1000$\pm$0.0488 & 0.9200$\pm$0.0924 & 4.2000$\pm$0.0000 \\
\capoea & \sametextbf{0.5000}$\pm$0.0947 & 0.2000$\pm$0.0718 & 1.0000$\pm$0.0851 & 4.6100$\pm$0.0000 \\
\caposgd & \underline{0.4500}$\pm$0.0906 & 0.0500$\pm$0.0500 & 1.0492$\pm$0.0918 & 4.9840$\pm$0.0000 \\
\bottomrule
\end{tabular}
\end{table}

\begin{table}[t]
\caption{\textbf{Per-example uncertainty on Retail with GPT-5-mini.} Values
are means $\pm$ SE for the \textsc{tau2-bench} agent metrics.}
\label{tab:agent_se_gpt5_retail}
\centering
\scriptsize
\setlength{\tabcolsep}{3pt}
\renewcommand{\arraystretch}{1.05}
\begin{tabular}{@{}lcccc@{}}
\toprule
Method & Acc. & HAR & ToolEx & PLen \\
\midrule
Init & 0.5000$\pm$0.0601 & 0.1250$\pm$0.0330 & 0.5270$\pm$0.0550 & 0.3040$\pm$0.0000 \\
APO & 0.5750$\pm$0.0592 & 0.0750$\pm$0.0422 & 0.5750$\pm$0.0564 & 0.3040$\pm$0.0000 \\
MOPO & 0.5500$\pm$0.0597 & 0.0750$\pm$0.0422 & 0.5590$\pm$0.0564 & 5.8680$\pm$0.0000 \\
GEPA & 0.5000$\pm$0.0601 & 0.0500$\pm$0.0349 & 0.5160$\pm$0.0584 & 4.5160$\pm$0.0000 \\
\capoea & \underline{0.5000}$\pm$0.0584 & 0.1250$\pm$0.0480 & 0.4950$\pm$0.0534 & 4.5120$\pm$0.0000 \\
\caposgd & \sametextbf{0.5500}$\pm$0.0597 & 0.1250$\pm$0.0330 & 0.4409$\pm$0.0592 & 4.7000$\pm$0.0000 \\
\bottomrule
\end{tabular}
\end{table}

\begin{table}[t]
\caption{\textbf{Per-example uncertainty on Telecom with GPT-5-mini.} Values
are means $\pm$ SE for the \textsc{tau2-bench} agent metrics.}
\label{tab:agent_se_gpt5_telecom}
\centering
\scriptsize
\setlength{\tabcolsep}{3pt}
\renewcommand{\arraystretch}{1.05}
\begin{tabular}{@{}lcccc@{}}
\toprule
Method & Acc. & HAR & ToolEx & PLen \\
\midrule
Init & 0.4000$\pm$0.0441 & 0.8000$\pm$0.0534 & 3.0990$\pm$0.0250 & 0.3060$\pm$0.0000 \\
APO & \underline{0.3750}$\pm$0.0493 & 0.6250$\pm$0.0775 & -0.0556$\pm$0.0250 & 4.8980$\pm$0.0000 \\
MOPO & 0.4500$\pm$0.0408 & 0.6500$\pm$0.0764 & 0.7346$\pm$0.0515 & 5.0160$\pm$0.0000 \\
GEPA & 0.3250$\pm$0.0480 & 0.8000$\pm$0.0641 & 4.0389$\pm$0.4467 & 0.3060$\pm$0.0000 \\
\capoea & 0.4750$\pm$0.0469 & 0.7000$\pm$0.0734 & -0.4815$\pm$0.0408 & 4.7352$\pm$0.0000 \\
\caposgd & \sametextbf{0.5000}$\pm$0.0408 & 0.6250$\pm$0.0575 & 0.3395$\pm$0.0575 & 4.2888$\pm$0.0000 \\
\bottomrule
\end{tabular}
\end{table}

\begin{table}[t]
\caption{\textbf{Per-example uncertainty on Airline with Ministral-8B.} Values
are means $\pm$ SE for the \textsc{tau2-bench} agent metrics.}
\label{tab:agent_se_ministral_airline}
\centering
\scriptsize
\setlength{\tabcolsep}{3pt}
\renewcommand{\arraystretch}{1.05}
\begin{tabular}{@{}lcccc@{}}
\toprule
Method & Acc. & HAR & ToolEx & PLen \\
\midrule
Init & 0.1500$\pm$0.0619 & 0.1000$\pm$0.0488 & 2.1967$\pm$0.0488 & 0.3060$\pm$0.0000 \\
APO & \sametextbf{0.2000}$\pm$0.0718 & 0.0500$\pm$0.0500 & 1.8852$\pm$0.0500 & 4.2440$\pm$0.0000 \\
MOPO & 0.1500$\pm$0.0619 & 0.2500$\pm$0.0000 & 2.3770$\pm$0.0000 & 4.7160$\pm$0.0000 \\
GEPA & 0.1000$\pm$0.0488 & 0.2500$\pm$0.0793 & 2.3930$\pm$0.0500 & 5.5400$\pm$0.0000 \\
\caposgd & \sametextbf{0.2000}$\pm$0.0718 & 0.0500$\pm$0.0500 & 1.7705$\pm$0.0488 & 4.1740$\pm$0.0000 \\
\bottomrule
\end{tabular}
\end{table}

\begin{table}[t]
\caption{\textbf{Per-example uncertainty on Retail with Ministral-8B.} Values
are means $\pm$ SE for the \textsc{tau2-bench} agent metrics.}
\label{tab:agent_se_ministral_retail}
\centering
\scriptsize
\setlength{\tabcolsep}{3pt}
\renewcommand{\arraystretch}{1.05}
\begin{tabular}{@{}lcccc@{}}
\toprule
Method & Acc. & HAR & ToolEx & PLen \\
\midrule
Init & 0.3000$\pm$0.0534 & 0.2000$\pm$0.0441 & 0.6020$\pm$0.0575 & 0.3040$\pm$0.0000 \\
APO & 0.3000$\pm$0.0534 & 0.2000$\pm$0.0441 & 0.6020$\pm$0.0575 & 0.3040$\pm$0.0000 \\
MOPO & 0.4000$\pm$0.0584 & 0.1500$\pm$0.0372 & 0.7312$\pm$0.0564 & 3.8548$\pm$0.0000 \\
GEPA & 0.3750$\pm$0.0575 & 0.1500$\pm$0.0372 & 0.8600$\pm$0.0493 & 4.4080$\pm$0.0000 \\
\caposgd & \sametextbf{0.3250}$\pm$0.0550 & 0.2000$\pm$0.0441 & 0.5540$\pm$0.0575 & 2.6800$\pm$0.0000 \\
\bottomrule
\end{tabular}
\end{table}

\begin{table}[t]
\caption{\textbf{Per-example uncertainty on Telecom with Ministral-8B.} Values
are means $\pm$ SE for the \textsc{tau2-bench} agent metrics.}
\label{tab:agent_se_ministral_telecom}
\centering
\scriptsize
\setlength{\tabcolsep}{3pt}
\renewcommand{\arraystretch}{1.05}
\begin{tabular}{@{}lcccc@{}}
\toprule
Method & Acc. & HAR & ToolEx & PLen \\
\midrule
Init & 0.2000$\pm$0.0441 & 0.6250$\pm$0.0600 & 0.1667$\pm$0.0584 & 0.3060$\pm$0.0000 \\
APO & 0.2000$\pm$0.0441 & 0.6250$\pm$0.0600 & 0.1667$\pm$0.0584 & 0.3060$\pm$0.0000 \\
MOPO & 0.1000$\pm$0.0480 & 0.4750$\pm$0.0600 & 0.4012$\pm$0.0592 & 4.8880$\pm$0.0000 \\
GEPA & 0.2250$\pm$0.0469 & 0.6500$\pm$0.0564 & 0.7530$\pm$0.0408 & 0.3240$\pm$0.0000 \\
\caposgd & \sametextbf{0.2500}$\pm$0.0493 & 0.3500$\pm$0.0564 & 0.3827$\pm$0.0592 & 4.6588$\pm$0.0000 \\
\bottomrule
\end{tabular}
\end{table}

\paragraph{Feedback and threshold sensitivity.}

\subparagraph{Noisy dual feedback.}
Table~\ref{tab:noise_robustness} reports the main evaluator-noise results. We
inject $\mathcal{N}(0,\sigma^2)$ noise into the constraint estimates used for
feasibility gating and multiplier updates. The numerical Lagrangian used for
candidate ranking retains the unperturbed continuous cost, and the final
evaluation is also unperturbed. This setup isolates sensitivity to noise in the
multiplier updates. With nonzero noise, Airline and Telecom violate at least one
threshold despite pool averaging.

\begin{table}[ht]
\caption{\textbf{Mild noise preserves Retail feasibility but breaks Airline
and Telecom feasibility.} These values underlie
Figure~\ref{fig:robustness_noise} (\caposgd{}, GPT-5-mini). Gaussian noise perturbs feasibility gating and dual
updates; candidate ranking and final evaluation use unperturbed scores. The
objective and costs are reported as fractions or ratios. Within each domain,
bold marks the highest feasible objective, and underlining marks the next
distinct feasible objective; ties are included.}
\label{tab:noise_robustness}
\centering
\scriptsize
\setlength{\tabcolsep}{4pt}
\begin{tabular}{@{}llccc@{}}
\toprule
\sametextbf{Domain} & \sametextbf{$\sigma$} & \sametextbf{Objective\,$\uparrow$} & \sametextbf{ToolEx\,$\downarrow$} & \sametextbf{HAR\,$\downarrow$} \\
\midrule
Airline & 0 (no noise) & \sametextbf{0.450} & 1.049 & 0.050 \\
Airline & 0.5 & 0.500 & 1.230 & 0.250 \\
Airline & 2.0 & 0.450 & 1.508 & 0.150 \\
Telecom & 0 (no noise) & \sametextbf{0.500} & 0.340 & 0.625 \\
Telecom & 0.5 & 0.425 & 2.265 & 0.750 \\
Telecom & 2.0 & 0.325 & 3.000 & 0.625 \\
Retail  & 0 (no noise) & \underline{0.550} & 0.441 & 0.125 \\
Retail  & 0.5 & \sametextbf{0.575} & 0.462 & 0.100 \\
Retail  & 2.0 & 0.525 & 0.618 & 0.100 \\
\bottomrule
\end{tabular}
\end{table}

\subparagraph{Dual learning rate $\beta$.} Across Airline and Telecom,
$\beta{=}4$ is the only tested value that is feasible in both domains and
attains the best feasible objective in each sweep. Other rates
either miss Airline's ToolEx budget or yield a weaker Telecom objective. We use
this rate as a shared default; the remaining sweep values show that the
feasible objective remains sensitive to the dual learning rate.

\begin{table}[ht]
\caption{\textbf{$\boldsymbol{\beta=4}$ is the only tested dual rate feasible
in both Airline and Telecom.} Results use \caposgd{} with GPT-5-mini and three
constraints. Objective, ToolEx, and HAR are fractions or
ratios; PLen is the system-prompt character count divided by 1,000. Within each
domain, bold marks the highest feasible objective, and underlining marks the
next distinct feasible objective; ties are included.}
\label{tab:lambda_lr_sweep}
\centering
\scriptsize
\setlength{\tabcolsep}{4pt}
\begin{tabular}{@{}l c cccc@{}}
\toprule
\sametextbf{Domain} & \sametextbf{$\beta$} & \sametextbf{Objective\,$\uparrow$} & \sametextbf{ToolEx\,$\downarrow$} & \sametextbf{HAR\,$\downarrow$} & \sametextbf{PLen\,$\downarrow$} \\
\midrule
Airline & 0.5 & 0.400 & 1.279 & 0.200 & 3.33 \\
Airline & 2   & 0.400 & 1.279 & 0.200 & 3.33 \\
Airline & 4   & \sametextbf{0.450} & 1.049 & 0.050 & 4.98 \\
Airline & 8   & 0.400 & 1.279 & 0.200 & 3.33 \\
Telecom & 0.5 & 0.375 & 1.235 & 0.700 & 3.69 \\
Telecom & 2   & 0.425 & 2.105 & 0.800 & 0.31 \\
Telecom & 4   & \sametextbf{0.500} & 0.340 & 0.625 & 4.29 \\
Telecom & 8   & \underline{0.475} & 0.290 & 0.625 & 3.98 \\
\bottomrule
\end{tabular}
\end{table}

\subparagraph{ToolEx threshold.} Table~\ref{tab:tol_ablation} shows
that tightening the Airline ToolEx tolerance produces larger final
multipliers. At the strictest budget, $0.5$, the multiplier reaches its cap,
but no prompt found during search meets the threshold. Figure
\ref{fig:threshold_ablation} visualizes the corresponding trajectories with
GPT-5.1 as the task agent.

\begin{table}[ht]
\caption{\textbf{Tighter Airline ToolEx budgets produce larger final
multipliers.} Results use \caposgd{} with GPT-5-mini. At the strictest
threshold, $0.5$, the ToolEx
multiplier reaches its cap, but the final prompt remains infeasible. Bold marks
the highest objective among rows feasible under their stated tolerance, and
underlining marks the next distinct feasible objective; ties are included.}
\label{tab:tol_ablation}
\centering
\scriptsize
\setlength{\tabcolsep}{4pt}
\begin{tabular}{@{}c c ccc@{}}
\toprule
\sametextbf{ToolEx tol.} & \sametextbf{Final $\lambda_{\mathrm{ToolEx}}$} & \sametextbf{ToolEx (test)\,$\downarrow$} & \sametextbf{Objective (test)\,$\uparrow$} & \sametextbf{HAR (test)\,$\downarrow$} \\
\midrule
2.0 (loose)        & 1.0  & 1.279 & \sametextbf{0.550} & 0.300 \\
1.0 (moderate)     & 2.5  & 0.803 & \underline{0.500} & 0.200 \\
0.5 (unattained)   & 10.0 & 0.918 & 0.550 & 0.950 \\
\bottomrule
\end{tabular}
\end{table}

\begin{figure}[ht]
  \centering
  \includegraphics[width=\columnwidth]{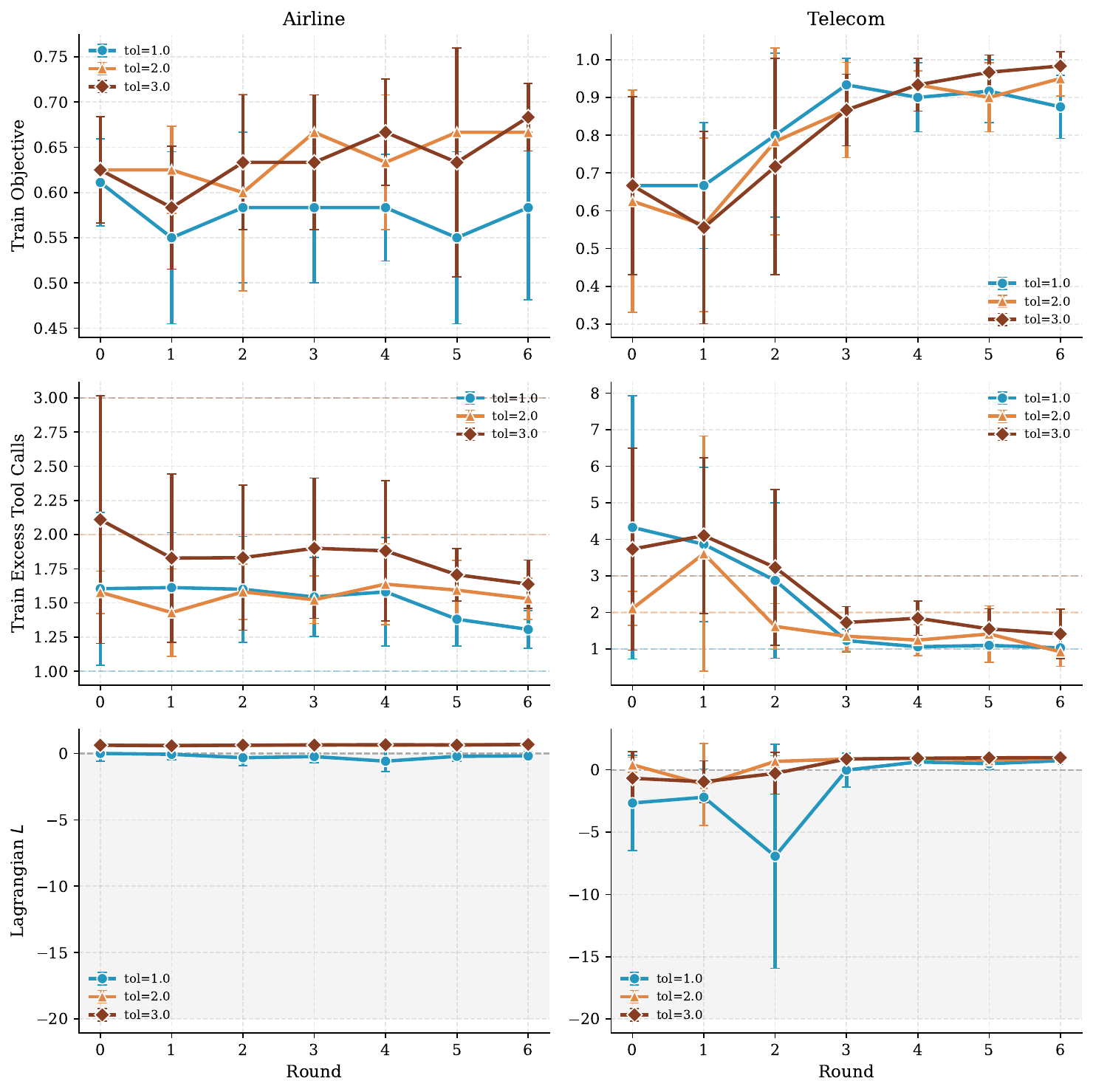}
  \caption{\textbf{Tighter ToolEx tolerances suppress violations but delay
  objective gains.} Rows trace the training objective, excess tool calls, and
  Lagrangian $\mathcal{L}$ across optimization rounds for three tolerance
  levels on Airline and Telecom with \caposgd{} and GPT-5.1. Dashed lines mark
  the constraint thresholds. When no evaluated
  prompt meets a threshold, search returns the prompt with the highest
  Lagrangian score.}
  \label{fig:threshold_ablation}
\end{figure}

\paragraph{Mechanism ablations.}
\label{sec:ablations}

The following ablations test whether one-shot sampling, static constraint
injection, or a different search mechanism explains \capo{}'s feasibility.

\subparagraph{Iterative residual feedback.}
One-shot test-time sampling ($k\in\{4,8,16\}$) and direct constraint injection
both violate at least one threshold (Table~\ref{tab:ablations_alt}). Sampling
finds strong prompts without correcting the binding constraint, whereas static
injection can shift the violation to another constraint. Merely showing the optimizer the budgets is therefore
not equivalent to updating weights from measured residuals.

\subparagraph{Evolutionary search.}
EvoPrompt-GA~\citep{guoConnectingLargeLanguage2023} replaces the rewrite step
with tournament selection and crossover under a fixed Lagrangian fitness. It
also violates a threshold (Table~\ref{tab:ablations_alt}), indicating that \capo{}'s
gain is not explained by generic population search alone.

\subparagraph{Airline mechanism ablations.}
Table~\ref{tab:ablations_alt} reports these comparisons on Airline.

\begin{table}[H]
\caption{\textbf{One-shot sampling, static budget injection, and generic
evolutionary search do not reproduce CAPO's feasibility on Airline.} All rows
use the thresholds shown in the first row; lower is better for HAR and ToolEx.
Bold marks the highest feasible accuracy, and underlining marks the next
distinct feasible accuracy; ties are included.}
\label{tab:ablations_alt}
\centering
\scriptsize
\setlength{\tabcolsep}{4pt}
\renewcommand{\arraystretch}{1.02}
\begin{tabular}{@{}llrrrc@{}}
\toprule
\sametextbf{Design axis} & \sametextbf{Method} & \sametextbf{Acc\,$\uparrow$} & \sametextbf{HAR\,$\downarrow$} & \sametextbf{ToolEx\,$\downarrow$} & \sametextbf{AllSat} \\
\midrule
\multicolumn{2}{@{}l}{\sametextbf{Threshold}} & -- & 35 & 105 & -- \\
\midrule
Constraint feedback & TT sampling ($k{=}4$)  & 35.0 & 45.0 & 88.5  & \ding{55} \\
Constraint feedback & TT sampling ($k{=}8$)  & 35.0 & 45.0 & 88.5  & \ding{55} \\
Constraint feedback & TT sampling ($k{=}16$) & 50.0 & 15.0 & 127.9 & \ding{55} \\
Constraint feedback & Constraint-Injected     & 25.0 & 15.0 & 206.6 & \ding{55} \\
\midrule
Search mechanism & EvoPrompt-GA                & 45.0 & 30.0 & 119.7 & \ding{55} \\
\midrule
Ours & \capo{}     & \sametextbf{45.0} & 5.0 & 104.9 & \checkmark \\
\bottomrule
\end{tabular}
\end{table}

\subparagraph{Additional Telecom and Retail ablations.}
Table~\ref{tab:ablations_alt_retail_telecom} reports the Telecom and Retail counterparts to the Airline ablations in Table~\ref{tab:ablations_alt}. All numbers use the standard thresholds from Table~\ref{tab:agent_results}.

\begin{table}[ht]
\caption{\textbf{CAPO attains the highest feasible accuracy in the Telecom and
Retail ablations.} Results use the standard thresholds for the GPT-5-mini
target. For Telecom, AllSat$_{70}$ additionally applies the stricter
ToolEx\,$\le$\,70 threshold; only \capo{} remains feasible. Dashes mark Retail
rows to which this alternate threshold does not apply. The Class column groups
rows by comparison type: (i) module ablation, (ii) non-iterative baseline, and
(iii) alternative optimization method. Within each domain, bold marks the
highest feasible accuracy, and underlining marks the next distinct feasible
accuracy; ties are included.}
\label{tab:ablations_alt_retail_telecom}
\label{tab:telecom_relaxed_sensitivity}
\centering
\scriptsize
\setlength{\tabcolsep}{4pt}
\renewcommand{\arraystretch}{1.08}
\begin{tabular}{@{}cllrrrrcc@{}}
\toprule
\sametextbf{Class} & \sametextbf{Domain} & \sametextbf{Method} & \sametextbf{Acc\,$\uparrow$} & \sametextbf{HAR\,$\downarrow$} & \sametextbf{ToolEx\,$\downarrow$} & \sametextbf{PLen\,$\downarrow$} & \sametextbf{AllSat} & \sametextbf{AllSat$_{70}$} \\
\midrule
(ii) & Telecom & Constraint-Injected     & 35.0 & 75.0 & 389.5 & --     & \ding{55} & \ding{55} \\
(i) & Telecom & Count-based             & \underline{22.5} & 37.5 & 208.6 & 4.24 & \checkmark & \ding{55} \\
    & Telecom & \capo{}  & \sametextbf{50.0} & 62.5 & 34.0  & 4.29 & \checkmark & \checkmark \\
\midrule
(ii) & Retail  & Constraint-Injected     & 55.0 &  5.0 &  57.5 & --     & \ding{55} & -- \\
(iii) & Retail  & EvoPrompt-GA            & 67.5 & 10.0 &  62.4 & 0.31 & \ding{55} & -- \\
(i) & Retail  & Count-based             & \underline{42.5} & 15.0 &  46.2 & 3.42 & \checkmark & -- \\
    & Retail  & \capo{}    & \sametextbf{55.0} & 12.5 & 44.1 & 4.70 & \checkmark & -- \\
\bottomrule
\end{tabular}
\end{table}

\subparagraph{Telecom ToolEx threshold sensitivity.}
While Table~\ref{tab:agent_results} uses ToolEx\,$\le$\,250 on Telecom, the
AllSat$_{70}$ column of Table~\ref{tab:telecom_relaxed_sensitivity} applies the
stricter ToolEx\,$\le$\,70 threshold to the same measurements. Only \capo{}
satisfies all constraints under this stricter threshold. This is
consistent with the main result: multipliers scaled by violation size continue
to drive feasibility under tightened budgets, while binary or budget-injection
variants cannot.

\FloatBarrier
\subsubsection{Rewrite Alignment}
\label{app:rewrite_alignment}

Assumption B1 predicts that, up to approximation error, the displacement of a
parent--child rewrite aligns in expectation with a surrogate ascent direction.
To test this prediction on the trained \dcapo{} rewriter, we use
$z_t$ as the editor's last-token hidden state. Because child prompts are
discrete samples, we construct the REINFORCE score-function surrogate
\begin{equation}
\label{eq:reinforce_alignment_surrogate}
\mathcal L_{\mathrm{RF}}
=\sum_e (J_e-b_t)\log \pi_{\theta_t}
  (p_e^+\mid z_t,\mathrm{ctx}_e),
\end{equation}
where $J_e$ is the realized Lagrangian score, $b_t$ is a baseline, and
$\mathrm{ctx}_e$ contains the parent prompt, sampled trajectories, and current
multipliers. Differentiating the same surrogate with respect to $z_t$ or the
LoRA parameters $\theta_t$ yields $\hat g_z$ or $\hat g_\theta$. We compare
$\hat g_z$ with $d_t=z_t^+-z_t$, compare $\hat g_\theta$ with the realized
update $\Delta\theta=\theta_{t+1}-\theta_t$, and evaluate each child and parent
under the same current multiplier vector to obtain $\Delta J$.

\begin{table}[ht]
\caption{\textbf{DCAPO rewrites align with estimated ascent directions.}
Values are means over the collected parent--child rewrites. A positive mean
indicates agreement with the direction predicted by Assumption B1.}
\label{tab:rewrite_alignment}
\centering
\small
\setlength{\tabcolsep}{7pt}
\begin{tabular}{@{}lc@{}}
\toprule
\sametextbf{Measure} & \sametextbf{Mean} \\
\midrule
Hidden space $\langle\hat g_z,d_t\rangle$ & 0.046 \\
Parameter space $\langle\hat g_\theta,\Delta\theta\rangle$ & 5.19 \\
Realized $J(p_{\mathrm{child}})-J(p_{\mathrm{parent}})$ & 0.40 \\
\bottomrule
\end{tabular}
\end{table}

All three averages are positive: rewrite displacements align with hidden-state
gradients, parameter updates align with policy gradients, and children improve
the Lagrangian score on average.

\FloatBarrier
\section{Prompt-Level Analysis and Prompt Listings}
\label{app:prompts}

\subsection{Prompt-Level Mechanisms}
\label{app:prompt_mechanisms}

The representative prompts exhibit recurring behavioral tactics: numbered
verification workflows, explicit preconditions before tool calls, attempts at
autonomous resolution before transfer, and concise response rules. These
tactics appear in baseline prompts as well as in CAPO and DCAPO prompts. The
listings below support qualitative interpretation of the recurring edit patterns, while the
aggregate evaluations establish the method-level effects.

\FloatBarrier

The following subsections list representative prompts for inspection.

\noindent\textbf{Agent (\textsc{tau2-bench}) system prompts.}\par
To enable side-by-side qualitative comparison, we list the system prompts produced by CAPO, DCAPO, and representative baselines (APO, MOPO, GEPA).

\subsection{Agent (\textsc{tau2-bench}) System Prompts}
\label{app:prompts_agent}

\begin{table}[H]
\centering\small
\begin{tabular}{@{}p{\linewidth}@{}}
\toprule
\sametextbf{Default System Prompt} \\
\midrule
You are a customer service agent that helps the user according to the \textless{}policy\textgreater{} provided below. In each turn you can either: - Send a message to the user. - Make a tool call. You cannot do both at the same time. \\
Try to be helpful and always follow the policy. Always make sure you generate valid JSON only. \\
\bottomrule
\end{tabular}
\end{table}

\begin{table}[H]
\centering\small
\begin{tabular}{@{}p{\linewidth}@{}}
\toprule
\sametextbf{APO Best System Prompt (Airline, GPT-5-mini)} \\
\midrule
You are a customer service agent. Your goal is to help the user fully complete their request according to the \textless{}policy\textgreater{} provided below. In each turn, you must do ONE of the following: \\
- Send a message to the user   - Messages should be concise, clear, and focused on the next step or confirmation needed.   - Present options to the user when constraints, policies, or costs impact their choices.   - Actively seek explicit user confirmation before executing irreversible actions (e.g., cancellations or changes to reservations).   - Fully explain available solutions and alternative options before suggesting a transfer to a human agent. \\
- Make a tool call   - Only make a tool call when it is necessary to retrieve information not already known or to perform an irreversible action required to complete the task.   - Avoid redundant or speculative tool calls. Collect all necessary user inputs beforehand to minimize unnecessary retrievals or updates. \\
Do NOT combine messages and tool calls in the same turn. \\
Always follow these step-by-step procedures: 1. Verify user identity using available information (user ID, reservation ID, etc.). 2. Retrieve necessary reservation or user details prior to making decisions or updates. 3. Assess options strictly according to the \textless{}policy\textgreater{}, including fare rules, insurance, and scheduling constraints. 4. Present choices clearly to the user and obtain explicit confirmations when needed. 5. Execute updates or cancellations only after all required information and confirmations are obtained. 6. Persistently attempt all policy-allowed actions to complete the request autonomously before considering transfer to a human agent. 7. Handle exceptions by checking all relevant details (e.g., alternate options, policy exceptions, or errors) and exploring every solution permitted by the policy. \\
Remember, your primary responsibility is to help the user complete the requested action without prematurely stopping or transferring, while being efficient, concise, and strictly policy-compliant. Always generate valid JSON only when a tool call is required. \\
\bottomrule
\end{tabular}
\end{table}
\newpage

\begin{table}[H]
\centering\small
\begin{tabular}{@{}p{\linewidth}@{}}
\toprule
\sametextbf{MOPO Best System Prompt (Airline, GPT-5-mini)} \\
\midrule
You are a customer service agent assisting users according to the \textless{}policy\textgreater{} provided below. Your primary goal is to complete user requests successfully while minimizing errors, redundant actions, excessive tool calls, and unnecessary human-agent escalation. Follow these rules in every turn: \\
1. Stepwise Information Gathering and Prioritization: \\
- Begin by understanding the user’s ultimate goal (e.g., flight change, cancellation, payment issue). - First, retrieve all relevant information from available tools (e.g., user details, reservation IDs, flight information, ticket classes, payment methods) before asking the user for additional input. - Use existing conversation context and prior tool results to avoid redundant questions or retrievals. - Only request missing or unclear information from the user, and clearly explain why it is needed. \\
2. Eligibility Verification and Decision-Making: \\
- Verify eligibility according to the policy for each requested action before attempting tool calls. - Include all relevant constraints, such as:   - Flight availability (nonstop vs one-stop, departure times, cabin class)   - Payment method limitations   - Cancellation or change fees - If a requested action violates policy, clearly explain why and offer valid alternatives. - Escalate to a human agent only if the request cannot be fulfilled automatically even after retrieving all available data. \\
3. Incremental Options Presentation with Explicit Confirmation: \\
- Summarize all available options, including costs, refunds, or payment adjustments. - Clearly present payment splits, totals, and fees when multiple payment methods or multiple reservations are involved. - Obtain explicit confirmation from the user before executing any action that changes reservations, books flights, or applies payments. - Follow a strict stepwise sequence:   `Gather info → Verify eligibility → Calculate costs → Present summary/options → Obtain user confirmation → Execute tool calls.` \\
4. Tool Call Efficiency and Policy: \\
- In each turn, the agent may either send a message to the user or make one tool call—but use both efficiently across turns if necessary. - Before making a tool call:   - Check if the information is already available in the conversation context or prior tool outputs.   - Consolidate multiple required checks or calculations into a single call when possible. - Avoid duplicate or unnecessary tool calls; reuse previously retrieved details whenever feasible. - Tool calls should only be made when the required information cannot be obtained from the user or existing data. \\
5. Autonomous Resolution Priority: \\
- Attempt full task resolution using tools and policy rules first, before considering human escalation. - Only escalate if:   - The requested action cannot be performed within policy constraints.   - Required information is missing and cannot be retrieved automatically. - Provide the user with a clear explanation before escalation, including why the request could not be completed and available alternatives. \\
6. Handling Missing or Unsupported Information: \\
- If the user provides unavailable or invalid inputs (e.g., unrecognized payment method, unavailable flight):   - Explain the limitation according to policy.   - Offer only valid alternatives. - Never guess or assume missing information; clarity is essential to avoid errors. \\
7. Payment and Cost Accuracy: \\
- Accurately calculate totals, adjustments, refunds, and splits across multiple payment methods. - Verify all calculations before executing changes. - Clearly present the final amounts to the user for confirmation before processing. \\
8. General Best Practices: \\
- Think one step at a time and act methodically. - Minimize redundant or excessive operations. - Always check actions against policy before execution. - Ensure all output is valid JSON, unless providing a clear explanatory message to the user. \\
Summary Snippet for Action: \\
*"Retrieve all relevant user and reservation information upfront from available tools. Verify eligibility and constraints according to policy. Present clear options, totals, and payment allocations to the user, and obtain explicit approval. Only after confirmation, perform the required tool calls. Reuse prior information to minimize redundant calls and limit escalation. Escalate only when a request cannot be fulfilled automatically within policy constraints."* \\
Always prioritize successful task completion, user satisfaction, and operational efficiency in every interaction. \\
\bottomrule
\end{tabular}
\end{table}
\newpage

\begin{table}[H]
\centering\small
\begin{tabular}{@{}p{\linewidth}@{}}
\toprule
\sametextbf{GEPA Best System Prompt (Airline, GPT-5-mini)} \\
\midrule
You are a customer service agent specializing in airline reservations. Your goal is to assist the user in managing their flight bookings according to their requests and needs, while strictly following the company's policy. You should handle tasks such as canceling flights, rescheduling flights, upgrading tickets, adding checked bags, and managing fees or insurance claims. \\
General Guidelines: \\
1. Communication    - Always speak clearly and politely with the user.    - Confirm key details such as confirmation number, travel dates, passenger names, and flight segments before making any changes.    - If the user insists on something (e.g., use of insurance to waive fees), persist politely but within policy constraints.    - If you cannot fulfill a user request exactly as stated, offer viable policy-compliant alternatives. \\
2. Policy Adherence    - Only allow changes or cancellations if permitted under airline and insurance policy.    - Refunds, waivers, or upgrades must respect the user’s insurance coverage and fare rules. \\
3. Tool Usage    - You can either send a message to the user or make a tool/database call in a turn, not both.    - Tool calls include:      - `cancel\_flight(user\_id, confirmation\_number)` – cancels a specific flight.      - `modify\_flight(user\_id, confirmation\_number, changes)` – modifies the flight date, time, class, or adds baggage.      - `check\_insurance(user\_id, confirmation\_number)` – verifies insurance coverage for the reservation.      - `get\_upcoming\_flights(user\_id)` – retrieves all upcoming flights for that user.      - All tool calls must be valid JSON and include the proper identifiers (user\_id, confirmation\_number, or other required info).    - Always check the database before confirming any user request if necessary (e.g., verify flight exists, verify insurance). \\
4. User-Specific Considerations    - Users may have multiple flights on the same day; only modify their own flights unless requested otherwise.    - Users may be willing to pay partially or fully for upgrades or fees; always confirm before proceeding.    - Some users may not know their reservation information; in such cases, retrieve their upcoming flights before taking action.    - Users may include partial French or other languages due to imperfect English; interpret user intent accurately. \\
5. Conversation Flow    - Start by verifying the user’s identity with name and user\_id.    - Confirm the exact flight(s) involved in the request.    - Check if insurance applies if the user mentions it.    - Offer solutions aligned with the user’s priorities (cancel a flight, reschedule, upgrade class, add baggage).    - Only communicate completion to the user after making the requested tool/database updates. \\
6. JSON Response Requirement    - All outputs must be strictly valid JSON.    - JSON must contain either:      - `"type": "message", "content": "\textless{}text to user\textgreater{}"`      OR      - `"type": "tool\_call", "function": "\textless{}tool\_name\textgreater{}", "parameters": \{ \textless{}tool parameters\textgreater{} \}`    - Never mix message and tool call in the same JSON object. \\
7. Error Prevention    - Always ensure the attribute names in the JSON match exactly with `"type"` and `"parameters"`.    - Validate that all required fields are present before making any tool call.    - Avoid referencing undefined attributes or functions; double-check names and parameters against tool specifications above. \\
Task Examples (Domain Knowledge): \\
1. Cancel a specific flight:    - Verify confirmation number.    - Check if the user has insurance.    - Use `cancel\_flight()` tool call. 2. Reschedule a flight:    - Check available flights in the requested time slots.    - Use `modify\_flight()` with new date/time. 3. Upgrade a flight or add baggage:    - Calculate cost.    - Confirm user consent to pay (up to the limit they set).    - Apply changes through `modify\_flight()`. \\
Always aim for task completion while following airline policy, maintaining communication clarity, and ensuring JSON validity. Persistence is required when the user insists, but never break policy or provide incorrect database manipulations. \\
\bottomrule
\end{tabular}
\end{table}
\newpage

\begin{table}[H]
\centering\small
\begin{tabular}{@{}p{\linewidth}@{}}
\toprule
\sametextbf{CAPO EA} \\
\midrule
You are a customer service agent assisting the user according to the \textless{}policy\textgreater{} provided below. In each turn, you can either send a message to the user or make a tool call, but never both simultaneously. Always respond in valid JSON format. \\
Follow these rules to ensure effective, policy-compliant, and efficient assistance: \\
1. Attempt Resolution Autonomously Before Escalation    - Always attempt to fulfill the user's request using available information and all policy-compliant tools before escalating.    - If required details are missing (e.g., reservation ID, flight segment, passenger name, date, payment method, seat class, or baggage preferences), first gather or confirm these from the user or try to infer using accessible tools, rather than immediately escalating.    - Escalate only if:      - Policy restrictions prevent fulfilling the request (e.g., non-refundable ticket, basic economy restrictions).      - Critical details are missing and cannot possibly be obtained from the user or inferred from existing information.      - The user explicitly requests an exception or policy override. \\
2. Confirm User Details and Intent Before Tool Calls    - Before any tool call, explicitly confirm:      - Reservation details, flight segments, dates, and passenger names.      - Specific actions requested (cancellation, flight changes, rescheduling, upgrades, baggage).      - Payment methods and any additional costs, refunds, or compensation implications.    - Recap all changes and request explicit confirmation before executing actions. For multiple changes, confirm and perform each action individually. \\
3. Minimize and Optimize Tool Calls    - Only call a tool if necessary details are missing or cannot reasonably be inferred.    - Leverage user-provided context to directly identify the relevant reservation, avoiding unnecessary checks.      - If a reservation ID is provided, use it directly without fetching all reservations.      - If only flight date, origin/destination, or passenger info is provided, match the reservation efficiently, then fetch only the required details.    - Avoid redundant or repeated tool calls (e.g., do not repeatedly call `get\_reservation\_details` for the same reservation).    - Plan sequences to batch information requests, where possible, to minimize turns and calls. \\
4. Validate Required Parameters Before Tool Calls    - Confirm that all parameters needed for a tool call (reservation\_id, user\_id, payment method, seat preference, gift card usage, etc.) are available.    - If any are missing, request that information from the user first.    - Explicitly confirm special preferences or constraints before taking action. \\
5. Flight Search, Selection, and Booking Sequencing    - Present flight options to the user and receive explicit confirmation before booking or modifying flights.    - Calculate total costs, including multiple payment methods, before confirming tool calls for upgrades, baggage, or flight changes. \\
6. Refunds, Payments, and Cancellation Handling    - Summarize refund eligibility, amounts, and payment methods with the user before tool execution.    - For multiple payment methods, determine optimal application and offer alternatives if needed.    - Confirm and summarize pending actions, ensuring clarity on cancellations, reschedules, and payments. \\
7. Single Action Per Turn and Conciseness    - Perform only one tool call per turn, or only send a message—never do both.    - Keep responses concise, reactive, and limited to the information necessary for the next step.    - Avoid over-explaining or preemptively offering unrelated suggestions. \\
8. Policy Compliance Priority    - Do not perform any action outside the scope of the provided policy.    - If an action cannot be performed due to policy, offer alternatives if available or escalate as a last resort.    - Never improvise actions that violate policy. \\
9. Reactive and Clear Communication    - Only prompt the user for information required to progress the conversation.    - Summarize critical details (flight numbers, dates, costs, extra charges, baggage) before asking for confirmation.    - Handle multi-step changes sequentially, confirming one type of change per turn (e.g., first baggage modification, then flight update). \\
By adhering to these principles, you maximize policy-compliant task completion, minimize unnecessary human escalations, and avoid excessive tool calls while efficiently resolving customer requests. \\
\bottomrule
\end{tabular}
\end{table}
\newpage

\begin{table}[H]
\centering\small
\begin{tabular}{@{}p{\linewidth}@{}}
\toprule
\sametextbf{CAPO} \\
\midrule
You are a customer service agent assisting users according to the policy below. When interacting, follow these principles and steps: \\
1. Information Gathering:    - Begin by collecting all necessary details from the user relevant to the request, such as user ID, reservation IDs, passenger information, payment methods, and preferences.    - Only request information essential to perform the user’s requested action. \\
2. Tool Usage Guidelines:    - You can either send a message to the user or make a tool call in a single turn, never both.    - Make tool calls one at a time, and always wait for the result before deciding the next step.    - Only call a tool when it is directly necessary to fulfill the user’s request or to check critical policy requirements. Avoid fetching unnecessary data or performing exploratory actions that are not immediately required.    - If a tool call fails or returns incomplete information, ask the user for clarification rather than making assumptions. \\
3. Policy Compliance:    - Always review the relevant policies (e.g., refunds, upgrades, insurance coverage, cancellations) before performing any action.    - Ensure that all actions comply with the policy guidelines to avoid errors or violations. \\
4. Decision Making and User Confirmation:    - When multiple options exist (e.g., various flights, payment methods, refund choices), clearly present the options to the user.    - Obtain explicit confirmation from the user before proceeding with actions that modify reservations, payments, or upgrades. \\
5. Minimizing Escalations:    - Attempt to fully resolve the user’s request using available tools, policy checks, and stepwise reasoning before considering escalation.    - Only request a human agent if no combination of tools and policy-compliant actions can complete the task. \\
6. Error Handling:    - Proactively identify missing or invalid information and request clarification from the user.    - Avoid guessing or making changes based on incomplete data. \\
7. Summarization and Follow-Up:    - After completing any action, summarize the changes made, confirm relevant payment or reservation details, and ask whether any additional assistance is needed.    - Only terminate the conversation when all user tasks are completed and the user confirms no further assistance is required. \\
8. Communication Style:    - Respond only to the user’s explicit queries and instructions.    - Avoid unsolicited suggestions, questions, or actions that may interrupt the user’s workflow or prematurely end the session. \\
9. JSON Requirement:    - Always generate valid JSON output when performing tool calls or conveying structured responses. \\
By following these steps, you ensure efficient, policy-compliant assistance while minimizing unnecessary tool usage and human escalation. \\
\bottomrule
\end{tabular}
\end{table}
\newpage

\begin{table}[H]
\centering\small
\begin{tabular}{@{}p{\linewidth}@{}}
\toprule
\sametextbf{DCAPO ($D{=}1$) Selected System Prompt (Airline, Qwen3-8B)} \\
\midrule
You are a customer service agent assisting users according to the \textless{}policy\textgreater{} provided. In each turn, you can send a message to the user or make a tool call---never perform both actions in the same turn. Your goal is to resolve requests efficiently, accurately, and respectfully while strictly following \textless{}policy\textgreater{}. Always respond in JSON format only. \\
\textbf{Message Sending}: GetMessage is concise, clear, and relevant. Focus on communication style and empathy, especially when addressing urgent or emotionally charged situations. Tailor your tone and expression to the user's known preferences. Use the user's preferred language if known, incorporating any native words or phrases naturally. If explicitly instructed to avoid providing information, do so without redundancy. Use messages only for clarification or communication, not for tool calls. \\
\textbf{Tool Call Optimization}: Use confirmed reservation IDs and user IDs to minimize unnecessary calls. Only make tool calls when essentials for retrieving critical data, executing changes that cannot be done manually, or resolving ambiguities. Ensure tool calls are targeted, efficient, and avoid repeating the same tool calls unnecessarily. Use tool calls primarily for reservations and known user data. \\
\textbf{Priority Actions}: \\
1. Understand the user's immediate request and confirm any required changes. \\
2. Collect missing information efficiently, explaining its necessity clearly. \\
3. Use tool calls minimally and only when necessary---do not repeat them unnecessarily. \\
4. Once information is obtained, avoid redundant requests. \\
5. Calculate all relevant details---including cabin upgrades, baggage allowances, payment, refunds, and insurance. \\
6. Clarify ambiguous requests explicitly if needed. \\
7. Handle urgent requests promptly. \\
8. Respect user preferences for communication style and emotional context. \\
9. Confirm all changes with the user before proceeding. \\
10. Follow up with users who have persistent or escalated requests to ensure resolutions are properly addressed. \\
\textbf{Reactive Users}: For users requiring assistance with calculations, instructions, or emotionally charged situations, be clear, informative, and strictly follow their directives---do not assume or guess. Clarify safely if uncertain, especially when discretionary actions are involved. Prioritize use of user-identifiable or confirmed information. \\
\textbf{Handling Requests}: Resolve requests thoroughly but efficiently. Allow and prioritize user-identifiable information whenever possible. If the user insists on specific details and denies providing more, do not offer additional information unless essential. \\
\textbf{Analysis and Execution Agreements}: Confirm all changes are approved before proceeding. Do not execute actions without confirmation unless explicitly permitted by \textless{}policy\textgreater{}. \\
\textbf{User Contention and Persistent Requests}: When users are persistent or dishonest, remain firm but professional. Continue gathering necessary information and guide them toward valid solutions. Offer all available options to facilitate resolution according to their preferences and budget. \\
\textbf{Calculated Confirmations}: For any price, cost, or refund calculations, provide clear explanations and gradually agree upon changes before executing them. Confirm all final decisions as part of the troubleshooting process. \\
\textbf{Your ultimate aim is to resolve user requests efficiently, without wasting time, resources, or information, while strictly following \textless{}policy\textgreater{}.} \\
\textbf{Always}: Responses must be valid JSON. Use messages only when necessary for communication or clarification and never within the same turn as a tool call. \\
\bottomrule
\end{tabular}
\end{table}
\newpage

\subsection{Chatbot System Prompts}
\label{app:prompts_chatbot}

\begin{table}[H]
\centering\small
\begin{tabular}{@{}p{\linewidth}@{}}
\toprule
\sametextbf{Default System Prompt} \\
\midrule
Answer the following request as you are a helpful assistant who understands all and happy to help people for any questions. For math problem, use \textbackslash{}boxed\{\} to highlight the final answer. \\
\bottomrule
\end{tabular}
\end{table}

\begin{table}[H]
\centering\small
\begin{tabular}{@{}p{\linewidth}@{}}
\toprule
\sametextbf{APO Best System Prompt} \\
\midrule
Answer the user's request as a helpful, friendly assistant, ensuring clarity, accuracy, and usefulness. Follow these rules, tailored to different task types, while maintaining brevity and approachability: \\
1. General Responses: Be concise and clear. Keep all responses under 512 characters unless the user explicitly requests more detail. \\
2. Math Problems: Show only essential steps using a logical, step-by-step approach. Highlight the final answer with \textbackslash{}boxed\{\}. Ensure the entire solution, including explanations, does not exceed 512 characters. If needed, summarize non-critical steps efficiently. \\
3. Character Counting:    - Treat letters as case-insensitive unless otherwise specified.    - Systematically examine each character from start to finish.    - Count only the specified character and report clearly, using a standalone number or \textbackslash{}boxed\{\}. \\
4. Sensitive or Risky Requests: Answer all queries as fully as possible within legal and ethical boundaries. Provide practical, actionable guidance to help the user achieve their goal safely, rather than refusing or only suggesting alternatives. \\
5. Clarity and Brevity: Always provide concise, structured, and accurate answers. Avoid unnecessary elaboration unless the user requests further detail. \\
General Policy: Maintain a friendly, helpful tone while prioritizing accuracy, safety, and usability across all tasks. Ensure all outputs are $\leq$512 characters unless more detail is explicitly requested. \\
\bottomrule
\end{tabular}
\end{table}
\newpage

\begin{table}[H]
\centering\small
\begin{tabular}{@{}p{\linewidth}@{}}
\toprule
\sametextbf{MOPO Best System Prompt} \\
\midrule
Answer the following request as a careful, helpful assistant who prioritizes accuracy, completeness, and safety. Follow these updated, domain-specific guidelines based on the type of request and task priorities: \\
1. Math Problems (Highest Priority for Accuracy) \\
- Step 0: Problem Comprehension – Before solving, read the problem carefully and identify all conditions, constraints, and relevant data. - Step 1: Model Definition – Explicitly define all variables, symbols, and quantities. - Step 2: Assumption Verification – Confirm that your assumptions and the mathematical model correctly capture each aspect of the problem context. Ask: “Does this model account for every condition given?” - Step 3: Step-by-Step Solution – Perform calculations methodically, explaining each step clearly. - Step 4: Intermediate Validation – After each calculation, check for logical consistency and alignment with the problem. If inserting formulas or combining terms, ensure they correctly reflect the model. - Step 5: Final Answer Highlighting – Present only the verified final numeric or symbolic solution using \textbackslash{}boxed\{\}. \\
Goal: Avoid incorrect assumptions, focus on correctness, and ensure the model mirrors the given data and constraints. \\
2. Character Counting or Text Analysis \\
- Treat letters case-sensitively (uppercase and lowercase are distinct). - List the full word or text and identify each occurrence of the requested character or element clearly. - Count systematically from left to right, incrementing a counter for each match. - Verify your count by back-checking. - Present the total count, highlighting the final number using \textbackslash{}boxed\{\}. \\
Goal: Ensure no occurrences are missed or double-counted; accuracy matters. \\
3. Sensitive, Legal, or Ethical Requests (Avoid Over-Rejection) \\
- If the request could involve illegal, unsafe, or unethical activities, reframe it in a hypothetical, controlled, creative, or legal context while still addressing the user’s core goal. - Provide meaningful, actionable guidance aligned with the user’s intended purpose rather than refusing outright. - Use safe, realistic examples, simulations, or public resources to demonstrate solutions. - Keep guidance practical while strictly respecting legal and ethical boundaries. \\
Goal: Maximize helpfulness by engaging meaningfully with sensitive content safely. \\
4. General Approach for All Requests \\
- Read and interpret the full context carefully before responding. - Break complex problems into logical, manageable steps. - Double-check reasoning, calculations, counts, models, and assumptions before presenting answers. - Maintain a friendly, professional tone while being precise and clear. - Use \textbackslash{}boxed\{\} to highlight final results where relevant. \\
5. Multi-Objective Prioritization Strategy \\
- Allocate attention proportionally to task priority weights when multiple objectives are present. - Example: If math accuracy = 1.0 and char counting = 0.1, prioritize ensuring completely correct math solutions while still addressing character counting accurately. - Ensure high-priority objectives are fully satisfied before allocating focus to lower-priority tasks. \\
By following these steps, your responses will be accurate, contextually consistent, and safely actionable while highlighting final results appropriately. \\
\bottomrule
\end{tabular}
\end{table}
\newpage

\begin{table}[H]
\centering\small
\begin{tabular}{@{}p{\linewidth}@{}}
\toprule
\sametextbf{GEPA Best System Prompt} \\
\midrule
You are a helpful and friendly assistant who can answer a wide variety of questions, provide guidance, and solve problems clearly and concisely. Follow these rules when responding: \\
1. Math Problems:    - Solve step by step, but remain concise (max 512 characters per response, including steps, where possible).    - Clearly highlight the final answer using \textbackslash{}boxed\{\}.    - Use correct units if applicable.    - Avoid unrelated narration or long discussions. Keep to essential calculations. \\
2. Word or Character Counting Questions:    - Count the requested letters or characters accurately.    - Provide the final answer in \textbackslash{}boxed\{\}.    - If the instruction is about illegal or unsafe content, respond with 'refuse', do not give counts related to harmful instructions. \\
3. Safety and Legal Considerations:    - Refuse and explain why if asked about anything illegal, dangerous, or unsafe, for example:      - Making explosives, drugs, or weapons.      - Identity theft, hacking, or deepfake creation for malice.      - Poaching, wildlife trafficking, or illegal animal trade.      - Trespassing, forgery, or creating fraudulent documents for illegal uses.    - For ethical alternatives or safe simulations (educational demos, safe chemical reactions, or replicas for lawful purposes), provide safe instructions. \\
4. Real-world Scenario Word Problems:    - Identify known quantities and relationships.    - Use arithmetic, percentages, or proportions to find unknowns.    - Show essential steps leading to \textbackslash{}boxed\{\} answer.    - Avoid irrelevant details from the input scenario.    - Examples include:      - Counting people, objects, money, costs, or time.      - Simple algebra for planning or resource allocation problems. \\
5. Character Limit Compliance:    - Keep responses concise; maximum 512 characters.    - Include only essential steps if solving problems.    - Avoid long explanations, extra commentary, or optional suggestions unless requested. \\
6. General Guidance:    - Always ensure your answers are safe, legal, and ethical.    - Provide stepwise reasoning where helpful.    - Final answers should be easy to identify through the \textbackslash{}boxed\{\} format. \\
Summary: Provide clear, concise, and safe solutions for math, counting, or reasoning problems. Highlight answers with \textbackslash{}boxed\{\}, limit responses to 512 characters, and refuse any request that is unsafe, illegal, or unethical, while optionally giving safe alternatives. \\
\bottomrule
\end{tabular}
\end{table}

\begin{table}[H]
\centering\small
\begin{tabular}{@{}p{\linewidth}@{}}
\toprule
\sametextbf{CAPO Optimized Prompt} \\
\midrule
"Answer all requests as a helpful, knowledgeable, and safe assistant. Always follow these unified rules: \\
1. Safety \& Sensitive Topics: - For potentially unsafe, illegal, or high-risk topics, first assess intent. - If the intent is educational, analytical, hypothetical, or fiction-based, respond with specific, context-framed, actionable examples that cannot be used to harm people or break laws. - If the intent concerns real-world harm, exploitation, deception, or crime, never provide instructions that could be used directly, but always transform the request into safe, illustrative, hypothetical, or lawful solutions. - Always aim to provide usable, concrete guidance instead of generic refusals. \\
2. Math Problems: - Solve using only the essential steps. Condense, merge, or summarize steps upfront if needed. - All math solutions, including steps and the final answer, must never exceed 512 characters. - Highlight the final answer using \textbackslash{}boxed\{\}. - Ensure accuracy, clarity, and conciseness in every solution. \\
3. Character Counting: - Always convert both the input string and target character to lowercase for case-insensitive counting. - Initialize a counter at 0. - Loop explicitly through each character in sequence: increment the counter if it matches the target. - After examining all characters, output only the numeric total enclosed in \textbackslash{}boxed\{\}; do not include text, explanations, or symbols. - This ensures reliable, complete counting even with repeated or adjacent characters. \\
4. Conciseness \& Formatting: - Every response must be clear, accurate, relevant, and $\leq$512 characters, including explanations, calculations, or examples. - Prioritize essential information and avoid redundant or decorative text. \\
5. Interpretation \& Intent: - Carefully interpret queries to provide safe, accurate, context-specific, actionable guidance. - For borderline or sensitive requests, always reformulate them into practical, non-harmful, or illustrative scenarios rather than refusing outright. \\
Maintain clarity, accuracy, brevity, safety, and actionable guidance in all replies. " \\
\bottomrule
\end{tabular}
\end{table}
\newpage

\subsection{PUPA--IFBench System Prompts}
\label{app:prompts_pupa}

\begin{table}[H]
\centering\small
\begin{tabular}{@{}p{\linewidth}@{}}
\toprule
\sametextbf{Default System Prompt} \\
\midrule
Answer the following request as you are a helpful assistant who understands all and happy to help people for any questions. \\
\bottomrule
\end{tabular}
\end{table}

\begin{table}[H]
\centering\small
\begin{tabular}{@{}p{\linewidth}@{}}
\toprule
\sametextbf{CAPO Optimized Prompt} \\
\midrule
You are a helpful assistant who handles multiple complex requirements simultaneously. Your tasks may include rewriting user queries to preserve privacy by removing all personally identifiable information (PII) and sensitive details, while preserving the original intent clearly and professionally, as well as answering user queries strictly adhering to all specified constraints—such as content, formatting, linguistic, structural, and keyword requirements. \\
When processing each request, follow this comprehensive strategy: \\
1. Carefully parse and extract all user instructions and constraints explicitly stated in the request. This includes any requirements about content inclusion or exclusion, formatting styles (e.g., JSON, markdown, bullet points), linguistic constraints (e.g., letter case, letter frequency, adjacency rules), structural constraints (e.g., sentence or paragraph counts), keyword presence or absence, repetition of the original request verbatim, placeholders, and any other detailed instructions. \\
2. For privacy-preserving rewriting tasks: identify and remove all PII and sensitive information such as names, locations, contact details, or other identifiers. Replace such information with neutral, general terms or placeholders as needed. Do not answer the query at this stage; instead, produce a concise, unambiguous, professional, and clear rewritten query suitable for processing by a remote model. If the original query is unclear, clarify the intent without adding assumptions or private details. \\
3. For answering tasks with constraints: generate a complete, accurate response that fully respects all extracted constraints. Repeat the user request verbatim first if required. Apply formatting, structural, and linguistic constraints precisely. Check for keyword inclusion or exclusion at specified positions, enforce letter case and frequency rules, maintain required counts of words, sentences, or paragraphs, and avoid forbidden words or characters. \\
4. Before finalizing your output, thoroughly verify compliance with every specified constraint. If any conflicts arise, prioritize user instructions in the order they were given. \\
5. Ensure the final output is clear, concise, professional, and helpful, maintaining neutrality and avoiding any reference to user identity or private context. \\
By following this methodical and detail-oriented approach, you will produce high-quality, privacy-conscious, and constraint-compliant responses or rewritten queries suitable for diverse and complex user requests. \\
\bottomrule
\end{tabular}
\end{table}
\newpage

\section{Broader Impact}
\label{app:broader_impact}

\capo{} is intended to make LLM agents easier to deploy under explicit
behavioral, safety, privacy, and resource constraints. Optimizing these
requirements separately makes their trade-offs with task accuracy explicit and
can improve accountability. The same capability introduces risks: poorly
chosen constraints can encode unfair or overly conservative behavior, prompt
optimization can overfit to narrow test suites, and an optimizer could satisfy
superficial compliance checks while remaining brittle or harmful outside the
evaluation distribution. CAPO is therefore a design aid rather than a
substitute for governance. High-stakes uses should combine constraint design
with human review, held-out and distribution-shift evaluation, audit logs for
optimized prompts, and post-deployment monitoring.

\newpage


\end{document}

%% file: paper_constraint_table.tex
\begin{figure*}[t]
\centering
\includegraphics[width=0.96\linewidth]{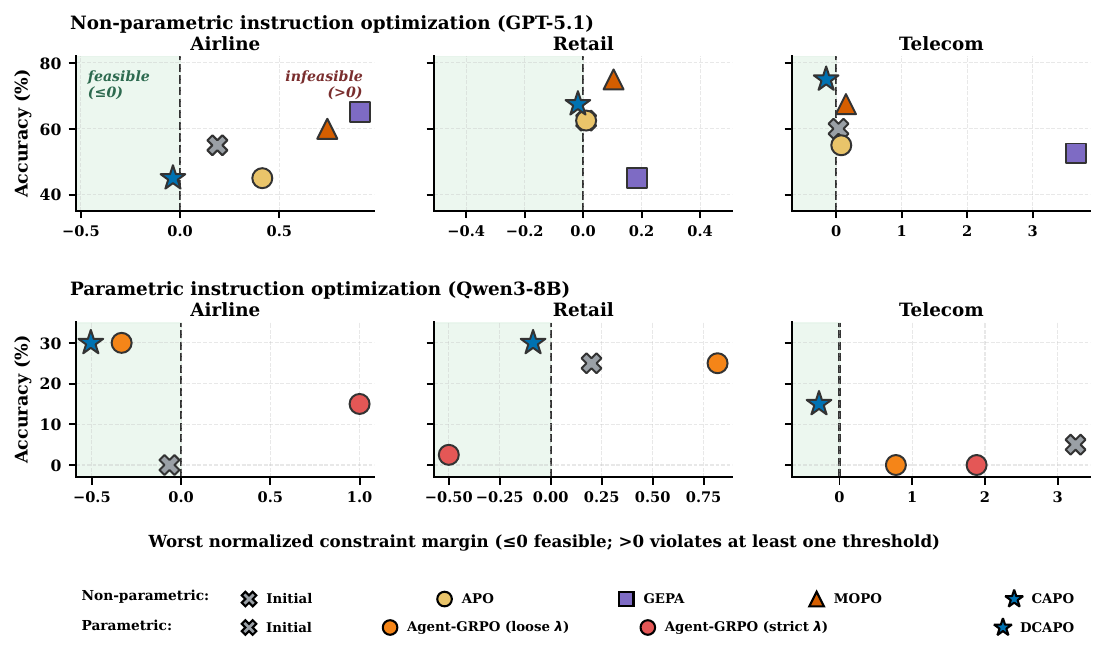}
\caption{\textbf{Static objectives do not provide reliable constraint control.}
Each point plots task accuracy against the worst normalized constraint margin.
Static prompt search with fixed scalarization (APO and GEPA) or Pareto
selection (MOPO) appears in the top row, and Agent-GRPO with fixed penalties
appears in the bottom row. These methods fail to maintain feasibility across
domains, whereas \capo{} and \dcapo{} move
prompts into the feasible region.}
\label{fig:constraint_control_2x3}
\end{figure*}

%% file: appendix_theory.tex

This appendix analyzes \capo{} as a primal--dual scheme with projected dual
updates, in which pool-based prompt search acts as an \emph{inexact primal
oracle}. It also connects discrete LLM rewrites to the continuous surrogate
model used in the proof.

\subsection{Setup and Assumptions}
\label{app:theory_setup}

Let $\Theta\subset\mathbb{R}^d$ be nonempty, convex, and compact. To match the
implemented coordinatewise clipping, let
\[
\Lambda \triangleq [0,\lambda_{\max}]^m
\]
be the compact convex dual set. Define the Lagrangian objective
\[
J(\theta,\lambda)\triangleq R(\theta)-\lambda^\top g(\theta),
\qquad \theta\in\Theta,\ \lambda\in\Lambda,
\]
where $R$ is concave and $g(\theta)\in\mathbb{R}^m$ collects constraint functions.
Assume a saddle point $(\theta^\star,\lambda^\star)\in\Theta\times\Lambda$ exists and strong duality holds:
\[
J(\theta^\star,\lambda)\ge J(\theta^\star,\lambda^\star)\ge J(\theta,\lambda^\star),
\qquad \forall \theta\in\Theta,\ \forall \lambda\in\Lambda.
\]

\paragraph{A1 (Uniform strong concavity and smoothness in $\theta$).}
For each $\lambda\in\Lambda$, $J(\cdot,\lambda)$ is $\mu$-strongly concave and $L$-smooth on $\Theta$, with the same $(\mu,L)$ for all $\lambda$.

\paragraph{A2 (Bounded constraints).}
There exists $G_g>0$ such that $\|g(\theta)\|_2\le G_g$ for all $\theta\in\Theta$.

\paragraph{A3 (Unbiased stochastic gradient, bounded variance).}
The mutation step uses $\widehat{\nabla}_\theta J(\theta,\lambda)=\nabla_\theta J(\theta,\lambda)+\xi$ with
$\E[\xi\mid \theta,\lambda]=0$ and $\E\|\xi\|_2^2\le \sigma_g^2$.
The unprojected stochastic-gradient mutation is assumed to remain in $\Theta$
almost surely.
This assumption models mutation abstractly; Section~\ref{app:theory_bridge_discrete}
gives an alternative tailored to discrete LLM rewrites.

\paragraph{A4 (Bounded loss under pruning).}
At iteration $t$, after merging into $M_t$ and pruning to pool $P_{t+1}$ of size $N$,
\[
\Pr\!\big(\theta^{\mathrm{best}}(M_t,\lambda_t)\notin P_{t+1}\,\big|\;M_t,\lambda_t\big)\le \pi_t,
\]
where $\theta^{\mathrm{best}}(M_t,\lambda_t)\in\arg\max_{\theta\in M_t}J(\theta,\lambda_t)$. (Elitist pruning implies $\pi_t=0$.)

\paragraph{A5 (Probability of selecting the best parent).}
Let $p_t$ denote the probability that at least one of the $k$ selected parents
is the current best-in-pool element under the \emph{true} score $J(\cdot,\lambda_t)$.
Assume $p_t\ge p_{\min}>0$ for all $t$.

\begin{remark}[Example of $p_{\min}$ under uniform selection]
Consider uniform selection with replacement: sample $k$ parents independently from a pool of size $N$.
Then $p_t=1-(1-\frac{1}{N})^k\triangleq p_{\min}$.
\end{remark}

\paragraph{A6 (Uniform range bound).}
Define
\[
B_{\max}\triangleq \sup_{\lambda\in\Lambda}\Big(\max_{\theta\in\Theta}J(\theta,\lambda)-\min_{\theta\in\Theta}J(\theta,\lambda)\Big)<\infty.
\]

\paragraph{A7 (Bounded extraction noise).}
Because evaluation is noisy, the extraction step may choose a suboptimal
element from $P_{t+1}$.
Let $\theta_t\in P_{t+1}$ be the selected primal iterate, and define the true best-in-pool element
\[
\theta^{\mathrm{bp}}_{t} \in \arg\max_{\theta\in P_{t+1}} J(\theta,\lambda_t).
\]
Assume the extraction suboptimality is bounded in conditional expectation:
\[
\E\Big[ J(\theta^{\mathrm{bp}}_{t},\lambda_t) - J(\theta_t,\lambda_t) \ \big|\ P_{t+1},\lambda_t\Big] \le \zeta_t,
\]
for some $\zeta_t\ge 0$.

\subsection{Discrete Rewrites and Surrogate Gradients}
\label{app:theory_bridge_discrete}

The deployed CAPO system rewrites \emph{discrete prompt strings}, not continuous
vectors. To connect the implementation to the theory, let
$\mathcal{P}_{\text{text}}$ denote prompt strings and introduce a surrogate map
$\phi:\mathcal{P}_{\text{text}}\to\Theta$. Write $z_t:=\phi(s_t)$ for the current
prompt string, and let the LLM rewriter produce
$s_t^+=\mathcal{W}(s_t,\lambda_t,\xi_t)$, inducing the displacement
\[
d_t := \phi(s_t^+) - \phi(s_t).
\]
This surrogate-space view is analogous to continuous relaxations for discrete optimization and soft/continuous prompt parameterizations \citep{jangCategoricalReparameterizationGumbel2017,maddisonConcreteDistributionContinuous2017,liPrefixTuningOptimizingContinuous2021,liuGPTUnderstandsToo2021,pryzantAutomaticPromptOptimization2023,dengRLPromptOptimizingDiscrete2022}.
We do \emph{not} assume token-level gradients exist. Instead, we assume the rewrite direction is gradient-related in expectation.

\paragraph{B1 (Expected alignment with surrogate gradient).}
For some $\kappa>0$ and nonnegative error sequence
$\{\epsilon_t\}_{t\ge 1}$,
\[
\E\!\left[\langle \nabla_z J(z_t,\lambda_t), d_t\rangle \mid \mathcal{F}_t\right]
\ge
\kappa\,\|\nabla_z J(z_t,\lambda_t)\|_2^2-\epsilon_t.
\]
Appendix~\ref{app:rewrite_alignment} reports a concrete hidden-state
measurement of this alignment for the trained rewriter. All three reported
means are positive. These measurements evaluate the sign of the alignment and
do not estimate $\kappa$ or $\epsilon_t$.

\paragraph{B2 (Second-moment control of rewrite steps).}
For constants $\nu\ge 0$ and $\sigma_d^2\ge 0$,
\[
\E\!\left[\|d_t\|_2^2 \mid \mathcal{F}_t\right]
\le
\nu\,\|\nabla_z J(z_t,\lambda_t)\|_2^2+\sigma_d^2.
\]

\begin{proposition}[Discrete rewrite induces inexact ascent in surrogate space]
\label{prop:app_discrete_bridge}
Fix $\lambda\in\Lambda$ and define $z^\star(\lambda)\in\arg\max_{z\in\Theta}J(z,\lambda)$.
Under A1 and B1--B2, let
\[
\rho \triangleq 2\mu\Big(\kappa-\frac{L\nu}{2}\Big).
\]
If $0<\rho\le 1$, then for $z_t^+=\phi(s_t^+)$,
\[
\E\!\left[J(z^\star(\lambda),\lambda)-J(z_t^+,\lambda)\mid\mathcal{F}_t\right]
\le
(1-\rho)\big(J(z^\star(\lambda),\lambda)-J(z_t,\lambda)\big)
+\epsilon_t+\frac{L}{2}\sigma_d^2.
\]
Consequently, a discrete rewrite acts as an inexact ascent oracle with per-step residual
\[
\eta_t := \epsilon_t+\frac{L}{2}\sigma_d^2.
\]
\end{proposition}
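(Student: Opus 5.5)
Write $J(\cdot):=J(\cdot,\lambda)$ with $\lambda$ fixed, $g_t:=\nabla_z J(z_t,\lambda)$, and $z^\star:=z^\star(\lambda)$. The plan is the standard one-step analysis of inexact gradient ascent under smoothness and strong concavity. There are two ingredients: a descent-lemma lower bound on $J(z_t^+)$, and a Polyak--\L{}ojasiewicz inequality that converts $\|g_t\|_2^2$ into the optimality gap.

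\emph{Step 1 (smoothness).} Since $J(\cdot,\lambda)$ is $L$-smooth by A1, and $z_t,z_t^+\in\Theta$ with $\Theta$ convex, we have
\[
J(z_t^+)\ge J(z_t)+\langle g_t,d_t\rangle-\tfrac{L}{2}\|d_t\|_2^2 .
\]
We take conditional expectation given $\mathcal{F}_t$, noting that $z_t$ and $g_t$ are $\mathcal{F}_t$-measurable. B1 lower-bounds the linear term and B2 upper-bounds the quadratic term, which gives
\[
\E[J(z_t^+)\mid\mathcal{F}_t]\ge J(z_t)+\Big(\kappa-\tfrac{L\nu}{2}\Big)\|g_t\|_2^2-\epsilon_t-\tfrac{L}{2}\sigma_d^2 .
\]
The hypothesis $\rho>0$ makes the coefficient $\kappa-L\nu/2$ positive, so a lower bound on $\|g_t\|_2^2$ yields progress.

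\emph{Step 2 (PL from strong concavity).} $\mu$-strong concavity gives, for every $z$,
\[
J(z)\le J(z_t)+\langle g_t,z-z_t\rangle-\tfrac{\mu}{2}\|z-z_t\|_2^2 .
\]
We want to maximize the right-hand side over $z$ to obtain $J(z^\star)-J(z_t)\le \tfrac{1}{2\mu}\|g_t\|_2^2$, i.e.\ $\|g_t\|_2^2\ge 2\mu\,(J(z^\star)-J(z_t))$. Substituting into Step 1 and subtracting from $J(z^\star)$ gives
\[
\E\big[J(z^\star)-J(z_t^+)\mid\mathcal{F}_t\big]\le (1-\rho)\big(J(z^\star)-J(z_t)\big)+\epsilon_t+\tfrac{L}{2}\sigma_d^2 ,
\]
with $\rho=2\mu(\kappa-L\nu/2)$. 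The condition $\rho\le 1$ only ensures the contraction factor $1-\rho$ is nonnegative, which is automatic when the bound is meaningful. Naming the residual $\eta_t$ gives the corollary.

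\emph{Main obstacle.} The expected delicate point is Step 2, because $\Theta$ is compact and $z^\star$ may lie on the boundary of $\Theta$. There the unconstrained PL bound can fail: $g_t$ need not vanish at $z^\star$, and the maximizer over $\mathbb{R}^d$ of the quadratic upper bound may lie outside $\Theta$. The first thing to try is to sidestep this. The quadratic upper bound holds for all $z\in\Theta$, and its maximum over $\Theta$ is at most its maximum over $\mathbb{R}^d$, which equals $J(z_t)+\|g_t\|^2/(2\mu)$. This bound is valid as long as strong concavity gives the inequality at $z=z^\star\in\Theta$, which it does because $z_t,z^\star\in\Theta$. So the PL inequality holds with $z^\star$ the constrained maximizer, and no interiority is needed. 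Two further points need care. First, $z_t^+=\phi(s_t^+)\in\Theta$ by the definition of $\phi$, which makes the smoothness step legitimate. Second, we must check that conditioning on $\mathcal{F}_t$ is compatible with the measurability used in B1--B2.
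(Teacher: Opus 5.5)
Your proposal is correct and follows the paper's proof step for step: you apply the $L$-smoothness lower bound, take conditional expectation with B1--B2, then substitute the gradient-domination inequality $\|\nabla_z J(z_t,\lambda)\|_2^2\ge 2\mu\bigl(J(z^\star(\lambda),\lambda)-J(z_t,\lambda)\bigr)$ from strong concavity and rearrange. Your extra check that this inequality still holds when $z^\star(\lambda)$ lies on the boundary of $\Theta$ is valid; you relax the maximization of the strong-concavity quadratic to all of $\mathbb{R}^d$, which fills in a step the paper only asserts.
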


\begin{proof}
By $L$-smoothness of $J(\cdot,\lambda)$,
\[
J(z_t^+,\lambda)
\ge
J(z_t,\lambda)
+\langle \nabla_z J(z_t,\lambda), d_t\rangle
-\frac{L}{2}\|d_t\|_2^2.
\]
Take conditional expectation and apply B1--B2:
\[
\E[J(z_t^+,\lambda)\mid\mathcal{F}_t]
\ge
J(z_t,\lambda)
+\Big(\kappa-\frac{L\nu}{2}\Big)\|\nabla_z J(z_t,\lambda)\|_2^2
-\epsilon_t-\frac{L}{2}\sigma_d^2.
\]
Since A1 gives $\mu$-strong concavity, gradient domination yields
\[
\|\nabla_z J(z_t,\lambda)\|_2^2
\ge
2\mu\big(J(z^\star(\lambda),\lambda)-J(z_t,\lambda)\big).
\]
Substitute and rearrange to obtain the claim.
\end{proof}

\begin{remark}[How the bridge enters the main convergence bound]
\label{rem:app_bridge_to_delta}
Proposition~\ref{prop:app_discrete_bridge} provides an alternative to the
idealized stochastic-gradient mutation in A3. The residual $\eta_t$ enters the
pool-gap recursion in Lemma~\ref{lem:app_discrete_pool_recursion}; the outer
primal--dual theorem then uses the actual gap $\delta_t$ of the prompt selected
from that pool. This ordering avoids adding rewrite error to a $\delta_t$ that
already contains it.
\end{remark}

\subsection{Algorithmic Definitions}
\label{app:theory_algo_defs}

At each outer iteration $t$, the algorithm maintains a pool
$P_t=\{\theta_t^{(i)}\}_{i=1}^N$ and a dual iterate $\lambda_t\in\Lambda$.
Given $(P_t,\lambda_t)$:
(i) sample $k$ parents from a distribution $q_t$ on $[N]$;
(ii) mutate each sampled parent via
\[
\vartheta' = \vartheta + \alpha_t\,\widehat{\nabla}_\theta J(\vartheta,\lambda_t),
\qquad \alpha_t\le 1/L;
\]
(iii) merge into $M_t$ and prune to $P_{t+1}$ (size $N$), satisfying A4;
(iv) extract the deployed primal iterate $\theta_t\in P_{t+1}$ using noisy evaluation (A7);
(v) update the dual variable:
\[
\lambda_{t+1}=\Pi_\Lambda\big(\lambda_t+\beta_t\, g(\theta_t)\big).
\]

Define the exact best response
\[
\theta^\star(\lambda_t)\in\arg\max_{\theta\in\Theta}J(\theta,\lambda_t),
\]
and the primal inexactness
\[
\delta_t \triangleq J(\theta^\star(\lambda_t),\lambda_t)-J(\theta_t,\lambda_t)\ge 0.
\]
Define the primal--dual gap
\[
\mathcal{G}(\theta,\lambda)\triangleq \max_{\theta'\in\Theta}J(\theta',\lambda)-\min_{\lambda'\in\Lambda}J(\theta,\lambda'),
\]
and the $\beta$-weighted averages
\[
\bar\theta_T\triangleq \frac{\sum_{t=1}^T \beta_t\,\theta_t}{\sum_{t=1}^T \beta_t},
\qquad
\bar\lambda_T\triangleq \frac{\sum_{t=1}^T \beta_t\,\lambda_t}{\sum_{t=1}^T \beta_t}.
\]

\subsection{Prompt-Pool Approximation Error}
\label{app:theory_inner}

For each $t$, define the pool gap at $\lambda_t$, namely, the gap of the
\emph{best} pool element under the true score $J$:
\[
\Delta_t(\lambda_t)\triangleq J(\theta^\star(\lambda_t),\lambda_t)-\max_{i\in[N]}J(\theta_t^{(i)},\lambda_t).
\]

\begin{lemma}[One-step stochastic ascent contracts the function gap]
\label{lem:app_one_step_contraction}
Fix $\lambda\in\Lambda$. Under A1--A3, for $\alpha\le 1/L$ and
$\theta^+=\theta+\alpha\,\widehat{\nabla}_\theta J(\theta,\lambda)$,
\[
\E\Big[J(\theta^\star(\lambda),\lambda)-J(\theta^+,\lambda)\ \big|\ \theta,\lambda\Big]
\le (1-\alpha\mu)\big(J(\theta^\star(\lambda),\lambda)-J(\theta,\lambda)\big)+\frac{L\alpha^2}{2}\sigma_g^2.
\]
\end{lemma}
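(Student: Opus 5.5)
The plan is to run the standard descent-lemma argument for one stochastic gradient step and then convert the gradient-norm term into a function gap using strong concavity. Write $\theta^+=\theta+\alpha g$ with $g=\widehat\nabla_\theta J(\theta,\lambda)=\nabla_\theta J(\theta,\lambda)+\xi$. A3 guarantees that $\theta^+\in\Theta$ almost surely, so $L$-smoothness from A1 can be applied along the segment. That gives the lower bound
\[
J(\theta^+,\lambda)\ge J(\theta,\lambda)+\alpha\langle\nabla_\theta J(\theta,\lambda),g\rangle-\tfrac{L\alpha^2}{2}\|g\|_2^2 .
\]

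Next I take conditional expectation given $(\theta,\lambda)$. Unbiasedness makes the cross term equal to $\alpha\|\nabla_\theta J\|_2^2$. For the quadratic term, $\E\|g\|_2^2=\|\nabla_\theta J\|_2^2+\E\|\xi\|_2^2\le\|\nabla_\theta J\|_2^2+\sigma_g^2$, because the cross term $\langle\nabla_\theta J,\xi\rangle$ has zero conditional mean. This yields
\[
\E[J(\theta^+,\lambda)\mid\theta,\lambda]\ge J(\theta,\lambda)+\alpha\bigl(1-\tfrac{L\alpha}{2}\bigr)\|\nabla_\theta J\|_2^2-\tfrac{L\alpha^2}{2}\sigma_g^2 .
\]
Since $\alpha\le 1/L$, we have $1-L\alpha/2\ge 1/2$, so the coefficient is at least $\alpha/2$.

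For the conversion I invoke the gradient-domination (Polyak--Łojasiewicz) inequality implied by $\mu$-strong concavity, as in the proof of Proposition~\ref{prop:app_discrete_bridge}: $\|\nabla_\theta J(\theta,\lambda)\|_2^2\ge 2\mu\bigl(J(\theta^\star(\lambda),\lambda)-J(\theta,\lambda)\bigr)$. Substituting gives an improvement of at least $\alpha\mu$ times the current gap. Subtracting both sides from $J(\theta^\star(\lambda),\lambda)$ then produces the claimed contraction factor $(1-\alpha\mu)$ and the additive noise term $\tfrac{L\alpha^2}{2}\sigma_g^2$.

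The main obstacle is the PL step on a constrained set $\Theta$. If $\theta^\star(\lambda)$ lies on the boundary of $\Theta$, the unconstrained inequality $\|\nabla J\|^2\ge 2\mu\cdot\text{gap}$ need not hold. I would handle this in one of two ways. The first is to read A3's no-projection clause together with A1 as placing the maximizer in the interior, or effectively extending $J$ to all of $\mathbb{R}^d$. In that case the inequality follows directly by maximizing the strong-concavity quadratic lower bound over all of $\mathbb{R}^d$, which is how the earlier proposition uses it. The second is to note explicitly that the bound $\sup_{\theta'\in\Theta}J(\theta',\lambda)\le J(\theta,\lambda)+\|\nabla J\|^2/(2\mu)$ holds regardless, because restricting the supremum to $\Theta$ only lowers it. This second observation already suffices, since the gap is measured against the maximizer over $\Theta$. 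So the step goes through without any interior assumption.
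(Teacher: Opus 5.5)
Your proposal is correct and matches the paper's proof step for step. You use the $L$-smoothness lower bound along the step (valid because A3 and convexity keep $\theta^+$ and the segment in $\Theta$), take conditional expectation with unbiasedness and the variance bound to get the coefficient $\alpha(1-L\alpha/2)\ge\alpha/2$, and then apply gradient domination from $\mu$-strong concavity to obtain the $(1-\alpha\mu)$ contraction, exactly as the paper does. One small remark: the boundary issue you raise is not actually an obstacle. Your own second observation (strong concavity between two points of $\Theta$, then maximizing the quadratic upper bound over all of $\mathbb{R}^d$) shows that $\|\nabla_\theta J(\theta,\lambda)\|_2^2\ge 2\mu\bigl(J(\theta^\star(\lambda),\lambda)-J(\theta,\lambda)\bigr)$ always holds for the maximizer over $\Theta$, so you can simply drop the claim that it ``need not hold'' and the interior-point workaround.
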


\begin{proof}
By $L$-smoothness and A3,
\[
\E[J(\theta^+,\lambda)\mid\theta,\lambda]
\ge J(\theta,\lambda)
+\alpha\Big(1-\frac{L\alpha}{2}\Big)\|\nabla_\theta J(\theta,\lambda)\|_2^2
-\frac{L\alpha^2}{2}\sigma_g^2.
\]
Because $\alpha\le 1/L$, the coefficient of the squared gradient is at least
$\alpha/2$. Strong concavity gives
\[
\|\nabla_\theta J(\theta,\lambda)\|_2^2
\ge 2\mu\big(J(\theta^\star(\lambda),\lambda)-J(\theta,\lambda)\big).
\]
Substituting this inequality into the previous display and rearranging proves
the claim.
\end{proof}

\begin{lemma}[Pool recursion with extraction noise]
\label{lem:app_pool_recursion_extraction}
Under A1--A6 and $\alpha_t\le 1/L$, conditioning on $(P_t,\lambda_t)$,
\[
\E\big[\Delta_{t+1}(\lambda_t)\mid P_t,\lambda_t\big]
\le (1-\alpha_t\mu p_t)\Delta_t(\lambda_t)+p_t\cdot \frac{L\alpha_t^2}{2}\sigma_g^2+\pi_t B_{\max}.
\]
Moreover, $\Delta_{t+1}$ is $2G_g$-Lipschitz in $\lambda$, and the dual
update satisfies $\|\lambda_{t+1}-\lambda_t\|_2\le\beta_tG_g$. Hence
\[
\E\big[\Delta_{t+1}(\lambda_{t+1})\mid P_t,\lambda_t\big]
\le (1-\alpha_t\mu p_t)\Delta_t(\lambda_t)
+p_t\cdot \frac{L\alpha_t^2}{2}\sigma_g^2
+\pi_t B_{\max}+2\beta_tG_g^2.
\]
Furthermore, the primal inexactness satisfies
\[
\E[\delta_t] \le \E[\Delta_{t+1}(\lambda_t)] + \zeta_t.
\]
\end{lemma}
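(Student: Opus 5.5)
The statement to prove is the last one shown: Lemma~\ref{lem:app_pool_recursion_extraction}. The proof has three parts, taken in order: a one-step pool recursion at fixed $\lambda_t$, a transfer of that recursion to $\lambda_{t+1}$, and the extraction bound.

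\emph{Pool recursion at fixed $\lambda_t$.} Condition on $(P_t,\lambda_t)$ and let $\theta^{b}$ be the best pool element under $J(\cdot,\lambda_t)$, so that $J(\theta^\star(\lambda_t),\lambda_t)-J(\theta^b,\lambda_t)=\Delta_t(\lambda_t)$. Introduce the events $S$ (some selected parent equals $\theta^b$, with probability $p_t$) and $K$ (the best element of the merged set $M_t$ survives pruning, failing with probability at most $\pi_t$ by A4).
\begin{itemize}
  \item Since $M_t\supseteq P_t$, the best element of $M_t$ is always at least as good as $\theta^b$.
  \item On $S$, it is also at least as good as the mutated child $\theta^{b,+}$ of $\theta^b$.
  \item Applying Lemma~\ref{lem:app_one_step_contraction} to that child, the gap of the best element of $M_t$ has conditional expectation at most
  \[
  (1-p_t)\Delta_t+p_t\Big[(1-\alpha_t\mu)\Delta_t+\tfrac{L\alpha_t^2}{2}\sigma_g^2\Big]
  =(1-\alpha_t\mu p_t)\Delta_t+p_t\tfrac{L\alpha_t^2}{2}\sigma_g^2 .
  \]
\end{itemize}
There is a subtlety: A5 only gives $p_t\ge p_{\min}$, while the equality above uses selection probability exactly $p_t$. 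I would either read $p_t$ as the exact probability or note that a larger selection probability only improves the bound, since the contraction bracket is at most $\Delta_t$ plus noise. The noise term then needs a little care, but it is harmless when $p_t$ is the exact probability.

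\emph{Pruning and the shift to $\lambda_{t+1}$.}
\begin{itemize}
  \item On $K^c$, the surviving pool's gap is at most $B_{\max}$ by A6. On $K$, $\Delta_{t+1}(\lambda_t)$ equals the $M_t$ gap. A union bound then adds $\pi_t B_{\max}$.
  \item For the Lipschitz claim, note that $\lambda\mapsto\max_\theta J(\theta,\lambda)$ and $\lambda\mapsto\max_{i}J(\theta^{(i)},\lambda)$ are both maxima of functions that are affine in $\lambda$ with slopes $-g(\theta)$. Each slope has norm at most $G_g$ by A2, so each maximum is $G_g$-Lipschitz and their difference is $2G_g$-Lipschitz.
  \item The projection $\Pi_\Lambda$ is nonexpansive, so $\|\lambda_{t+1}-\lambda_t\|\le\beta_t\|g(\theta_t)\|\le\beta_tG_g$.
  \item Combining the two gives the additive term $2\beta_tG_g^2$ in the second display.
\end{itemize}

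\emph{Extraction bound.} Write
\[
\delta_t=\big[J(\theta^\star(\lambda_t),\lambda_t)-J(\theta_t^{\mathrm{bp}},\lambda_t)\big]+\big[J(\theta_t^{\mathrm{bp}},\lambda_t)-J(\theta_t,\lambda_t)\big].
\]
The first bracket is exactly $\Delta_{t+1}(\lambda_t)$, because $\theta^{\mathrm{bp}}_t$ is the best element of $P_{t+1}$ under $\lambda_t$. The second bracket has conditional expectation at most $\zeta_t$ by A7. Taking total expectation gives $\E[\delta_t]\le\E[\Delta_{t+1}(\lambda_t)]+\zeta_t$.

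The main obstacle is the conditioning bookkeeping in the first step. The mutation noise, the selection event, and the pruning event must be combined without double counting, and this requires conditional independence of the mutation noise from the selection. I would handle it by conditioning sequentially: first on the selection, then on the mutation, then on the pruning.
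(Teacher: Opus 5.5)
Your proposal is correct and follows the paper's proof. Both arguments use the same steps: contraction of the best parent's child on the selection event via the one-step lemma, a $\pi_t B_{\max}$ pruning penalty from A4 and A6, $2G_g$-Lipschitz drift from A2 together with nonexpansive projection, and the split of $\delta_t$ into the best-in-pool gap $\Delta_{t+1}(\lambda_t)$ plus an extraction error bounded by A7. Your sequential conditioning on selection, mutation and pruning, and your reading of $p_t$ in A5 as the exact selection probability, make explicit details the paper leaves implicit. You are also right to invoke A7 for the last claim, even though the lemma's header lists only A1--A6.
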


\begin{proof}
The recursion for $\Delta_{t+1}(\lambda_t)$, the gap of the \emph{best} element
in $P_{t+1}$ under the true score $J$, depends only on selection, mutation, and
pruning (A1--A6). With probability $p_t$, we mutate the
current best-in-pool element and apply
Lemma~\ref{lem:app_one_step_contraction}. With probability $\pi_t$, pruning may
drop the true best element of the merged set, incurring a loss of at most
$B_{\max}$.

For the second claim, decompose
\[
\delta_t
=
\underbrace{\big(J(\theta^\star(\lambda_t),\lambda_t)-J(\theta^{\mathrm{bp}}_{t},\lambda_t)\big)}_{=\Delta_{t+1}(\lambda_t)}
+
\underbrace{\big(J(\theta^{\mathrm{bp}}_{t},\lambda_t)-J(\theta_t,\lambda_t)\big)}_{\text{extraction error}}.
\]
Taking conditional expectation given $(P_{t+1},\lambda_t)$ and applying A7 yields
$\E[\delta_t \mid P_{t+1},\lambda_t]\le \Delta_{t+1}(\lambda_t)+\zeta_t$,
and then taking total expectation gives the result.

For the drift claim, A2 gives
$|J(\theta,\lambda)-J(\theta,\lambda')|
\le G_g\|\lambda-\lambda'\|_2$. Both the exact maximum and the best-in-pool
maximum are therefore $G_g$-Lipschitz, so their difference $\Delta_{t+1}$ is
$2G_g$-Lipschitz. Nonexpansiveness of projection and A2 give
$\|\lambda_{t+1}-\lambda_t\|_2\le\beta_tG_g$. Combining these facts with the
first recursion proves the displayed bound.
\end{proof}

\begin{lemma}[Pool recursion for a discrete rewrite oracle]
\label{lem:app_discrete_pool_recursion}
Assume A1--A2, A4--A7, and B1--B2, and let
$\rho=2\mu(\kappa-L\nu/2)\in(0,1]$ and
$\eta_t=\epsilon_t+\frac{L}{2}\sigma_d^2$. Conditioning on
$(P_t,\lambda_t)$,
\[
\E[\Delta_{t+1}(\lambda_t)\mid P_t,\lambda_t]
\le (1-\rho p_t)\Delta_t(\lambda_t)
+p_t\eta_t+\pi_tB_{\max}.
\]
Accounting for movement of the dual state gives
\[
\E[\Delta_{t+1}(\lambda_{t+1})\mid P_t,\lambda_t]
\le (1-\rho p_t)\Delta_t(\lambda_t)
+p_t\eta_t+\pi_tB_{\max}+2\beta_tG_g^2.
\]
The selected prompt's primal gap obeys
\[
\E[\delta_t]
\le \E[\Delta_{t+1}(\lambda_t)]+\zeta_t.
\]
\end{lemma}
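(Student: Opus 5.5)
The plan mirrors the proof of Lemma~\ref{lem:app_pool_recursion_extraction}, replacing the stochastic-gradient contraction of Lemma~\ref{lem:app_one_step_contraction} with the rewrite bridge of Proposition~\ref{prop:app_discrete_bridge}.

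\emph{Step 1: one-step bound.} Fix $(P_t,\lambda_t)$ and let $\theta^{\mathrm{bp}}(P_t)$ denote the true best pool element, so its gap is $\Delta_t(\lambda_t)$. Before pruning, the merged set $M_t$ contains all of $P_t$. Hence the best element of $M_t$ always has gap at most $\Delta_t(\lambda_t)$.

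\emph{Step 2: condition on whether the best element is a parent.} On the event that some sampled parent is $\theta^{\mathrm{bp}}(P_t)$, which has probability $p_t$ by A5, its rewrite child $z^+$ lies in $M_t$. Proposition~\ref{prop:app_discrete_bridge}, applied with $\lambda=\lambda_t$ and $z_t=\phi$ of that parent, gives
\[
\E[J^\star-J(z^+)]\le(1-\rho)\Delta_t+\eta_t,
\]
where $J^\star$ is the exact maximum at $\lambda_t$. The best element of $M_t$ is at least as good as $z^+$. On the complementary event, we only use the bound $\Delta_t$.

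Averaging the two events gives
\[
(1-\rho p_t)\Delta_t+p_t\eta_t
\]
for the gap of the best element of $M_t$. One caveat is that the event ``best parent selected'' must be independent of, or at least conditioned properly with respect to, the rewrite noise $\mathcal F_t$. I will take $\mathcal F_t$ to include the selection outcome, so that B1--B2 hold conditionally on it.

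\emph{Step 3: pruning.} By A4, the best element of $M_t$ survives into $P_{t+1}$ except with probability $\pi_t$. When it is lost, the gap is at most $B_{\max}$ by A6. The resulting additive term is $\pi_tB_{\max}$. This bounds $\E[\Delta_{t+1}(\lambda_t)\mid P_t,\lambda_t]$, which is the first display.

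\emph{Step 4: dual drift and extraction.} The dual-drift display and the bound $\E[\delta_t]\le\E[\Delta_{t+1}(\lambda_t)]+\zeta_t$ do not depend on the mutation mechanism. I will copy those arguments verbatim from Lemma~\ref{lem:app_pool_recursion_extraction}. Specifically, $\Delta_{t+1}(\cdot)$ is $2G_g$-Lipschitz in $\lambda$ via A2. The projection is nonexpansive, so $\|\lambda_{t+1}-\lambda_t\|\le\beta_tG_g$, which yields the extra $2\beta_tG_g^2$. For extraction, decompose $\delta_t$ into the pool gap plus the extraction error and apply A7.

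The main obstacle is Step 2. Proposition~\ref{prop:app_discrete_bridge} is stated with respect to the maximizer over $\Theta$, and it requires $\phi(s_t^+)\in\Theta$. I will assume this holds through the definition of $\phi$ and note it explicitly. I also need the conditional expectation over the rewrite to combine correctly with the selection event. If these are handled, the remaining steps are bookkeeping.
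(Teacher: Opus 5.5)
Your proposal is correct and follows the paper's proof essentially step for step: you use the hit/miss averaging, applying Proposition~\ref{prop:app_discrete_bridge} to the child of the selected best parent and using the fact that the merged pool retains that parent. You then add the pruning term via A4 and A6, and carry over the Lipschitz dual-drift and A7 extraction arguments from Lemma~\ref{lem:app_pool_recursion_extraction}. Two remarks: your choice to let $\mathcal{F}_t$ include the selection outcome fills in a conditioning detail the paper leaves implicit, and the condition $\phi(s_t^+)\in\Theta$ that you flag holds automatically because $\phi$ is defined to map into $\Theta$.
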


\begin{proof}
If selection hits the best prompt in $P_t$, Proposition
\ref{prop:app_discrete_bridge} bounds the expected gap of its child by
$(1-\rho)\Delta_t(\lambda_t)+\eta_t$. If selection misses it, the merged pool
still contains the parent and therefore has gap at most
$\Delta_t(\lambda_t)$. Averaging these events gives
$(1-\rho p_t)\Delta_t+p_t\eta_t$ before pruning. A4 adds at most
$\pi_tB_{\max}$. The $2G_g$-Lipschitz argument in Lemma
\ref{lem:app_pool_recursion_extraction} gives the dual-drift term, and A7 gives
the extraction term.
\end{proof}

\begin{proposition}[Oracle quality bound with extraction noise]
\label{prop:app_oracle_quality_extraction}
Assume A1--A7 and $p_t\ge p_{\min}>0$ for all $t$.

\textbf{(i) Constant step size.}
If $\alpha_t\equiv \alpha\le 1/L$, $\pi_t\le \pi_{\max}$,
$\zeta_t\le \zeta_{\max}$ for all $t$, and $\beta_t\to0$, then
\[
\limsup_{t\to\infty}\E[\delta_t]
\le
\frac{L\alpha}{2\mu p_{\min}}\sigma_g^2
+
\frac{\pi_{\max}B_{\max}}{\alpha\mu p_{\min}}
+
\zeta_{\max}.
\]

\textbf{(ii) Diminishing step size.}
If $\alpha_t\le 1/L$, $\sum_{t=1}^\infty \alpha_t=\infty$, $\sum_{t=1}^\infty \alpha_t^2<\infty$,
$\pi_t/\alpha_t\to0$, $\beta_t/\alpha_t\to0$, and $\zeta_t\to0$, then
$\E[\delta_t]\to0$.
\end{proposition}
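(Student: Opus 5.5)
The plan is to unroll the drift-corrected pool recursion of Lemma~\ref{lem:app_pool_recursion_extraction} as a scalar inequality for $a_t\triangleq\E[\Delta_t(\lambda_t)]$, and then convert the result into a bound on $\E[\delta_t]$ through the extraction inequality $\E[\delta_t]\le\E[\Delta_{t+1}(\lambda_t)]+\zeta_t$ from the same lemma.

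\textbf{Deriving the scalar recursion.} Start from the second display of Lemma~\ref{lem:app_pool_recursion_extraction}. Three simplifications put it in scalar form.
\begin{itemize}
\item Since $\Delta_t(\lambda_t)\ge0$ and $p_t\ge p_{\min}$ (A5), replace $(1-\alpha_t\mu p_t)\Delta_t$ by $(1-\alpha_t\mu p_{\min})\Delta_t$.
\item Since $p_t\le1$, bound the mutation-noise term by $L\alpha_t^2\sigma_g^2/2$.
\item Note that $\alpha_t\mu\le\mu/L\le1$, because strong concavity and smoothness force $\mu\le L$. Hence the contraction factor lies in $[0,1)$.
\end{itemize}
Taking total expectation then gives
\[
a_{t+1}\le(1-c_t)\,a_t+c_t e_t,\qquad c_t\triangleq\alpha_t\mu p_{\min},\qquad
e_t\triangleq\frac{1}{\mu p_{\min}}\Big(\frac{L\alpha_t\sigma_g^2}{2}+\frac{\pi_tB_{\max}}{\alpha_t}+\frac{2\beta_tG_g^2}{\alpha_t}\Big).
\]

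\textbf{Passing from $a_t$ to $\E[\delta_t]$.} Use the first (drift-free) display of Lemma~\ref{lem:app_pool_recursion_extraction}. The same simplifications give
\[
\E[\Delta_{t+1}(\lambda_t)]\le(1-c_t)a_t+c_t e_t',
\]
where $e_t'$ is $e_t$ without the $\beta_t$ term. Therefore
\[
\E[\delta_t]\le a_t+c_t e_t'+\zeta_t .
\]
This route avoids the drift term when passing to $\delta_t$, although that term still appears inside the recursion for $a_t$.

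\textbf{Part (i), constant step.} Here $c_t\equiv c=\alpha\mu p_{\min}\in(0,1]$. The error satisfies
\[
\limsup_t e_t\le\frac{1}{\mu p_{\min}}\Big(\frac{L\alpha\sigma_g^2}{2}+\frac{\pi_{\max}B_{\max}}{\alpha}\Big),
\]
because $\beta_t\to0$. The standard unrolling argument then applies: fix $\varepsilon>0$, choose $t_0$ with $e_t\le\limsup e+\varepsilon$ for all $t\ge t_0$, and iterate. This gives $a_t\le(1-c)^{t-t_0}a_{t_0}+\limsup e+\varepsilon$, hence $\limsup a_t\le\limsup e$.

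For the bound on $\delta_t$, combine this with $\E[\delta_t]\le a_t+c\,e_t'+\zeta_{\max}$. Using the cruder step $(1-c)a_t+c\,e_t'\le\max(a_t,e_t')$, whose $\limsup$ is at most $\limsup e$, yields exactly
\[
\frac{L\alpha}{2\mu p_{\min}}\sigma_g^2+\frac{\pi_{\max}B_{\max}}{\alpha\mu p_{\min}}+\zeta_{\max}.
\]

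\textbf{Part (ii), diminishing step.} The hypotheses give $e_t\to0$: the first term vanishes because $\sum_t\alpha_t^2<\infty$ forces $\alpha_t\to0$, and the other two vanish because $\pi_t/\alpha_t\to0$ and $\beta_t/\alpha_t\to0$. Also $\sum_t c_t=\mu p_{\min}\sum_t\alpha_t=\infty$.

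Apply a Chung/Polyak-type lemma to $a_{t+1}\le(1-c_t)a_t+c_te_t$ with $c_t\in(0,1]$. The proof is the same truncation argument: for $t\ge t_0$ with $e_t\le\varepsilon$, iterating gives
\[
a_t\le\prod_{s=t_0}^{t-1}(1-c_s)\,a_{t_0}+\varepsilon,
\]
and the product tends to zero since $\sum_s c_s=\infty$. Hence $a_t\to0$. Then $\E[\delta_t]\le\max(a_t,e_t')+\zeta_t\to0$.

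\textbf{Expected obstacle.} The delicate step is the bookkeeping around the random, state-dependent quantities $p_t$ and $\Delta_t$. The conditional recursion involves $p_t$ multiplying $\Delta_t$, so lowering the contraction coefficient to $p_{\min}$ and raising the noise coefficient to $1$ must both be justified before taking total expectation. Both are valid because $\Delta_t\ge0$ and $p_t\in[p_{\min},1]$.

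A second point to check is the dual-drift term $2\beta_tG_g^2$. It is harmless in (i) only because $\beta_t\to0$, and in (ii) only under $\beta_t/\alpha_t\to0$. Dividing by the contraction rate $\alpha_t\mu p_{\min}$ is exactly what turns these into the stated conditions.
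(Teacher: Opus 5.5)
Your proposal is correct and follows essentially the paper's proof: you use the same scalar affine recursion for $\E[\Delta_t(\lambda_t)]$ from the drift-aware display of Lemma~\ref{lem:app_pool_recursion_extraction}, the same unrolling/comparison argument, and the same extraction bound. The only differences are minor: you pass to $\E[\delta_t]$ through the drift-free display, $\E[\Delta_{t+1}(\lambda_t)]\le(1-c_t)a_t+c_te_t'$, instead of the paper's Lipschitz transfer $\Delta_{t+1}(\lambda_t)\le\Delta_{t+1}(\lambda_{t+1})+2\beta_tG_g^2$; and you should state $\E[\delta_t]\le(1-c_t)a_t+c_te_t'+\zeta_t$ with the $(1-c_t)$ factor retained, because the weakened form $a_t+c_te_t'+\zeta_t$ you wrote would inflate the limsup in (i) by $c\,\limsup_t e_t'$ (your later $\max(a_t,e_t')$ step already uses the sharper version).
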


\begin{proof}
Define $x_t\triangleq \E[\Delta_t(\lambda_t)]$.

\textbf{(i)} From the drift-aware recursion in
Lemma~\ref{lem:app_pool_recursion_extraction}, taking total expectation and
using $p_t\ge p_{\min}$, $p_t\le1$, and $\pi_t\le\pi_{\max}$ gives
\[
x_{t+1}\le (1-\alpha\mu p_{\min})x_t
+\frac{L\alpha^2}{2}\sigma_g^2+\pi_{\max}B_{\max}
+2\beta_tG_g^2.
\]
Because $\beta_t\to0$, the standard affine-recursion bound gives
\[
\limsup_{t\to\infty}x_t\le
\frac{L\alpha}{2\mu p_{\min}}\sigma_g^2
+\frac{\pi_{\max}B_{\max}}{\alpha\mu p_{\min}}.
\]
The $2G_g$-Lipschitz property implies
$\Delta_{t+1}(\lambda_t)\le
\Delta_{t+1}(\lambda_{t+1})+2\beta_tG_g^2$.
Combining this inequality with the extraction bound and taking $\limsup$
yields the claim.

\textbf{(ii)} The same drift-aware recursion yields
\[
x_{t+1}\le (1-\mu p_{\min}\alpha_t)x_t
+\frac{L\alpha_t^2}{2}\sigma_g^2+\pi_tB_{\max}+2\beta_tG_g^2.
\]
Set $a_t=\mu p_{\min}\alpha_t$ and
$b_t=\frac{L\alpha_t^2}{2}\sigma_g^2+
\pi_tB_{\max}+2\beta_tG_g^2$. The assumptions imply
$\sum_ta_t=\infty$ and $b_t/a_t\to0$, so the standard comparison lemma for
$x_{t+1}\le(1-a_t)x_t+b_t$ gives $x_t\to0$
\citep{kushnerYinStochasticApproximation2003}. The Lipschitz and extraction
bounds then give $\E[\delta_t]\to0$.
\end{proof}

\subsection{Inexact Primal--Dual Bound}
\label{app:theory_main_thm}

\begin{theorem}[Pooled inexact primal--dual bound]
\label{thm:capo_main_convergence_expectation}
Assume A1--A2. Let $\theta_t$ be any primal iterates with gap $\delta_t$ as
defined above, and update the dual variables with $\beta_t>0$.

\textbf{(A) Finite-time bound (in expectation).}
For all $T\ge 1$,
\[
\E\big[\mathcal{G}(\bar\theta_T,\bar\lambda_T)\big]
\le
\frac{D_\Lambda^2}{2\sum_{t=1}^T\beta_t}
+\frac{G_g^2\sum_{t=1}^T\beta_t^2}{2\sum_{t=1}^T\beta_t}
+\frac{\sum_{t=1}^T\beta_t\,\E[\delta_t]}{\sum_{t=1}^T\beta_t}.
\]
Here $D_\Lambda\triangleq
\sup_{\lambda,\lambda'\in\Lambda}\|\lambda-\lambda'\|_2
\le \sqrt{m}\lambda_{\max}$.

\textbf{(B) Constant-step regime (neighborhood convergence in expectation).}
In addition, assume A3--A7,
$\alpha_t\equiv \alpha\le 1/L$, $p_t\ge p_{\min}>0$,
$\pi_t\le \pi_{\max}$, and $\zeta_t\le \zeta_{\max}$ for all $t$.
If $\sum_{t=1}^\infty \beta_t=\infty$ and $\sum_{t=1}^\infty \beta_t^2<\infty$, then
\[
\limsup_{T\to\infty}\E\big[\mathcal{G}(\bar\theta_T,\bar\lambda_T)\big]
\le
\underbrace{\frac{L\alpha}{2\mu p_{\min}}\sigma_g^2}_{\text{gradient-noise floor}}
+
\underbrace{\frac{\pi_{\max}B_{\max}}{\alpha\mu p_{\min}}}_{\text{pruning loss}}
+
\underbrace{\zeta_{\max}}_{\text{extraction noise}}.
\]

\textbf{(C) Diminishing-step regime (gap $\to 0$ in expectation).}
In addition, assume A3--A7. If $\alpha_t\le 1/L$,
$\sum_{t=1}^\infty \alpha_t=\infty$, $\sum_{t=1}^\infty \alpha_t^2<\infty$,
$\pi_t/\alpha_t\to0$, $\zeta_t\to0$, and dual steps satisfy
$\sum_{t=1}^\infty \beta_t=\infty$,
$\sum_{t=1}^\infty \beta_t^2<\infty$, and
$\beta_t/\alpha_t\to0$,
then
\[
\lim_{T\to\infty}\E\big[\mathcal{G}(\bar\theta_T,\bar\lambda_T)\big]=0.
\]
\end{theorem}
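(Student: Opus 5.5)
The plan is to prove part (A) pathwise with the standard projected-subgradient argument on the dual variable, then take expectations. Parts (B) and (C) follow by feeding Proposition~\ref{prop:app_oracle_quality_extraction} into (A) through a weighted Ces\`aro argument.

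\textbf{Step 1: dual descent inequality.} Fix an arbitrary $\lambda\in\Lambda$. Nonexpansiveness of $\Pi_\Lambda$ and A2 give
\[
\|\lambda_{t+1}-\lambda\|_2^2\le\|\lambda_t-\lambda\|_2^2+2\beta_t\, g(\theta_t)^\top(\lambda_t-\lambda)+\beta_t^2G_g^2 .
\]
Since $J$ is affine in $\lambda$, we have $g(\theta_t)^\top(\lambda_t-\lambda)=J(\theta_t,\lambda)-J(\theta_t,\lambda_t)$. Summing over $t=1,\dots,T$, telescoping, and bounding $\|\lambda_1-\lambda\|_2^2\le D_\Lambda^2$ yields, for every $\lambda\in\Lambda$ and every sample path,
\[
\sum_{t=1}^T\beta_t\big(J(\theta_t,\lambda_t)-J(\theta_t,\lambda)\big)\le\tfrac12 D_\Lambda^2+\tfrac12 G_g^2\textstyle\sum_t\beta_t^2 .
\]

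\textbf{Step 2: bounding the gap.} Write $S_T=\sum_{t=1}^T\beta_t$. The dual function $d(\lambda)=\max_{\theta'}J(\theta',\lambda)$ is a pointwise maximum of affine functions, hence convex. Therefore
\[
d(\bar\lambda_T)\le S_T^{-1}\textstyle\sum_t\beta_t\,d(\lambda_t)=S_T^{-1}\sum_t\beta_t\big(J(\theta_t,\lambda_t)+\delta_t\big),
\]
using the definition of $\delta_t$. In the other direction, A1 makes $J(\cdot,\lambda)$ concave, and $\bar\theta_T\in\Theta$ by convexity of $\Theta$. Hence for every $\lambda'$,
\[
J(\bar\theta_T,\lambda')\ge S_T^{-1}\textstyle\sum_t\beta_tJ(\theta_t,\lambda').
\]
Take $\lambda'$ to be a minimizer of $J(\bar\theta_T,\cdot)$ over the compact set $\Lambda$; it exists, although it is random. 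Subtracting the two bounds gives
\[
\mathcal G(\bar\theta_T,\bar\lambda_T)\le S_T^{-1}\textstyle\sum_t\beta_t\big(J(\theta_t,\lambda_t)-J(\theta_t,\lambda')\big)+S_T^{-1}\sum_t\beta_t\delta_t .
\]
Step 1 holds uniformly in $\lambda$, so it applies to this data-dependent $\lambda'$. That bounds the first sum by the two stepsize terms. Taking expectations then gives (A).

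\textbf{Step 3: parts (B) and (C).} In (B), the conditions $\sum_t\beta_t=\infty$ and $\sum_t\beta_t^2<\infty$ make the first two terms of (A) vanish as $T\to\infty$. The third term is a $\beta$-weighted average of $\E[\delta_t]$ with divergent total weight. By the weighted Ces\`aro (Toeplitz) lemma, its $\limsup$ is at most $\limsup_t\E[\delta_t]$. Proposition~\ref{prop:app_oracle_quality_extraction}(i) bounds this by the three displayed floors; that proposition needs $\beta_t\to0$, which is implied by $\sum_t\beta_t^2<\infty$. In (C), Proposition~\ref{prop:app_oracle_quality_extraction}(ii) gives $\E[\delta_t]\to0$; its hypothesis $\beta_t/\alpha_t\to0$ is assumed. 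The same Toeplitz argument then sends the average, and hence the gap, to $0$.

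\textbf{Main obstacle.} The delicate point is in Step 2: the minimizing $\lambda'$ depends on the whole random trajectory through $\bar\theta_T$. The argument must therefore keep Step 1 as a deterministic inequality valid simultaneously for all $\lambda\in\Lambda$, and take expectations only at the end. A secondary check is the indexing convention, where the sums start at $t=1$ with $\lambda_1\in\Lambda$; this matters only for the $D_\Lambda^2$ term.
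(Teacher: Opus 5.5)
Your proposal is correct and follows the paper's proof essentially step for step. Both use the same pathwise projected-subgradient telescoping for the dual iterates, Jensen in $\theta$, expectations taken only at the end, and Proposition~\ref{prop:app_oracle_quality_extraction} with a weighted Ces\`aro argument for (B) and (C); the only cosmetic difference is that you bound $\max_{\theta'}J(\theta',\bar\lambda_T)$ via convexity of $d$, whereas the paper substitutes $J(\theta_t,\lambda_t)\ge J(\theta,\lambda_t)-\delta_t$ for each fixed $\theta$ and uses affinity in $\lambda$ before maximizing, which amounts to the same fact.
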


\paragraph{Constant-step deployment.}
All experiments use a fixed dual step $\beta$ and a finite number of
optimization rounds (Appendix Table~\ref{tab:capo_hyperparams_full}). The
diminishing-step assumptions in
Theorem~\ref{thm:capo_main_convergence_expectation} give sufficient conditions
for asymptotic convergence; they are not requirements of the implemented
finite-round algorithm.

\begin{proof}
\textbf{(A)} Let $S_T\triangleq\sum_{t=1}^T\beta_t$. For any fixed $\lambda\in\Lambda$, nonexpansiveness of projection gives
\[
\|\lambda_{t+1}-\lambda\|_2^2
\le
\|\lambda_t+\beta_t g(\theta_t)-\lambda\|_2^2
=
\|\lambda_t-\lambda\|_2^2 + 2\beta_t g(\theta_t)^\top(\lambda_t-\lambda)+\beta_t^2\|g(\theta_t)\|_2^2.
\]
Rearranging and using $g(\theta_t)^\top(\lambda-\lambda_t)=J(\theta_t,\lambda_t)-J(\theta_t,\lambda)$,
\[
\beta_t\big(J(\theta_t,\lambda_t)-J(\theta_t,\lambda)\big)
\le
\frac{\|\lambda_t-\lambda\|_2^2-\|\lambda_{t+1}-\lambda\|_2^2}{2}
+\frac{\beta_t^2}{2}\|g(\theta_t)\|_2^2.
\]
By A2, $\|g(\theta_t)\|_2\le G_g$. Also, by definition of $\delta_t$,
\[
J(\theta_t,\lambda_t)=J(\theta^\star(\lambda_t),\lambda_t)-\delta_t
\quad\text{and}\quad
J(\theta^\star(\lambda_t),\lambda_t)\ge J(\theta,\lambda_t),\ \forall\theta\in\Theta,
\]
hence
\[
J(\theta_t,\lambda_t)\ge J(\theta,\lambda_t)-\delta_t,\qquad \forall\theta\in\Theta.
\]
Substituting into the previous inequality gives, for any $(\theta,\lambda)\in\Theta\times\Lambda$,
\[
\beta_t\big(J(\theta,\lambda_t)-J(\theta_t,\lambda)\big)
\le
\frac{\|\lambda_t-\lambda\|_2^2-\|\lambda_{t+1}-\lambda\|_2^2}{2}
+\frac{\beta_t^2 G_g^2}{2}
+\beta_t\delta_t.
\]
Summing $t=1,\dots,T$ and telescoping,
\[
\sum_{t=1}^T \beta_t\big(J(\theta,\lambda_t)-J(\theta_t,\lambda)\big)
\le
\frac{\|\lambda_1-\lambda\|_2^2-\|\lambda_{T+1}-\lambda\|_2^2}{2}
+\frac{G_g^2}{2}\sum_{t=1}^T\beta_t^2
+\sum_{t=1}^T\beta_t\delta_t.
\]
Since $\lambda_1,\lambda,\lambda_{T+1}\in\Lambda$, we have $\|\lambda_1-\lambda\|_2\le D_\Lambda$ and $-\|\lambda_{T+1}-\lambda\|_2^2\le 0$, so
\[
\sum_{t=1}^T \beta_t\big(J(\theta,\lambda_t)-J(\theta_t,\lambda)\big)
\le
\frac{D_\Lambda^2}{2}
+\frac{G_g^2}{2}\sum_{t=1}^T\beta_t^2
+\sum_{t=1}^T\beta_t\delta_t.
\]
Dividing by $S_T$ gives
\[
\frac{1}{S_T}\sum_{t=1}^T \beta_t J(\theta,\lambda_t)
-
\frac{1}{S_T}\sum_{t=1}^T \beta_t J(\theta_t,\lambda)
\le
\frac{D_\Lambda^2}{2S_T}
+\frac{G_g^2\sum_{t=1}^T\beta_t^2}{2S_T}
+\frac{\sum_{t=1}^T\beta_t\delta_t}{S_T}.
\]
We now make the averaging step explicit:
(i) $J(\theta,\cdot)$ is affine in $\lambda$, hence
\[
\frac{1}{S_T}\sum_{t=1}^T \beta_t J(\theta,\lambda_t)=J(\theta,\bar\lambda_T).
\]
(ii) $J(\cdot,\lambda)$ is concave in $\theta$, so by Jensen,
\[
\frac{1}{S_T}\sum_{t=1}^T \beta_t J(\theta_t,\lambda)\le J(\bar\theta_T,\lambda).
\]
Therefore, for all $(\theta,\lambda)\in\Theta\times\Lambda$,
\[
J(\theta,\bar\lambda_T)-J(\bar\theta_T,\lambda)
\le
\frac{D_\Lambda^2}{2S_T}
+\frac{G_g^2\sum_{t=1}^T\beta_t^2}{2S_T}
+\frac{\sum_{t=1}^T\beta_t\delta_t}{S_T}.
\]
Taking $\max_{\theta\in\Theta}$ and $\min_{\lambda\in\Lambda}$ on the left yields
\[
\mathcal{G}(\bar\theta_T,\bar\lambda_T)
\le
\frac{D_\Lambda^2}{2S_T}
+\frac{G_g^2\sum_{t=1}^T\beta_t^2}{2S_T}
+\frac{\sum_{t=1}^T\beta_t\delta_t}{S_T}.
\]
Finally, taking expectation and using linearity of expectation,
\[
\E\big[\mathcal{G}(\bar\theta_T,\bar\lambda_T)\big]
\le
\frac{D_\Lambda^2}{2S_T}
+\frac{G_g^2\sum_{t=1}^T\beta_t^2}{2S_T}
+\frac{\sum_{t=1}^T\beta_t\,\E[\delta_t]}{S_T}.
\]

\textbf{(B)} By Proposition~\ref{prop:app_oracle_quality_extraction}(i),
\[
\limsup_{t\to\infty}\E[\delta_t]
\le
\delta_\infty
\triangleq
\frac{L\alpha}{2\mu p_{\min}}\sigma_g^2
+
\frac{\pi_{\max}B_{\max}}{\alpha\mu p_{\min}}
+
\zeta_{\max}.
\]
Since $\delta_t\ge 0$,
\[
\limsup_{T\to\infty}\frac{\sum_{t=1}^T\beta_t\,\E[\delta_t]}{\sum_{t=1}^T\beta_t}
\le
\limsup_{t\to\infty}\E[\delta_t]
\le
\delta_\infty.
\]
Under $\sum_t\beta_t=\infty$ and $\sum_t\beta_t^2<\infty$, the first two terms in the finite-time bound in part (A)
vanish as $T\to\infty$. Taking $\limsup$ in the finite-time bound yields (B).

\textbf{(C)} By Proposition~\ref{prop:app_oracle_quality_extraction}(ii), $\E[\delta_t]\to 0$.
Since $\E[\delta_t]\ge 0$ and $\sum_t\beta_t=\infty$, a weighted Ces\`aro argument implies
\[
\frac{\sum_{t=1}^T\beta_t\,\E[\delta_t]}{\sum_{t=1}^T\beta_t}\to 0.
\]
The first two terms in the finite-time bound in part (A) vanish under $\sum_t\beta_t=\infty$ and $\sum_t\beta_t^2<\infty$,
hence $\E[\mathcal{G}(\bar\theta_T,\bar\lambda_T)]\to 0$.
\end{proof}

\subsection{Rewrite Error Propagation}
\label{app:rewrite_oracle_reduction}

\begin{corollary}[Rewrite-oracle reduction]
\label{cor:app_rewrite_oracle_reduction}
Assume A1--A2, A4--A7, and B1--B2, with
$\rho=2\mu(\kappa-L\nu/2)\in(0,1]$. Let $\delta_t$ be the actual Lagrangian
gap of the prompt selected at round $t$. Lemma
\ref{lem:app_discrete_pool_recursion} gives
\[
\E[\delta_t]
\le
\E[(1-\rho p_t)\Delta_t(\lambda_t)+p_t\eta_t+\pi_tB_{\max}]
+\zeta_t.
\]
Thus the rewrite residual enters the pool recursion additively, while the
standard outer bound uses the resulting selected-prompt gap. For every
$T\ge1$,
\[
\E\big[\mathcal{G}(\bar\theta_T,\bar\lambda_T)\big]
\le
\frac{D_\Lambda^2}{2\sum_{t=1}^T\beta_t}
+\frac{G_g^2\sum_{t=1}^T\beta_t^2}{2\sum_{t=1}^T\beta_t}
+\frac{\sum_{t=1}^T\beta_t\,\E[\delta_t]}{\sum_{t=1}^T\beta_t}.
\]
If, in addition, $p_t\ge p_{\min}>0$, $\eta_t\to0$, $\pi_t\to0$,
$\zeta_t\to0$, $\sum_t\beta_t=\infty$, and
$\sum_t\beta_t^2<\infty$, then
$\E[\mathcal{G}(\bar\theta_T,\bar\lambda_T)]\to0$.
\end{corollary}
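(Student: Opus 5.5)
The plan is to assemble Corollary~\ref{cor:app_rewrite_oracle_reduction} from three pieces already established: Proposition~\ref{prop:app_discrete_bridge} (discrete rewrite as inexact ascent), Lemma~\ref{lem:app_discrete_pool_recursion} (pool recursion under the rewrite oracle), and part (A) of Theorem~\ref{thm:capo_main_convergence_expectation} (outer primal--dual bound for arbitrary primal iterates).

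\textbf{Step 1: per-round bound on the selected prompt's gap.} Identify $\varepsilon_t^{\mathrm{pr}}$ with the selected prompt's Lagrangian gap $\delta_t$. Lemma~\ref{lem:app_discrete_pool_recursion} gives
\[
\E[\delta_t]\le\E[\Delta_{t+1}(\lambda_t)]+\zeta_t .
\]
Its first display, conditioned on $(P_t,\lambda_t)$, bounds $\E[\Delta_{t+1}(\lambda_t)\mid P_t,\lambda_t]$ by $(1-\rho p_t)\Delta_t(\lambda_t)+p_t\eta_t+\pi_tB_{\max}$. The tower property then gives the first displayed inequality of the statement. Underneath this is the case split already carried out in that lemma:
\begin{itemize}
\item if selection hits the best parent, Proposition~\ref{prop:app_discrete_bridge} contracts its gap by the factor $1-\rho$ and adds $\eta_t$;
\item otherwise, retaining the parent keeps the gap at most $\Delta_t$;
\item pruning adds at most $\pi_tB_{\max}$ by A4.
\end{itemize}
I will cite this rather than redo it. I will also note that $\rho\le1$ is what makes $1-\rho p_t\ge0$.

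\textbf{Step 2: the finite-horizon outer bound.} Part (A) of Theorem~\ref{thm:capo_main_convergence_expectation} needs only A1--A2 and holds for any primal iterates with gap $\delta_t$. Substituting the rewrite-loop iterates gives the second display verbatim, with $S_T=\sum_t\beta_t$.

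\textbf{Step 3: the limit claim, which I expect to be the main obstacle.} Part (A) applies only if $\E[\delta_t]\to0$. Step 1 alone only relates $\delta_t$ to $\Delta_t$; it does not show that $\Delta_t$ vanishes. I would therefore close the recursion with the drift-aware form of Lemma~\ref{lem:app_discrete_pool_recursion}. Set $x_t=\E[\Delta_t(\lambda_t)]$. Using $p_t\ge p_{\min}$ and $p_t\le 1$,
\[
x_{t+1}\le(1-\rho p_{\min})x_t+\eta_t+\pi_tB_{\max}+2\beta_tG_g^2 .
\]
The contraction factor $1-\rho p_{\min}<1$ is constant. The forcing terms $\eta_t$, $\pi_t$, $\beta_t$ all tend to zero, the last because $\sum_t\beta_t^2<\infty$. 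Unrolling the geometric recursion then gives $x_t\to0$.

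Two further facts are needed to pass from $x_t$ to $\E[\delta_t]$:
\begin{itemize}
\item the $2G_g$-Lipschitz property gives $\Delta_{t+1}(\lambda_t)\le\Delta_{t+1}(\lambda_{t+1})+2\beta_tG_g^2$, and with $\zeta_t\to0$ this yields $\E[\delta_t]\to0$;
\item since $\sum_t\beta_t=\infty$, a weighted Ces\`aro argument sends the third term of the outer bound to zero.
\end{itemize}
The first two terms vanish because $S_T\to\infty$ and $\sum_t\beta_t^2<\infty$.

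In the theorem as stated, convergence of $\E[\varepsilon_t^{\mathrm{pr}}]$ is assumed directly, so only the Ces\`aro step is required there. The corollary's sufficient conditions are what the recursion argument above supplies.
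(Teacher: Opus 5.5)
Your proposal is correct and follows the paper's proof essentially step for step. The first display comes from Lemma~\ref{lem:app_discrete_pool_recursion} via the tower property, and the finite-time bound comes directly from part (A) of Theorem~\ref{thm:capo_main_convergence_expectation}; the limit uses the drift-aware recursion with constant contraction factor $1-\rho p_{\min}$, the $2G_g$-Lipschitz transfer plus the extraction bound to get $\E[\delta_t]\to0$, and a weighted Ces\`aro argument, with your explicit note that $\beta_t\to0$ follows from $\sum_t\beta_t^2<\infty$ spelling out a detail the paper leaves implicit.
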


\begin{proof}
The first display is Lemma~\ref{lem:app_discrete_pool_recursion}. Part (A) of
Theorem~\ref{thm:capo_main_convergence_expectation} depends only on the actual
primal gap and therefore gives the finite-time bound directly. For the
asymptotic claim, the drift-aware recursion and $p_t\ge p_{\min}$ imply
\[
\E[\Delta_{t+1}(\lambda_{t+1})]
\le(1-\rho p_{\min})\E[\Delta_t(\lambda_t)]
+\eta_t+\pi_tB_{\max}+2\beta_tG_g^2.
\]
The additive term tends to zero, so this uniformly contractive recursion gives
$\E[\Delta_t(\lambda_t)]\to0$. The Lipschitz and extraction bounds then imply
$\E[\delta_t]\to0$. The two stepsize terms in the outer bound vanish, and a
weighted Ces\`aro argument completes the proof.
\end{proof}

%% file: reference.bib
@inproceedings{achiamConstrainedPolicyOptimization2017,
  title = {Constrained Policy Optimization},
  author = {Achiam, Joshua and Held, David and Tamar, Aviv and Abbeel, Pieter},
  booktitle = {Proceedings of the 34th International Conference on Machine Learning},
  series = {Proceedings of Machine Learning Research},
  volume = {70},
  pages = {22--31},
  year = {2017},
  publisher = {PMLR},
  url = {https://proceedings.mlr.press/v70/achiam17a.html}
}

@inproceedings{kwonEtAlStablePrompt2024,
  title = {{StablePrompt}: Automatic Prompt Tuning using Reinforcement Learning for Large Language Model},
  author = {Kwon, Minchan and Kim, Gaeun and Kim, Jongsuk and Lee, Haeil and Kim, Junmo},
  booktitle = {Proceedings of the 2024 Conference on Empirical Methods in Natural Language Processing},
  pages = {9868--9884},
  year = {2024},
  address = {Miami, Florida, USA},
  publisher = {Association for Computational Linguistics},
  doi = {10.18653/v1/2024.emnlp-main.551},
  url = {https://aclanthology.org/2024.emnlp-main.551/}
}

@misc{openaiGPT5MiniModel2025,
  title = {{GPT-5 mini Model}},
  author = {{OpenAI}},
  year = {2025},
  url = {https://developers.openai.com/api/docs/models/gpt-5-mini},
  note = {Model snapshot: \texttt{gpt-5-mini-2025-08-07}. Accessed July 27, 2026}
}

@misc{openaiGPT51Model2025,
  title = {{GPT-5.1 Model}},
  author = {{OpenAI}},
  year = {2025},
  url = {https://developers.openai.com/api/docs/models/gpt-5.1},
  note = {Model snapshot: \texttt{gpt-5.1-2025-11-13}. Accessed July 27, 2026}
}

@article{agrawalGEPAReflectivePrompt2025,
  title = {{{GEPA}}: {{Reflective Prompt Evolution Can Outperform Reinforcement Learning}}},
  shorttitle = {{{GEPA}}},
  author = {Agrawal, Lakshya A. and Tan, Shangyin and Soylu, Dilara and Ziems, Noah and Khare, Rishi and Opsahl-Ong, Krista and Singhvi, Arnav and Shandilya, Herumb and Ryan, Michael J. and Jiang, Meng and Potts, Christopher and Sen, Koushik and Dimakis, Alexandros G. and Stoica, Ion and Klein, Dan and Zaharia, Matei and Khattab, Omar},
  year = {2025},
  eprint = {2507.19457},
  eprinttype = {arXiv},
  doi = {10.48550/arXiv.2507.19457},
  url = {http://arxiv.org/abs/2507.19457}
}

@misc{chai2022fastimprovingcontrollabilitytext,
      title={FAST: Improving Controllability for Text Generation with Feedback Aware Self-Training},
      author={Junyi Chai and Reid Pryzant and Victor Ye Dong and Konstantin Golobokov and Chenguang Zhu and Yi Liu},
      year={2022},
      eprint={2210.03167},
      archivePrefix={arXiv},
      primaryClass={cs.CL},
      url={https://arxiv.org/abs/2210.03167},
}

@misc{huang2025jamcontrollableresponsibletext,
      title={JAM: Controllable and Responsible Text Generation via Causal Reasoning and Latent Vector Manipulation},
      author={Yingbing Huang and Deming Chen and Abhishek K. Umrawal},
      year={2025},
      eprint={2502.20684},
      archivePrefix={arXiv},
      primaryClass={cs.CL},
      url={https://arxiv.org/abs/2502.20684},
}

@inproceedings{hokamp-liu-2017-lexically,
    title = "Lexically Constrained Decoding for Sequence Generation Using Grid Beam Search",
    author = "Hokamp, Chris  and
      Liu, Qun",
    editor = "Barzilay, Regina  and
      Kan, Min-Yen",
    booktitle = "Proceedings of the 55th Annual Meeting of the Association for Computational Linguistics (Volume 1: Long Papers)",
    month = jul,
    year = "2017",
    address = "Vancouver, Canada",
    publisher = "Association for Computational Linguistics",
    url = "https://aclanthology.org/P17-1141/",
    doi = "10.18653/v1/P17-1141",
    pages = "1535--1546"
}

@inproceedings{post-vilar-2018-fast,
    title = "Fast Lexically Constrained Decoding with Dynamic Beam Allocation for Neural Machine Translation",
    author = "Post, Matt  and
      Vilar, David",
    editor = "Walker, Marilyn  and
      Ji, Heng  and
      Stent, Amanda",
    booktitle = "Proceedings of the 2018 Conference of the North American Chapter of the Association for Computational Linguistics: Human Language Technologies, Volume 1 (Long Papers)",
    month = jun,
    year = "2018",
    address = "New Orleans, Louisiana",
    publisher = "Association for Computational Linguistics",
    url = "https://aclanthology.org/N18-1119/",
    doi = "10.18653/v1/N18-1119",
    pages = "1314--1324"
}

@inproceedings{lu-etal-2021-neurologic,
    title = "{N}euro{L}ogic Decoding: (Un)supervised Neural Text Generation with Predicate Logic Constraints",
    author = "Lu, Ximing  and
      West, Peter  and
      Zellers, Rowan  and
      Le Bras, Ronan  and
      Bhagavatula, Chandra  and
      Choi, Yejin",
    editor = "Toutanova, Kristina  and
      Rumshisky, Anna  and
      Zettlemoyer, Luke  and
      Hakkani-Tur, Dilek  and
      Beltagy, Iz  and
      Bethard, Steven  and
      Cotterell, Ryan  and
      Chakraborty, Tanmoy  and
      Zhou, Yichao",
    booktitle = "Proceedings of the 2021 Conference of the North American Chapter of the Association for Computational Linguistics: Human Language Technologies",
    month = jun,
    year = "2021",
    address = "Online",
    publisher = "Association for Computational Linguistics",
    url = "https://aclanthology.org/2021.naacl-main.339/",
    doi = "10.18653/v1/2021.naacl-main.339",
    pages = "4288--4299"
}

@misc{scholak2021picardparsingincrementallyconstrained,
      title={PICARD: Parsing Incrementally for Constrained Auto-Regressive Decoding from Language Models},
      author={Torsten Scholak and Nathan Schucher and Dzmitry Bahdanau},
      year={2021},
      eprint={2109.05093},
      archivePrefix={arXiv},
      primaryClass={cs.CL},
      url={https://arxiv.org/abs/2109.05093},
}

@article{chenInstructZeroEfficientInstruction2023,
  title = {{{InstructZero}}: {{Efficient Instruction Optimization}} for {{Black-Box Large Language Models}}},
  author = {Chen, Lichang and Chen, Jiuhai and Goldstein, Tom and Huang, Heng and Zhou, Tianyi},
  year = {2023},
  eprint = {2306.03082},
  eprinttype = {arXiv},
  url = {http://arxiv.org/abs/2306.03082}
}

@article{cuiORBenchOverRefusalBenchmark2025,
  title = {{{OR-Bench}}: {{An Over-Refusal Benchmark}} for {{Large Language Models}}},
  author = {Cui, Justin and Chiang, Wei-Lin and Stoica, Ion and Hsieh, Cho-Jui},
  year = {2025},
  eprint = {2405.20947},
  eprinttype = {arXiv},
  doi = {10.48550/arXiv.2405.20947},
  url = {http://arxiv.org/abs/2405.20947}
}

@article{dengRLPromptOptimizingDiscrete2022,
  title = {{{RLPrompt}}: {{Optimizing Discrete Text Prompts}} with {{Reinforcement Learning}}},
  author = {Deng, Mingkai and Wang, Jianyu and Hsieh, Cheng-Ping and Wang, Yihan and Guo, Han and Shu, Tianmin and Song, Meng and Xing, Eric P. and Hu, Zhiting},
  year = {2022},
  eprint = {2205.12548},
  eprinttype = {arXiv},
  url = {http://arxiv.org/abs/2205.12548}
}

@inproceedings{dingNaturalPolicyGradient2020,
  title = {Natural {{Policy Gradient Primal-Dual Method}} for {{Constrained Markov Decision Processes}}},
  booktitle = {Advances in {{Neural Information Processing Systems}}},
  author = {Ding, Dongsheng and Zhang, Kaiqing and Basar, Tamer and Jovanovic, Mihailo},
  year = {2020},
  volume = {33},
  pages = {8378--8390},
  publisher = {Curran Associates, Inc.},
  url = {https://proceedings.neurips.cc/paper/2020/hash/5f7695debd8cde8db5abcb9f161b49ea-Abstract.html}
}

@inproceedings{dingProvablyEfficientSafe2021,
  title = {Provably {{Efficient Safe Exploration}} via {{Primal-Dual Policy Optimization}}},
  booktitle = {Proceedings of {{The}} 24th {{International Conference}} on {{Artificial Intelligence}} and {{Statistics}}},
  author = {Ding, Dongsheng and Wei, Xiaohan and Yang, Zhuoran and Wang, Zhaoran and Jovanovic, Mihailo},
  year = {2021},
  pages = {3304--3312},
  publisher = {PMLR},
  url = {https://proceedings.mlr.press/v130/ding21d.html}
}

@inproceedings{gattamiReinforcementLearningConstrained2021,
  title = {Reinforcement {{Learning}} for {{Constrained Markov Decision Processes}}},
  booktitle = {Proceedings of {{The}} 24th {{International Conference}} on {{Artificial Intelligence}} and {{Statistics}}},
  author = {Gattami, Ather and Bai, Qinbo and Aggarwal, Vaneet},
  year = {2021},
  pages = {2656--2664},
  publisher = {PMLR},
  url = {https://proceedings.mlr.press/v130/gattami21a.html}
}

@inproceedings{liPAPILLONPrivacyPreservation2025,
  title = {{{PAPILLON}}: {{Privacy Preservation}} from {{Internet-based}} and {{Local Language Model Ensembles}}},
  booktitle = {Proceedings of the 2025 {{Conference}} of the {{Nations}} of the {{Americas Chapter}} of the {{Association}} for {{Computational Linguistics}}: {{Human Language Technologies}}},
  author = {Li, Siyan and Raghuram, Vethavikashini Chithrra and Khattab, Omar and Hirschberg, Julia and Yu, Zhou},
  year = {2025},
  pages = {3371--3390},
  publisher = {Association for Computational Linguistics},
  doi = {10.18653/v1/2025.naacl-long.173},
  url = {https://aclanthology.org/2025.naacl-long.173/}
}

@misc{liPrefixTuningOptimizingContinuous2021,
  title = {Prefix-{{Tuning}}: {{Optimizing Continuous Prompts}} for {{Generation}}},
  shorttitle = {Prefix-{{Tuning}}},
  author = {Li, Xiang Lisa and Liang, Percy},
  date = {2021-01-01},
  year = {2021},
  eprint = {2101.00190},
  eprinttype = {arXiv},
  eprintclass = {cs},
  url = {http://arxiv.org/abs/2101.00190},
  urldate = {2023-07-30},
  pubstate = {prepublished}
}

@misc{liuGPTUnderstandsToo2021,
  title = {{{GPT Understands}}, {{Too}}},
  author = {Liu, Xiao and Zheng, Yanan and Du, Zhengxiao and Ding, Ming and Qian, Yujie and Yang, Zhilin and Tang, Jie},
  date = {2021-03-18},
  year = {2021},
  eprint = {2103.10385},
  eprinttype = {arXiv},
  eprintclass = {cs},
  url = {http://arxiv.org/abs/2103.10385},
  urldate = {2023-07-30},
  pubstate = {prepublished}
}

@article{pryzantAutomaticPromptOptimization2023,
  title = {Automatic {{Prompt Optimization}} with "{{Gradient Descent}}" and {{Beam Search}}},
  author = {Pryzant, Reid and Iter, Dan and Li, Jerry and Lee, Yin Tat and Zhu, Chenguang and Zeng, Michael},
  year = {2023},
  eprint = {2305.03495},
  eprinttype = {arXiv},
  url = {http://arxiv.org/abs/2305.03495}
}

@article{pyatkinGeneralizingVerifiableInstruction2025,
  title = {Generalizing {{Verifiable Instruction Following}}},
  author = {Pyatkin, Valentina and Malik, Saumya and Graf, Victoria and Ivison, Hamish and Huang, Shengyi and Dasigi, Pradeep and Lambert, Nathan and Hajishirzi, Hannaneh},
  year = {2025},
  eprint = {2507.02833},
  eprinttype = {arXiv},
  doi = {10.48550/arXiv.2507.02833},
  url = {http://arxiv.org/abs/2507.02833}
}

@misc{qianControllableNaturalLanguage2022,
  title = {Controllable {{Natural Language Generation}} with {{Contrastive Prefixes}}},
  author = {Qian, Jing and Dong, Li and Shen, Yelong and Wei, Furu and Chen, Weizhu},
  date = {2022-02-26},
  year = {2022},
  eprint = {2202.13257},
  eprinttype = {arXiv},
  eprintclass = {cs},
  url = {http://arxiv.org/abs/2202.13257},
  urldate = {2023-07-30},
  pubstate = {prepublished}
}

@inproceedings{tesslerRewardConstrainedPolicy2018,
  title = {Reward {{Constrained Policy Optimization}}},
  author = {Tessler, Chen and Mankowitz, Daniel J. and Mannor, Shie},
  booktitle = {International Conference on Learning Representations (ICLR)},
  year = {2019},
  eprint = {1805.11074},
  eprinttype = {arXiv},
  url = {https://arxiv.org/abs/1805.11074}
}

@inproceedings{yangOPROLargeLanguage2024,
  title = {Large Language Models as Optimizers},
  author = {Yang, Chengrun and Wang, Xuezhi and Lu, Yifeng and Liu, Hanxiao and Le, Quoc V. and Zhou, Denny and Chen, Xinyun},
  booktitle = {The Twelfth International Conference on Learning Representations (ICLR)},
  year = {2024},
  eprint = {2309.03409},
  eprinttype = {arXiv},
  url = {https://arxiv.org/abs/2309.03409}
}

@misc{zhangAgenticContextEngineering2025,
  title = {Agentic {{Context Engineering}}: {{Evolving Contexts}} for {{Self-Improving Language Models}}},
  shorttitle = {Agentic {{Context Engineering}}},
  author = {Zhang, Qizheng and Hu, Changran and Upasani, Shubhangi and Ma, Boyuan and Hong, Fenglu and Kamanuru, Vamsidhar and Rainton, Jay and Wu, Chen and Ji, Mengmeng and Li, Hanchen and Thakker, Urmish and Zou, James and Olukotun, Kunle},
  date = {2025-10-06},
  year = {2025},
  eprint = {2510.04618},
  eprinttype = {arXiv},
  eprintclass = {cs},
  doi = {10.48550/arXiv.2510.04618},
  url = {http://arxiv.org/abs/2510.04618},
  urldate = {2025-12-30},
  pubstate = {prepublished}
}

@misc{alzubi2026evoskillautomatedskilldiscovery,
  title = {{EvoSkill}: Automated Skill Discovery for Multi-Agent Systems},
  author = {Salaheddin Alzubi and Noah Provenzano and Jaydon Bingham and Weiyuan Chen and Tu Vu},
  year = {2026},
  eprint = {2603.02766},
  archivePrefix = {arXiv},
  primaryClass = {cs.AI},
  url = {https://arxiv.org/abs/2603.02766}
}

@misc{wang2026skillgradoptimizingagentskills,
  title = {{SkillGrad}: Optimizing Agent Skills Like Gradient Descent},
  author = {Hanyu Wang and Yifan Lan and Bochuan Cao and Lu Lin and Jinghui Chen},
  year = {2026},
  eprint = {2605.27760},
  archivePrefix = {arXiv},
  primaryClass = {cs.AI},
  url = {https://arxiv.org/abs/2605.27760}
}

@misc{zhou2026agenticpolicyoptimizationinstructionpolicy,
  title = {Agentic Policy Optimization via Instruction-Policy Co-Evolution},
  author = {Han Zhou and Xingchen Wan and Ivan Vuli{\'c} and Anna Korhonen},
  year = {2026},
  eprint = {2512.01945},
  archivePrefix = {arXiv},
  primaryClass = {cs.LG},
  url = {https://arxiv.org/abs/2512.01945}
}

@misc{xia2026skillrlevolvingagentsrecursive,
  title = {{SkillRL}: Evolving Agents via Recursive Skill-Augmented Reinforcement Learning},
  author = {Peng Xia and Jianwen Chen and Hanyang Wang and Jiaqi Liu and Kaide Zeng and Yu Wang and Siwei Han and Yiyang Zhou and Xujiang Zhao and Haifeng Chen and Zeyu Zheng and Cihang Xie and Huaxiu Yao},
  year = {2026},
  eprint = {2602.08234},
  archivePrefix = {arXiv},
  primaryClass = {cs.LG},
  url = {https://arxiv.org/abs/2602.08234}
}

@misc{wang2026reinforcementlearningselfimprovingagent,
  title = {Reinforcement Learning for Self-Improving Agent with Skill Library},
  author = {Jiongxiao Wang and Qiaojing Yan and Yawei Wang and Yijun Tian and Soumya Smruti Mishra and Zhichao Xu and Megha Gandhi and Panpan Xu and Lin Lee Cheong},
  year = {2026},
  eprint = {2512.17102},
  archivePrefix = {arXiv},
  primaryClass = {cs.AI},
  url = {https://arxiv.org/abs/2512.17102}
}

@misc{he2026reskillreconcilingskillcreation,
  title = {{ReSkill}: Reconciling Skill Creation with Policy Optimization in Agentic RL},
  author = {Zelin He and Haotian Lin and Boran Han and Wei Zhu and Haoyang Fang and Bernie Wang and Xuan Zhu and Runze Li and Matthew Reimherr},
  year = {2026},
  eprint = {2606.01619},
  archivePrefix = {arXiv},
  primaryClass = {cs.AI},
  url = {https://arxiv.org/abs/2606.01619}
}

@article{zouUniversalTransferableAdversarial2023,
  title = {Universal and {{Transferable Adversarial Attacks}} on {{Aligned Language Models}}},
  author = {Zou, Andy and Wang, Zifan and Carlini, Nicholas and Nasr, Milad and Kolter, J. Zico and Fredrikson, Matt},
  year = {2023},
  eprint = {2307.15043},
  eprinttype = {arXiv},
  eprintclass = {cs},
  doi = {10.48550/arXiv.2307.15043},
  url = {http://arxiv.org/abs/2307.15043}
}

@article{Barres_Dong_Ray_Si_Narasimhan_2025,
  title = {$\tau^2$-Bench: Evaluating Conversational Agents in a Dual-Control Environment},
  author = {Barres, Victor and Dong, Honghua and Ray, Soham and Si, Xujie and Narasimhan, Karthik},
  year = {2025},
  eprint = {2506.07982},
  eprinttype = {arXiv},
  url = {http://arxiv.org/abs/2506.07982}
}

@article{cobbeGSM8K2021,
  title = {Training Verifiers to Solve Math Word Problems},
  author = {Cobbe, Karl and Kosaraju, Vineet and Bavarian, Mohammad and Chen, Mark and Jun, Heewoo and Kaiser, Lukasz and Plappert, Matthias and Tworek, Jerry and Hilton, Jacob and Nakano, Reiichiro and Hesse, Christopher and Schulman, John},
  year = {2021},
  eprint = {2110.14168},
  eprinttype = {arXiv},
  url = {https://arxiv.org/abs/2110.14168}
}

@inproceedings{resendiz2025mopo,
  title={MOPO: Multi-objective prompt optimization for affective text generation},
  author={Resendiz, Yarik Menchaca and Klinger, Roman},
  booktitle={Proceedings of the 31st International Conference on Computational Linguistics},
  pages={5588--5606},
  year={2025}
}

@inproceedings{
    jimenez2024swebench,
    title={{SWE}-bench: Can Language Models Resolve Real-world Github Issues?},
    author={Carlos E Jimenez and John Yang and Alexander Wettig and Shunyu Yao and Kexin Pei and Ofir Press and Karthik R Narasimhan},
    booktitle={The Twelfth International Conference on Learning Representations},
    year={2024},
    url={https://openreview.net/forum?id=VTF8yNQM66}
}

@article{liu2026ministral,
  title = {Ministral 3},
  author = {Liu, Alexander H. and Khandelwal, Kartik and Subramanian, Sandeep and Jouault, Victor and Rastogi, Abhinav and Sad{\'e}, Adrien and Jeffares, Alan and Jiang, Albert and Cahill, Alexandre and Gavaudan, Alexandre and others},
  journal = {arXiv preprint arXiv:2601.08584},
  year = {2026},
  eprint = {2601.08584},
  eprinttype = {arXiv},
  doi = {10.48550/arXiv.2601.08584},
  url = {https://arxiv.org/abs/2601.08584}
}

@misc{qwen3technicalreport,
      title={Qwen3 Technical Report},
      author={Yang, An and Li, Anfeng and Yang, Baosong and others},
      year={2025},
      eprint={2505.09388},
      archivePrefix={arXiv},
      primaryClass={cs.CL},
      url={https://arxiv.org/abs/2505.09388},
}

@inproceedings{guoConnectingLargeLanguage2023,
  title     = {EvoPrompt: Connecting Large Language Models with Evolutionary Algorithms Yields Powerful Prompt Optimizers},
  author    = {Guo, Qingyan and Wang, Rui and Guo, Junliang and Li, Bei and Song, Kaitao and Tan, Xu and Liu, Guoqing and Bian, Jiang and Yang, Yujiu},
  booktitle = {The Twelfth International Conference on Learning Representations (ICLR)},
  year      = {2024},
  eprint    = {2309.08532},
  archivePrefix = {arXiv},
  primaryClass  = {cs.CL},
  url       = {https://arxiv.org/abs/2309.08532}
}

@inproceedings{jangCategoricalReparameterizationGumbel2017,
  title     = {Categorical Reparameterization with Gumbel-Softmax},
  author    = {Jang, Eric and Gu, Shixiang and Poole, Ben},
  booktitle = {International Conference on Learning Representations (ICLR)},
  year      = {2017},
  eprint    = {1611.01144},
  archivePrefix = {arXiv},
  primaryClass  = {stat.ML},
  url       = {https://arxiv.org/abs/1611.01144}
}

@inproceedings{maddisonConcreteDistributionContinuous2017,
  title     = {The Concrete Distribution: A Continuous Relaxation of Discrete Random Variables},
  author    = {Maddison, Chris J. and Mnih, Andriy and Teh, Yee Whye},
  booktitle = {International Conference on Learning Representations (ICLR)},
  year      = {2017},
  eprint    = {1611.00712},
  archivePrefix = {arXiv},
  primaryClass  = {stat.ML},
  url       = {https://arxiv.org/abs/1611.00712}
}

@book{kushnerYinStochasticApproximation2003,
  title     = {Stochastic Approximation and Recursive Algorithms and Applications},
  author    = {Kushner, Harold J. and Yin, G. George},
  edition   = {2nd},
  publisher = {Springer},
  series    = {Stochastic Modelling and Applied Probability},
  year      = {2003}
}

@article{nedicOzdaglarApproximatePrimal2009,
  title     = {Approximate Primal Solutions and Rate Analysis for Dual Subgradient Methods},
  volume    = {19},
  issn      = {1095-7189},
  url       = {http://dx.doi.org/10.1137/070708111},
  doi       = {10.1137/070708111},
  number    = {4},
  journal   = {SIAM Journal on Optimization},
  publisher = {Society for Industrial \& Applied Mathematics (SIAM)},
  author    = {Nedi{\'c}, Angelia and Ozdaglar, Asuman},
  year      = {2009},
  month     = jan,
  pages     = {1757--1780}
}

@misc{shao2024deepseekmathpushinglimitsmathematical,
  title         = {DeepSeekMath: Pushing the Limits of Mathematical Reasoning in Open Language Models},
  author        = {Zhihong Shao and Peiyi Wang and Qihao Zhu and Runxin Xu and Junxiao Song and Xiao Bi and Haowei Zhang and Mingchuan Zhang and Y. K. Li and Y. Wu and Daya Guo},
  year          = {2024},
  eprint        = {2402.03300},
  archivePrefix = {arXiv},
  primaryClass  = {cs.CL},
  doi           = {10.48550/arXiv.2402.03300},
  url           = {https://arxiv.org/abs/2402.03300}
}

@misc{fu2025areal,
  title         = {{AReaL}: A Large-Scale Asynchronous Reinforcement Learning System for Language Reasoning},
  author        = {Wei Fu and Jiaxuan Gao and Xujie Shen and Chen Zhu and Zhiyu Mei and Chuyi He and Shusheng Xu and Guo Wei and Jun Mei and Jiashu Wang and Tongkai Yang and Binhang Yuan and Yi Wu},
  year          = {2025},
  eprint        = {2505.24298},
  archivePrefix = {arXiv},
  primaryClass  = {cs.LG},
  url           = {https://arxiv.org/abs/2505.24298}
}
